\renewcommand\footnotetextcopyrightpermission[1]{} 
\newtheorem{definition}{Definition}
\newtheorem{theorem}{Theorem}
\newtheorem{corollary}{Corollary}
\newtheorem{proposition}{Proposition}
\newtheorem{lemma}{Lemma}
\newtheorem{fact}{Fact}
\DeclareMathOperator*{\argmin}{arg\,min}
\begin{document}

\title{Socially Fair \texorpdfstring{$k$}{k}-Means Clustering}

\author{Mehrdad Ghadiri}
\affiliation{%
  \institution{Georgia Tech}}
\email{ghadiri@gatech.edu}

\author{Samira Samadi}
\affiliation{%
  \institution{MPI for Intelligent Systems}
}
\email{ssamadi@tuebingen.mpg.de}

\author{Santosh Vempala}
\affiliation{%
\institution{Georgia Tech}}
\email{vempala@gatech.edu}

\allowdisplaybreaks

\begin{abstract}
We show that the popular $k$-means clustering algorithm (Lloyd's heuristic), used for a variety of scientific data, can result in outcomes that are unfavorable to subgroups of data (e.g., demographic groups). Such biased clusterings can have deleterious implications for human-centric applications such as resource allocation. We present a fair $k$-means objective and algorithm to choose cluster centers that provide equitable costs for different groups. The algorithm,  {\it Fair-Lloyd}, is a modification of Lloyd's heuristic for $k$-means, inheriting its simplicity, efficiency, and stability. In comparison with standard Lloyd's, we find that on benchmark datasets, Fair-Lloyd exhibits unbiased performance by ensuring that all groups have equal costs in the output $k$-clustering, while incurring a negligible increase in running time, thus making it a viable fair option wherever $k$-means is~currently~used.  
\end{abstract}

\maketitle
\thispagestyle{empty}

\section{Introduction}
\label{sec:intro}

Clustering, or partitioning data into dissimilar groups of similar items, is a core technique for data analysis. Perhaps the most widely used clustering algorithm is Lloyd's $k$-means heuristic \citep{steinhaus1956division,lloyd1982least,macqueen1967some}. 

Lloyd's algorithm starts with a random set of $k$ points (``centers'') and repeats the following two-step procedure:
(a) assign each data point to its nearest center; this partitions the data into $k$ disjoint groups (``clusters''); (b) for each cluster, set the new center to be the average of all its points. Due to its simplicity and generality, the $k$-means heuristic is widely used across the sciences, with applications spanning genetics \citep{krishna1999genetic}, image segmentation \citep{ray1999determination}, grouping search results and news aggregation \citep{sculley2010web},  crime-hot-spot detection \citep{grubesic2006application}, crime pattern analysis \citep{nath2006crime}, profiling road accident hot spots \citep{anderson2009kernel}, and market segmentation \citep{balakrishnan1996comparative}.

Lloyd's algorithm is a heuristic to minimize the $k$-means objective: choose $k$ centers such that the average squared distance of a point to its closest center is minimized. Note that, these $k$ centers automatically define a clustering of the data simply by assigning each point to its closest center. To better describe the $k$-means objective and the Lloyd's algorithm in the context of human-centric applications, let us consider two applications. 
In crime mapping and crime pattern analysis, law enforcement would run Lloyd's algorithm to partition areas of crime. This partitioning is then used as a guideline for allocating patrol services 
to each area (cluster). Such an assignment reduces the average 
response time of patrol units to crime incidents. A second application is market segmentation, where a pool of customers is partitioned using Lloyd's algorithm, and for each cluster, based on the customer profile of the center of that cluster, a certain set of services or advertisements is assigned to the customers in that cluster. 

In such human-centric applications, using the $k$-means algorithm in its original form, can
result in unfavorable and even harmful outcomes towards some demographic groups in the data. To illustrate bias, consider the Adult dataset from the UCI repository \citep{ucirepo}. This dataset consists of census information of individuals, including some sensitive attributes such as whether the individuals self identified as male or female. Lloyd's algorithm can be executed on this dataset to detect communities and eventually summarize communities with their centers.

Figure~\ref{fig:adultLloyd} shows the average $k$-means clustering cost for the Adult dataset \citep{ucirepo} for males vs females. The standard Lloyd's algorithm results in a clustering which incurs up to $15\%$ higher cost for females compared to males. Figure~\ref{fig:adultLloydRace} shows that this bias is even more noticeable among the five different racial groups in this dataset.
The average cost for an Asian-Pac-Islander individual is up to $4$ times worse than an average cost for a white individual. A similar bias can be observed in the Credit dataset \citep{yeh2009comparisons} between lower-educated and higher-educated individuals (Figure~\ref{fig:creditLloyd}).

\begin{figure*}[h!]
\centering
    \begin{subfigure}{.31\textwidth}
    \centering
  \includegraphics[width=\linewidth]{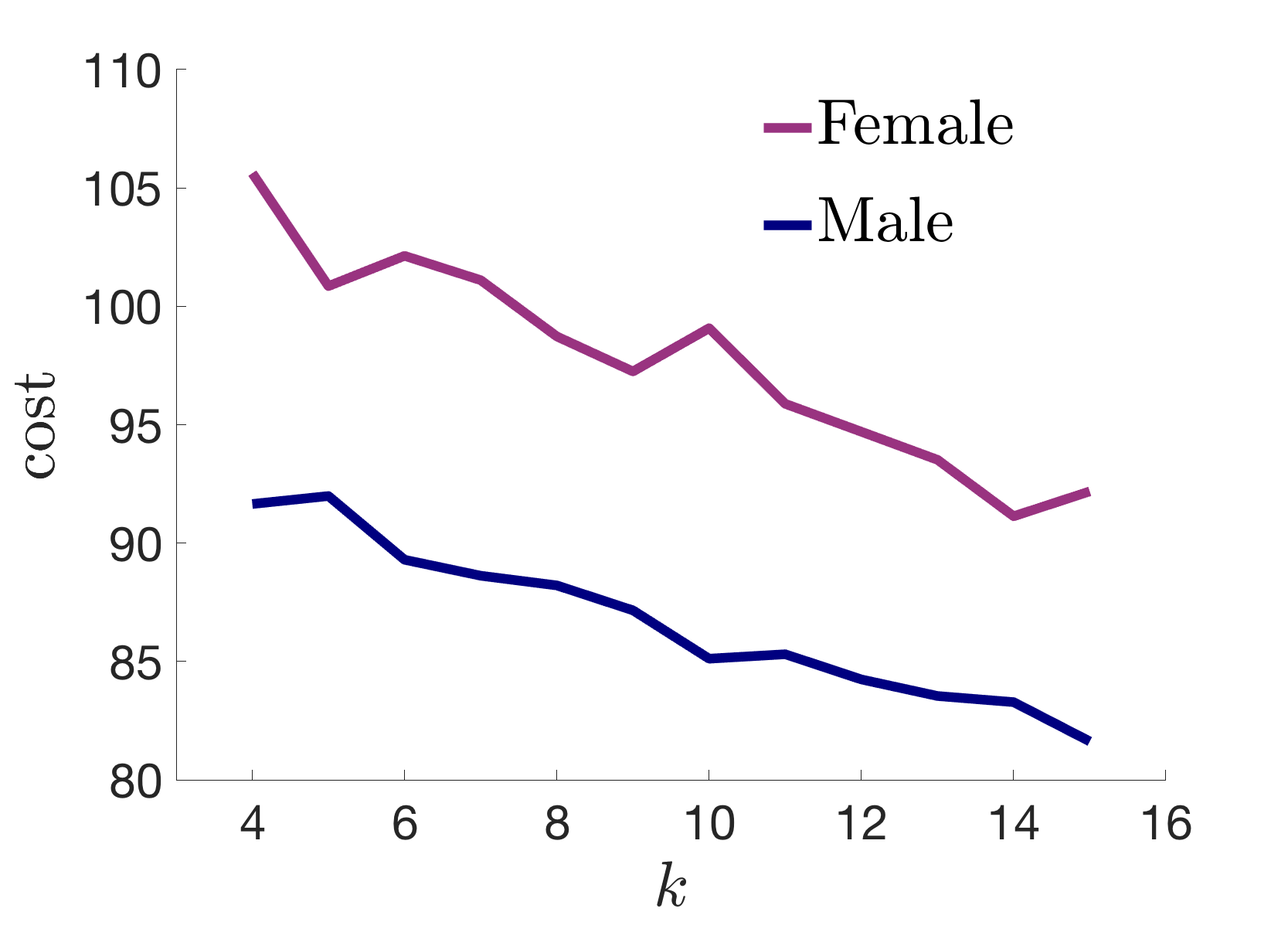}
  \subcaption{Adult census dataset}
  \label{fig:adultLloyd}
  \end{subfigure}
\begin{subfigure}{.31\textwidth}
    \centering
    \includegraphics[width=\linewidth]{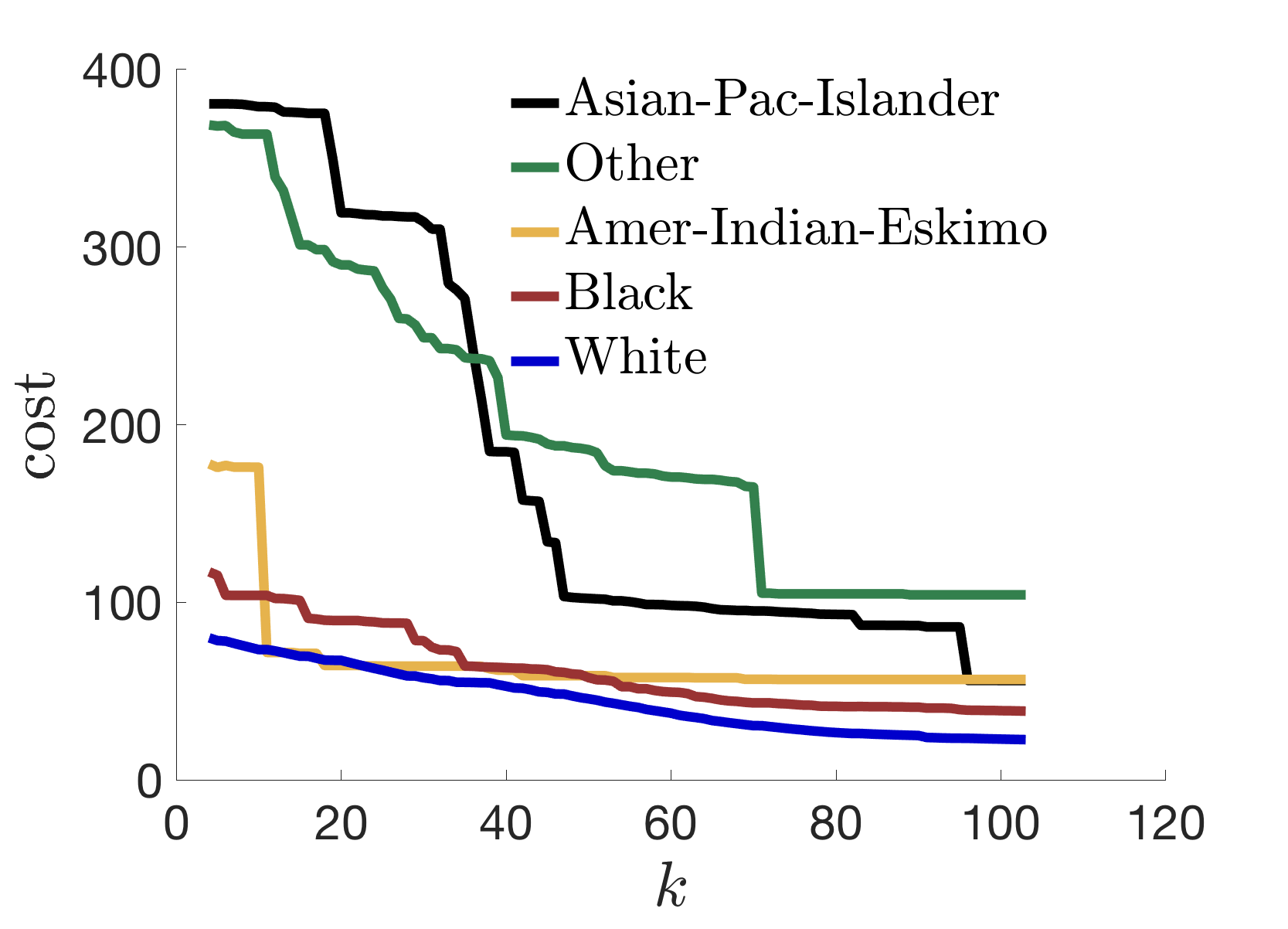}
    \subcaption{Adult census dataset}
    \label{fig:adultLloydRace}
\end{subfigure}
\begin{subfigure}{.31\textwidth}
    \centering
    \includegraphics[width=\linewidth]{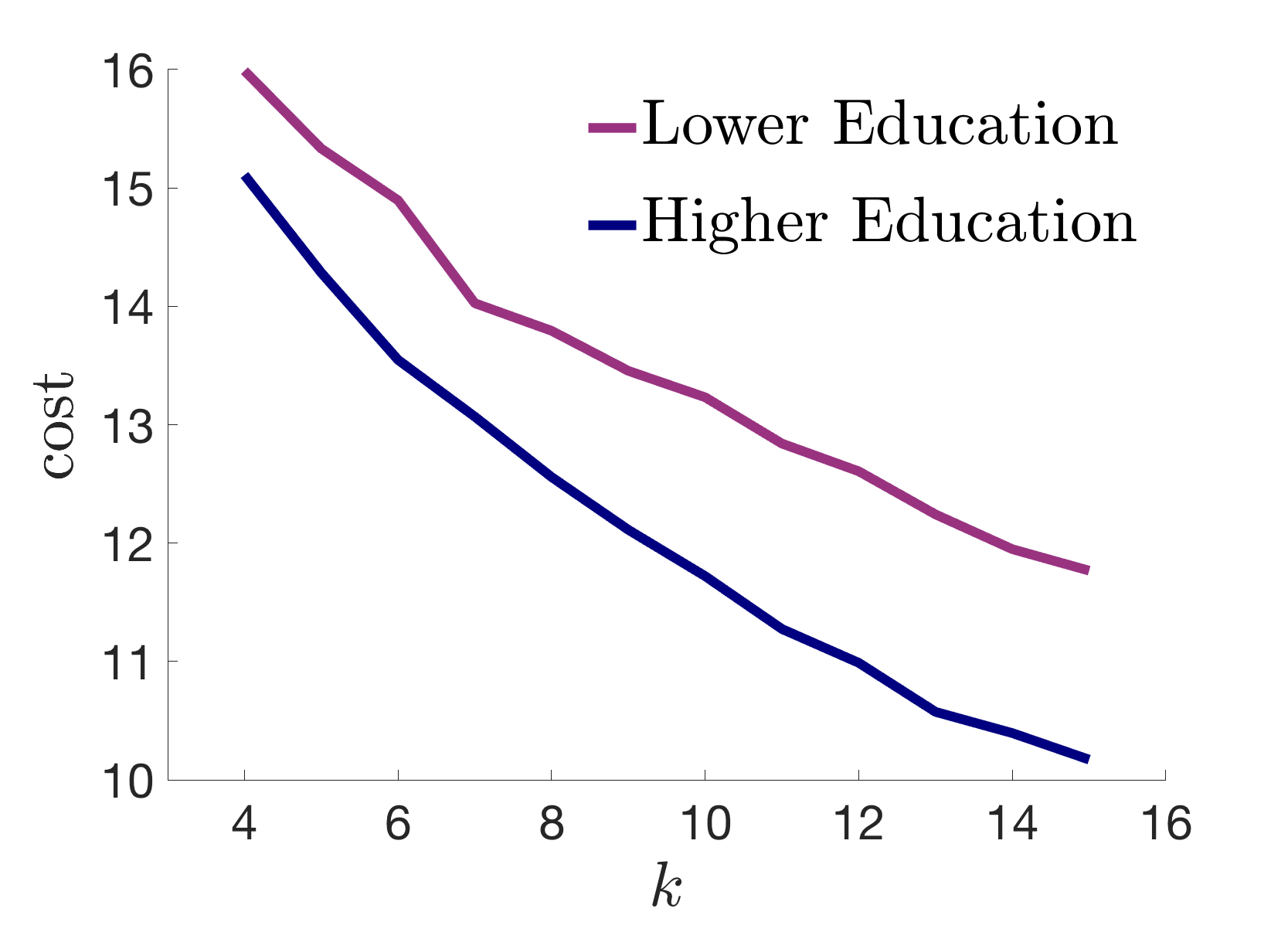}  
    \subcaption{Credit dataset}
    \label{fig:creditLloyd}
\end{subfigure}
    \caption{The standard Lloyd's algorithm results in a significant gap in the average clustering costs of different subgroups of the data.}
    \label{fig:motivation}
\end{figure*}

In this paper, we address the critical goal of {\em fair} clustering, i.e., a clustering whose cost is more equitable for different groups. This is, of course, an important and natural goal, and there has been substantial work on fair clustering, including for the $k$-means objective. Prior work has focused almost exclusively 
on {\em proportionality}, i.e., ensuring that sensitive attributes are distributed proportionally in each cluster~\cite{chierichetti2017fair,schmidt2019fair,huang2008labeled,backurs2019scalable,bera2019fair}.
In many application scenarios, including the ones illustrated above, one can view each setting of a sensitive attribute as defining a subgroup (e.g., gender or race), and the critical objective is the {\em cost} of the clustering for each subgroup: are one or more groups incurring a significantly higher average cost?  

In light of this consideration, we consider a different objective. 
Rather than minimizing the average clustering cost over the entire dataset, the objective of {\em socially fair} $k$-means is to find a $k$-clustering that minimizes the maximum of the average clustering cost across different (demographic) groups, i.e., minimizes the maximum of the average $k$-means objective applied to each group. \begin{center}
{\em    Can social fairness be achieved efficiently, while preserving the simplicity and generality of standard $k$-means algorithm?}
\end{center}
Applying existing algorithms for fair clustering with proportionality constraints leads to poor solutions for social fairness (see Figure ~\ref{fig:creditFairletComparison} for comparison on standard datasets), so we need a different solution. Our objective is similar to the recent line of work on minmax fairness through multi-criteria optimization \cite{samadi2018price,tantipongpipat2019multi,martinezminimax}.

\subsection{Our results}
We answer the above question affirmatively, with an algorithm we call {\em Fair-Lloyd}. Similar to Lloyd's algorithm, it is a two-step iteration with the only difference being how the centers are updated: (a) assign each data point to its nearest center to form clusters (b) choose $k$ new {\it fair} centers such that the maximum average clustering cost across different demographic groups is minimized. This step is particularly easy for $k$-means --- average the points in each cluster.
We prove that, the fair centers can also be computed efficiently: using
a simple one-dimensional line search when the data consists of two (demographic) groups, and using standard convex optimization algorithms when the data consists of more than two groups. Furthermore, when the data consists of two groups, the convergence of our algorithm is independent of the original dimension of the data and the number of clusters.

We prove convergence, stability and approximability guarantees and apply our method to multiple real-world clustering tasks. The results show clearly that Fair-Lloyd generates a clustering of the data with {\em equal} average clustering cost for individuals in different demographic groups. 
Moreover, its computational cost remains comparable to Lloyd's method. Each iteration, to find the next set of centers, is a convex optimization problem and can be implemented efficiently using Gradient Descent. For two groups, we give a line-search method which is significantly faster. This extends to a fast heuristic for $m>2$ groups whose distance to optimality can be tracked. This approach might be of independent interest as a very efficient heuristic for similar optimization problems. 

Due to the simplicity and efficiency of the Fair-Lloyd algorithm, we suggest it as an alternative to the standard Lloyd's algorithm in human-centric and other subgroup-sensitive applications where social fairness is a priority. 

\subsection{Fair k-means: Objective and Algorithm} 
To introduce the fair $k$-means objective, we define a more general notion: the $k$-means cost of a set of points $U$ with respect to a set of centers $C=\{c_1,\ldots,c_k\}$ and a partition $\mathcal{U}=\{U_1,\ldots,U_k\}$ of $U$ is
\vspace{-1mm}
\[
\Delta(C,\mathcal{U}):=\sum_{i=1}^k \sum_{p\in U_i} ||p-c_i||^2
\]
For a set of centers $C$, let $\mathcal{U}_C$ be a partition of $U$ such that if $p\in U_i$ then $\| p-c_i\|=\min_{1\leq j\leq k} \|p-c_j\|$. Then the standard $k$-means objective is 
\[
\min_{C=\{c_1,\ldots, c_k\}} \Delta(C,\mathcal{U}_C)
\]
\vspace{-1mm}
i.e., to find a set of $k$ centers $C=\{c_1,\ldots,c_k\}$ that minimizes $\Delta(C,\mathcal{U}_C)$.

For an illustrative example of the potential bias for different subgroups of data, see Figure~\ref{fig:motivation2} left. The two centers selected by minimizing the $k$-means objective are both close to one subgroup, and therefore the other subgroup has higher average cost. Note that the notion of fairness based on proportionality also prefers this clustering which impose a higher average cost on the purple subgroup. To introduce our fair $k$-means objective and algorithm, in this section we focus on the case of two (demographic) groups. In Section~\ref{sec:generalization}, we discuss how to generalize our framework to more than two groups.

\begin{figure}[h!]
\centering
    \begin{subfigure}{.49\columnwidth}
    \centering
    \caption*{$k$-means}
   \includegraphics[width=1.1\linewidth]{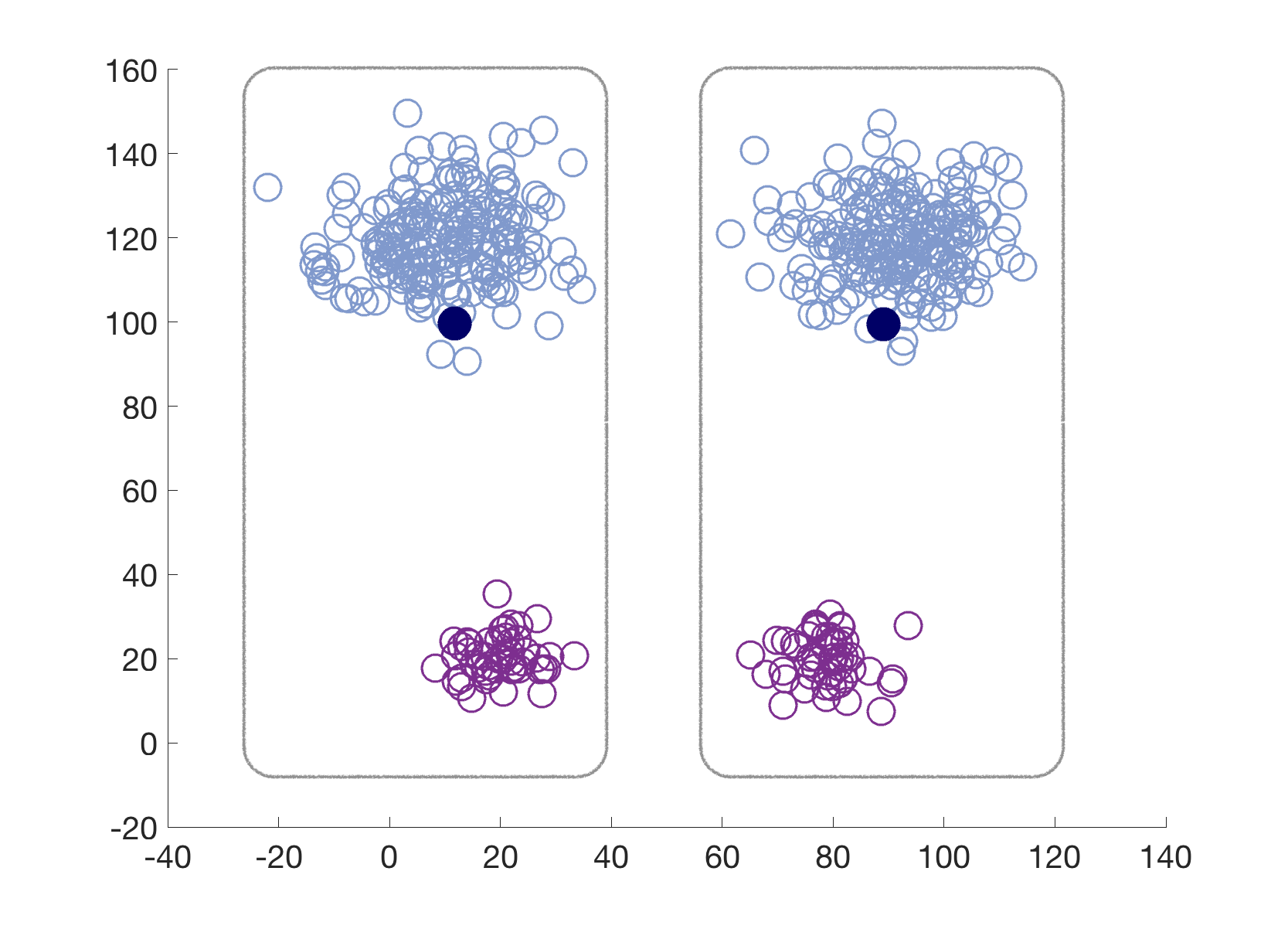}
   \label{subfig:kmeansexample}
   \end{subfigure}
    \begin{subfigure}{.49\columnwidth}
    \centering
    \caption*{Fair $k$-means}
    \includegraphics[width=1.1\linewidth]{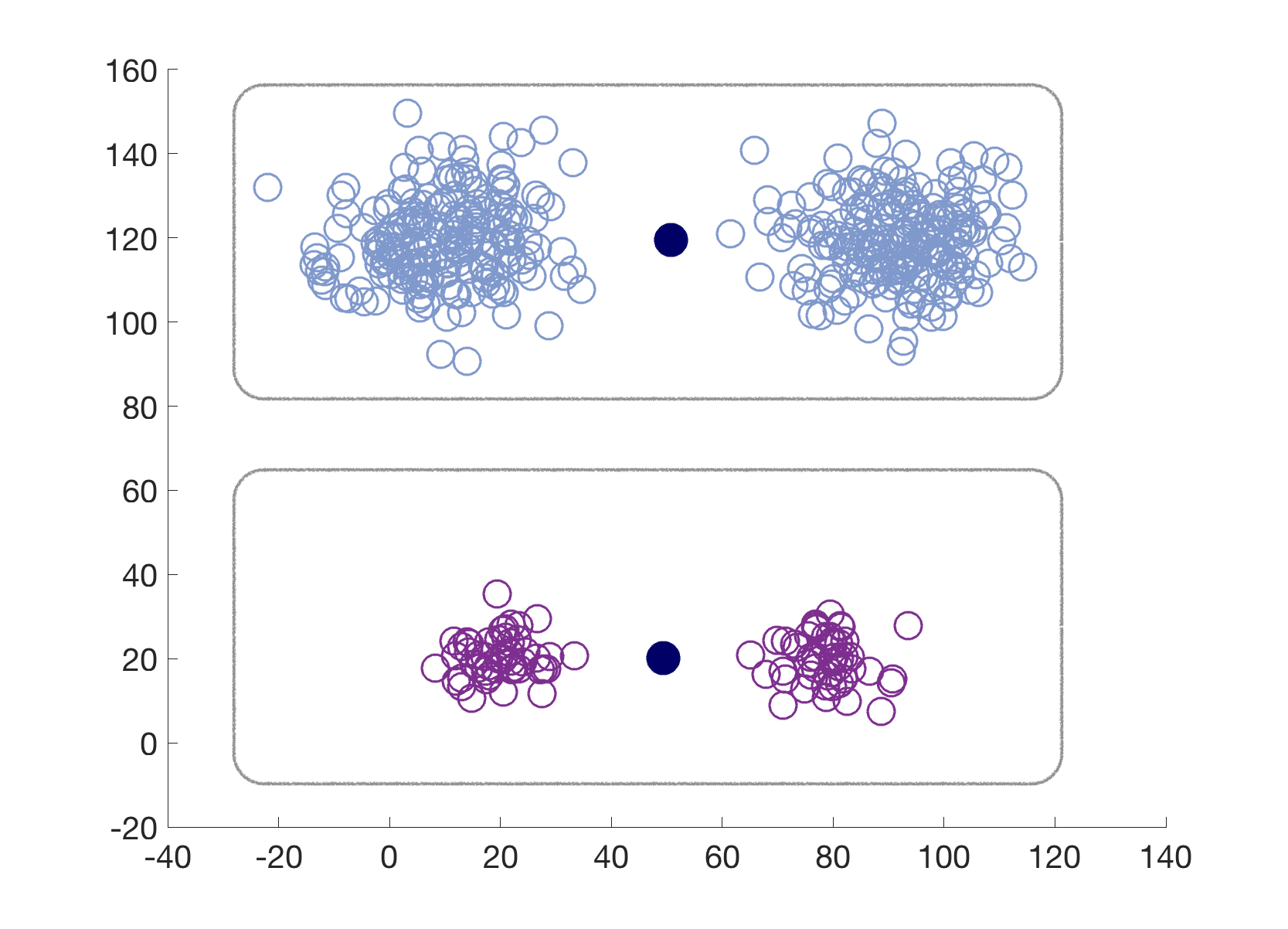} 
    \label{subfig:kmeansexamplefair}
\end{subfigure}
    
    \caption{
    Two demographic groups are shown with blue and purple. The $2$-means objective minimizing the average clustering cost
    prefers the clustering (and  centers) shown in the left figure. This clustering incurs a much higher average clustering cost for purple than for blue. The clustering in the right figure 
    has more equitable clustering cost for the two groups.}
    \label{fig:motivation2}
\end{figure}

The fair $k$-means objective for two groups $A,B$ such that $U = A\cup B$ is the larger average cost:
\[
\Phi(C, \mathcal{U}) := \max\{\frac{\Delta(C, \mathcal{U}\cap A)}{|A|}, \frac{\Delta(C,\mathcal{U}\cap B)}{|B|}\},
\]
where $\mathcal{U}\cap A=\{U_1\cap A,\ldots, U_k\cap A\}$.
The goal of fair $k$-means is to minimize $\Phi(C, \mathcal{U}_C)$, so as to minimize the \emph{higher} average cost. 
As illustrated in Figure~\ref{fig:motivation2} right, minimizing this objective results in a set of centers with equal average cost to individuals of different groups. In fact, as we will soon see, the solution to this problem equalizes the average cost of both 
groups
in most cases. Next we present the fair $k$-means algorithm (or Fair-Lloyd) in Algorithm~\ref{alg:fairlloyd}

\RestyleAlgo{algoruled}
\begin{algorithm}[ht]
\footnotesize
\textbf{Input:} A set of points $U=A\cup B$, and $k\in \mathbb{N}$
 	
 	Initialize the set of centers $C=\{c_1,\ldots,c_k\}$.
 	
 	\Repeat{convergence}{
 	1. Assign each point to its nearest center in $C$ to form a partition $\mathcal{U}=\{U_1,\ldots,U_k\}$ of $U$.
 	
 	2. Pick a set of centers $C$ that minimizes $\Phi(C,\mathcal{U})$. 
 	$$C \leftarrow \text{ Line Search}(U,\mathcal{U})$$
 	}
 	
 	\Return $C=\{c_1, \ldots, c_k\}$
\caption{Fair-Lloyd}
\label{alg:fairlloyd}
\end{algorithm}

The second step uses a minimization procedure to assign centers fairly to a given partition of the data. While this can be done via a gradient descent algorithm, 
we show in Section~\ref{sec:implementation} that it can be solved very efficiently using a simple line search procedure (see Algorithm~\ref{alg:binarysearch}) due to the structure of fair centers (Section~\ref{subsec:faircenters}). 
In Section~\ref{sec:miscellaneous}, we discuss some other properties of the fair $k$-means and Fair-Lloyd (Algorithm~\ref{alg:fairlloyd}). More specifically, we discuss the stability of the solution found by Fair-Lloyd, the convergence of Fair-Lloyd, and approximation algorithms that can be used for fair $k$-means
(e.g., to initialize the centers). In summary, our fair version of the $k$-means inherits its attractive properties while making the objective and outcome more equitable to subgroups of data.

\subsection{Related Work}
\label{sec:relatedWork}

\textbf{$k$-means objective and Lloyd's algorithm.} The $k$-means objective is NP-hard to optimize \citep{aloise2009np} and even NP-hard to approximate within a factor of $(1+\epsilon)$ \citep{awasthi2015hardness}. The best known approximation algorithm for the $k$-means problem finds a solution within a factor $\rho+\epsilon$ of optimal, where $\rho\approx 6.357$~\citep{AhmadianNSW17}. 
The running time of Lloyd's algorithm can  be exponential 
even on the plane \citep{vattani2011k}. 

As for the quality of the solution found, Lloyd's heuristic converges to a local optimum \citep{selim1984k}, with no worst-case guarantees possible \citep{kanungo2002local}. It has been shown that under certain assumptions on the existence of a sufficiently good clustering, this heuristic recovers a ground truth clustering and achieves a near-optimal solution to the $k$-means objective function~\citep{ostrovsky2013effectiveness,kumar2010clustering,awasthi2012improved}. For all the difficulties with the analysis, and although many other techniques has been proposed over the years, Lloyd's algorithm is still the most widely used clustering algorithm in practice \citep{jain2010data}.

\textbf{Fairness.} During the past years, machine learning has seen a huge body of work on fairness. Many formulations of fairness have been proposed for supervised learning and specifically for classification tasks \citep{dwork2012fairness,hardt2016equality,kleinberg2016inherent,zafar2015fairness}. The study of the implications of bias in unsupervised learning started more recently~\citep{chierichetti2017fair,celis2017ranking,celis2018fair,samadi2018price,schmidt2018fair,kleindessner19b}. We refer the reader to \cite{barocas-hardt-narayanan} for a summary of proposed definitions and algorithmic advances. 

Majority of the literature on fair clustering have focused on the proportionality/balance of the demographical representation inside the clusters \citep{chierichetti2017fair} --- a notion much in the nature of the widely known {\it disparate impact} doctrine. Proportionality of demographical representation has initially been studied for the $k$-center and $k$-median problems when the data comprises of two demographic groups \citep{chierichetti2017fair}, and later on for the $k$-means problem and for multiple demographic groups \cite{schmidt2019fair,huang2019coresets,bera2019fair,backurs2019scalable}. Among other notions of fairness in clustering, one could mention proportionality of demographical representation in the set of cluster centers \citep{kleindessner19b} or in large subsets of a cluster \citep{chen2019proportionally}. 

Our proposed notion of a fair clustering is different and comes from a broader viewpoint on fairness, aiming to enforce any objective-based optimization task to output a solution with {\it equitable objective value} for different demographic groups.
Such an objective-based fairness notion across subgroups could be defined subjectively e.g., by equalizing misclassification rate in classification tasks \citep{dieterich2016compas} or by minimizing the maximum error in dimensionality reduction or classification \citep{samadi2018price,tantipongpipat2019multi,martinezminimax}. We~define~a~{\it socially fair clustering} as the one that minimizes the maximum average clustering cost over different demographic groups. To the best of our knowledge, our work is the first to study fairness in clustering from this viewpoint.

\section{An Efficient Implementation of Fair \texorpdfstring{$k$}{k}-Means}
\label{sec:implementation}

The Fair-Lloyd algorithm (Algorithm~\ref{alg:fairlloyd}) is a two-step iteration, where the second step is to find a fair set of centers with respect to a partition. A set of centers $C^*$ is fair with respect to a partition  $\mathcal{U}$ if $C^*=\argmin_C \Phi(C,\mathcal{U})$.
In this section, we show that a simple line search algorithm can be used to find $C^*$ efficiently. 

\subsection{Structure of Fair Centers}
\label{subsec:faircenters}
 We start by illustrating some properties of fair centers. A partition of the data induces a partition of each of the two 
groups, and hence a set of means for each 
group
. Formally, for a set of points $U=A\cup B$ and a partition $\mathcal{U}=\{U_1,\ldots,U_k\}$ of $U$, let $\mu_i^A$ and $\mu_i^B$ be the mean of $A\cap U_i$ and $B\cap U_i$ respectively for $i\in [k]$. 
Our first observation is that the fair center of each cluster must be on the line segment between the means of the groups induced in the cluster.
\begin{lemma}
\label{lem:lineCenters}
Let $U=A\cup B$ and $ \mathcal{U} = (U_1, \ldots, U_k)$ be a partition of $U$.  
Let $C=(c_1, \ldots, c_k)$ be a fair set of centers with respect to $\mathcal{U}$. Then $c_i$ is on the line segment connecting $\mu_i^A$ and $\mu_i^B$. 
\end{lemma}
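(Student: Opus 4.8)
The plan is to argue by contradiction, combining the standard parallel-axis (bias--variance) decomposition of the $k$-means cost with a short geometric fact about projection onto a segment. Throughout I treat the generic case in which $A\cap U_i\neq\emptyset$ and $B\cap U_i\neq\emptyset$ for every $i$, which is exactly the situation in which $\mu_i^A,\mu_i^B$ are defined.

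First I would record, for any cluster $i$, any group $X\in\{A,B\}$, and any point $c$, the identity
\[
\sum_{p\in X\cap U_i}\|p-c\|^2=\sum_{p\in X\cap U_i}\|p-\mu_i^X\|^2+|X\cap U_i|\cdot\|c-\mu_i^X\|^2 .
\]
Summing over $i$ with $c=c_i$ gives $\Delta(C,\mathcal{U}\cap A)=\alpha_0+\sum_i |A\cap U_i|\,\|c_i-\mu_i^A\|^2$ and $\Delta(C,\mathcal{U}\cap B)=\beta_0+\sum_i |B\cap U_i|\,\|c_i-\mu_i^B\|^2$, where $\alpha_0,\beta_0$ do not depend on $C$. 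Hence, fixing all centers except $c_i$, both $\Delta(C,\mathcal{U}\cap A)$ and $\Delta(C,\mathcal{U}\cap B)$ — and therefore $\Phi(C,\mathcal{U})=\max\{\Delta(C,\mathcal{U}\cap A)/|A|,\Delta(C,\mathcal{U}\cap B)/|B|\}$ — depend on $c_i$ only through the pair $(\|c_i-\mu_i^A\|,\|c_i-\mu_i^B\|)$, and each is \emph{strictly} increasing in $\|c_i-\mu_i^A\|$ and in $\|c_i-\mu_i^B\|$ since $|A\cap U_i|,|B\cap U_i|>0$.

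The key geometric step: if $c_i\notin[\mu_i^A,\mu_i^B]$, let $c_i'$ be the metric projection of $c_i$ onto that segment; I claim $\|c_i'-\mu_i^A\|<\|c_i-\mu_i^A\|$ and $\|c_i'-\mu_i^B\|<\|c_i-\mu_i^B\|$. Let $q$ be the foot of the perpendicular from $c_i$ to the line through $\mu_i^A,\mu_i^B$. If $q$ lies in the segment then $c_i'=q$ and, for both $X$, $\|c_i-\mu_i^X\|^2=\|c_i-c_i'\|^2+\|c_i'-\mu_i^X\|^2>\|c_i'-\mu_i^X\|^2$ because $c_i\neq c_i'$ (as $c_i'$ lies on the segment while $c_i$ does not). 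If instead $q$ lies strictly beyond an endpoint, say $\mu_i^A$, then $c_i'=\mu_i^A$, so $\|c_i'-\mu_i^A\|=0<\|c_i-\mu_i^A\|$, and $\|c_i-\mu_i^B\|\ge\|q-\mu_i^B\|=\|q-\mu_i^A\|+\|\mu_i^A-\mu_i^B\|>\|\mu_i^A-\mu_i^B\|=\|c_i'-\mu_i^B\|$, the last inequality using $\|q-\mu_i^A\|>0$; the case of $q$ beyond $\mu_i^B$ is symmetric.

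Finally I would assemble the pieces. Suppose $C$ is fair with respect to $\mathcal{U}$ but some $c_i\notin[\mu_i^A,\mu_i^B]$. Form $C'$ from $C$ by replacing $c_i$ with $c_i'$. By the geometric step both coordinates $\|c_i'-\mu_i^A\|$ and $\|c_i'-\mu_i^B\|$ strictly decrease, so by the decomposition $\Delta(C',\mathcal{U}\cap A)<\Delta(C,\mathcal{U}\cap A)$ and $\Delta(C',\mathcal{U}\cap B)<\Delta(C,\mathcal{U}\cap B)$, whence $\Phi(C',\mathcal{U})<\Phi(C,\mathcal{U})$, contradicting $C=\argmin_C\Phi(C,\mathcal{U})$. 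The only genuinely delicate point is the geometric claim in the third paragraph — in particular handling the ``foot outside the segment'' configuration and making sure both inequalities are \emph{strict}, since strictness is what converts ``$C'$ is no worse'' into the required contradiction; the rest is bookkeeping with the parallel-axis identity.
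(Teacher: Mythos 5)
Your proof is correct and follows essentially the same route as the paper's: contradiction via the parallel-axis decomposition of each group's cost plus the observation that projecting $c_i$ onto the segment $[\mu_i^A,\mu_i^B]$ strictly decreases both $\|c_i-\mu_i^A\|$ and $\|c_i-\mu_i^B\|$. The only difference is presentational --- you verify the projection inequality by explicit case analysis on where the foot of the perpendicular lands, whereas the paper invokes the general Pythagorean inequality for projection onto a convex set in one step.
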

\vspace{-2mm}
\begin{proof}
For the sake of contradiction, assume that there exists an $i\in[k]$ such that $c_i$ is not on the line segment connecting $\mu_i^A$ and $\mu_i^B$. Note that (see \cite{kanungo2002local} for a proof of the following equation)

\begin{align*}
\sum_{p \in A\cap U_i} \|p-c_i\|^2 =  \sum_{p \in A\cap U_i} \|p-\mu_i^A\|^2 + |A\cap U_i| \|\mu_i^A-c_i\|^2 \\
\sum_{p \in B\cap U_i} \|p-c_i\|^2 =  \sum_{p \in B\cap U_i} \|p-\mu_i^B\|^2 + |B \cap U_i| \|\mu_i^B-c_i \|^2 
\end{align*}

Let $c_i'$ be the projection of $c_i$ to the line segment connecting $\mu_i^A$ and $\mu_i^B$. Then by Pythagorean theorem for convex sets, we have 
\begin{align*}
\|\mu_i^A - c_i\|^2 & \geq \|\mu_i^A - c_i'\|^2 + \|c_i'-c_i\|^2 \\
\|\mu_i^B - c_i\|^2 & \geq \|\mu_i^B - c'_i\|^2 + \|c_i'-c_i\|^2
\end{align*}

Therefore since $\| c_i'-c_i\|^2>0$, we have $\|\mu_i^A - c_i'\| < \|\mu_i^A - c_i\|$ and $\|\mu_i^B - c_i'\| < \|\mu_i^B - c_i\|$. Thus, replacing $c_i$ with $c'_i$ decreases the fair k-means objective. 
\end{proof}

The above lemma implies that, in order to find a fair set of centers, we only need to search the intervals $[\mu_i^A,\mu_i^B]$. 
Therefore we can find a fair set of centers by solving a convex program. The following definition will be convenient.

\begin{definition}
\label{def:notation}
Given $U=A\cup B$ and a partition $\mathcal{U}=\{U_1,\ldots,U_k\}$ of $U$, for $i=1,\ldots,k$, let 
\[
\alpha_i = \frac{|A\cap U_i|}{|A|}, \quad \beta_i = \frac{|B\cap U_i|}{|B|} \mbox{ and } 
l_i=\| \mu_i^A - \mu_i^B\|.
\]
Also let 
$
M^A=\{\mu_1^A,\ldots,\mu_k^A\} \mbox{ and } M^B=\{\mu_1^B,\ldots,\mu_k^B\}.
$
\end{definition}
We can now state the convex program.
\begin{corollary} 
Let $\mathcal{U} = \{U_1, \ldots, U_k\}$ be a partition of $U=A\cup B$. Then $C$ is a fair set of centers with respect to $\mathcal{U}$ if $c_i=\frac{(l_i-x^*_i)\mu_i^A+x^*_i\mu_i^B}{l_i}$, where $(x^*_1,\ldots,x^*_k,
\theta^*)$
is an optimal solution to the following convex program.
\begin{align}
\min ~ ~ & \label{eq:minCenterfirstline} \theta\\
\text{s.t. ~} & \frac{\Delta(M^A,\mathcal{U}\cap A)}{|A|} + 
\sum_{i \in [k]} \alpha_i {x_i}^2 \leq \theta \nonumber\\
& \frac{\Delta(M^B,\mathcal{U}\cap B)}{|B|} +
\sum_{i\in [k]} \beta_i  (l_i-x_i)^2 \leq \theta \nonumber \\
& 0 \leq x_i \leq l_i \hspace{4mm},  i \in [k] \label{eq:minCenterlastline} \nonumber
\end{align}
\end{corollary}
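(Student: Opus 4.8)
The plan is to reduce the problem of minimizing $\Phi(\cdot,\mathcal{U})$ over all of $(\RR^d)^k$ to a box-constrained problem with one scalar variable per cluster, and then to put that problem into epigraph form, at which point it coincides verbatim with the displayed program.

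First I would invoke Lemma~\ref{lem:lineCenters}: since $\Phi(\cdot,\mathcal{U})$ is continuous and tends to infinity as any $\|c_i\|\to\infty$, a minimizer $C^*$ exists, and by the lemma each $c_i^*$ lies on the segment $[\mu_i^A,\mu_i^B]$. Hence $\min_{C\in(\RR^d)^k}\Phi(C,\mathcal{U})$ equals the minimum of $\Phi(C,\mathcal{U})$ taken over $C$ with $c_i\in[\mu_i^A,\mu_i^B]$ for every $i$, and any $C$ attaining this restricted minimum is a fair set of centers. (If $l_i=0$ for some $i$, the segment is a single point, that cluster contributes only a constant, and the corresponding $x_i$ is immaterial; I would dispose of this case at the outset and otherwise assume $l_i>0$.)

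Next I would parametrize the segment by $c_i=\mu_i^A+\tfrac{x_i}{l_i}(\mu_i^B-\mu_i^A)$ with $x_i\in[0,l_i]$, so that $\|c_i-\mu_i^A\|=x_i$ and $\|c_i-\mu_i^B\|=l_i-x_i$. Substituting into the parallel-axis decomposition already recorded in the proof of Lemma~\ref{lem:lineCenters}, summing over $i$, and dividing by $|A|$ and $|B|$ respectively gives
\[
\frac{\Delta(C,\mathcal{U}\cap A)}{|A|}=\frac{\Delta(M^A,\mathcal{U}\cap A)}{|A|}+\sum_{i\in[k]}\alpha_i x_i^2,\qquad
\frac{\Delta(C,\mathcal{U}\cap B)}{|B|}=\frac{\Delta(M^B,\mathcal{U}\cap B)}{|B|}+\sum_{i\in[k]}\beta_i(l_i-x_i)^2,
\]
where $\alpha_i,\beta_i,l_i,M^A,M^B$ are as in Definition~\ref{def:notation} and we use $\Delta(M^A,\mathcal{U}\cap A)=\sum_{i}\sum_{p\in A\cap U_i}\|p-\mu_i^A\|^2$ and the analogue for $B$. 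Thus, restricted to the segments, $\Phi(C,\mathcal{U})$ is exactly the pointwise maximum of the two functions appearing on the left-hand sides of the program's first two constraints, viewed as functions of $(x_1,\dots,x_k)\in\prod_{i\in[k]}[0,l_i]$.

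Finally, minimizing a pointwise maximum of two functions over a box is equivalent, after introducing an epigraph variable $\theta$, to minimizing $\theta$ subject to each function being at most $\theta$ together with the box constraints — and this is precisely the stated program. I would close by checking convexity (the objective is linear, each constraint left-hand side is a constant plus a sum of squares of affine functions of $x$, hence convex, and the box constraints are linear), so that any optimal $(x_1^*,\dots,x_k^*,\theta^*)$ yields, via $c_i=\frac{(l_i-x_i^*)\mu_i^A+x_i^*\mu_i^B}{l_i}$, a set of centers with $\Phi(C,\mathcal{U})=\theta^*=\min_C\Phi(C,\mathcal{U})$, i.e. a fair set of centers. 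The only genuinely delicate point is the first step — needing existence of a minimizer and the "lies on the segment" conclusion of Lemma~\ref{lem:lineCenters} to justify restricting to segments, plus the $l_i=0$ bookkeeping; everything after that is the routine algebra of the parallel-axis identity.
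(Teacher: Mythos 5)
Your proposal is correct and follows exactly the route the paper intends: restrict to the segments $[\mu_i^A,\mu_i^B]$ via Lemma~\ref{lem:lineCenters}, apply the parallel-axis decomposition from that lemma's proof to express each group's average cost in terms of $x_i$, and pass to epigraph form. The paper leaves this corollary unproved as an immediate consequence of the lemma, and your write-up (including the $l_i=0$ bookkeeping and the existence-of-minimizer remark) fills in precisely the intended details.
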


We can solve this convex program with standard convex optimization methods such as gradient descent. However, as we show in the next section, we can solve it with a much faster algorithm.

\subsection{Computing Fair Centers via Line Search}
\label{subsec:linesearch}
We first need to review a couple of facts about subgradients. For a convex continuous function $f$, we say that a vector $u$ is a subgradient of $f$ at point $x$ if $f(y)\geq f(x) + u^T(y-x)$ for any $y$. We denote the set of subgradients of $f$ at $x$ by $\partial f(x)$.

\begin{fact}
\label{fact:0subgrad}
Let $f$ be a convex function. Then point $x^*$ is a minimum for $f$ if and only if $\vec{0}\in\partial f(x^*)$.
\end{fact}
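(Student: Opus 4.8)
The plan is to prove both implications directly from the definition of a subgradient, since neither direction requires anything beyond unwrapping the defining inequality $f(y)\geq f(x)+u^T(y-x)$.

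For the ``if'' direction, I would assume $\vec{0}\in\partial f(x^*)$ and instantiate the subgradient inequality at the subgradient $u=\vec{0}$: for every $y$ we get $f(y)\geq f(x^*)+\vec{0}^T(y-x^*)=f(x^*)$. Hence $x^*$ attains the smallest value of $f$ over the whole domain, so it is a (global) minimum. For the ``only if'' direction, I would assume $x^*$ is a minimum, so $f(y)\geq f(x^*)$ for all $y$. Since $\vec{0}^T(y-x^*)=0$, this inequality is literally $f(y)\geq f(x^*)+\vec{0}^T(y-x^*)$ for all $y$, which is exactly the statement that $\vec{0}$ is a subgradient of $f$ at $x^*$, i.e.\ $\vec{0}\in\partial f(x^*)$.

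The only point worth a remark is that ``minimum'' must be read as a global minimum, and that convexity of $f$ is not actually used in either implication as stated (convexity matters elsewhere, to guarantee that $\partial f(x)$ is nonempty in the interior of the domain). There is no real obstacle here: the statement is a one-line consequence of the definition, included to justify the first-order optimality condition $\vec{0}\in\partial f(x^*)$ that the subsequent line-search analysis relies on.
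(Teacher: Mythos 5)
Your proof is correct, and it is the standard argument: the paper states this as a Fact without proof, so there is nothing to compare against, but your two-line derivation from the definition $f(y)\geq f(x)+u^T(y-x)$ is exactly what is intended. Your side remark is also accurate --- with the global subgradient inequality used here, convexity is not needed for either implication; it only matters for guaranteeing $\partial f(x)$ is nonempty, which is what makes the condition usable in the subsequent line-search analysis.
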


\begin{fact}
\label{fact:maxsubgrad}
Let $f_1,\ldots,f_m$ be smooth functions and \[F(x)=\max_{j\in [m]} f_j(x).\] Let $S_x=\{j\in[m]: f_j(x)=F(x)\}$. Then the set of subgradients of $F$ at $x$ is the convex hull of union of the subgradients of $f_j$'s at $x$ for $j\in S_x$.
\end{fact}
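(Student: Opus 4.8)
The plan is to prove the two inclusions separately. Since each $f_j$ is smooth it has a unique subgradient at $x$, namely $\nabla f_j(x)$, so the set appearing in the statement is $K := \mathrm{conv}\{\nabla f_j(x) : j \in S_x\}$, which is compact and convex (a convex hull of finitely many points). I will show $K \subseteq \partial F(x)$ and $\partial F(x) \subseteq K$. Throughout I use that the $f_j$ are convex (which is the setting in which the subgradient inequality is meaningful and which holds for the functions arising here).

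For $K \subseteq \partial F(x)$: let $u = \sum_{j \in S_x} \lambda_j \nabla f_j(x)$ be any convex combination. For each $j \in S_x$ and every $y$, convexity of $f_j$ gives $F(y) \ge f_j(y) \ge f_j(x) + \nabla f_j(x)^T (y - x) = F(x) + \nabla f_j(x)^T(y-x)$, where the last equality uses $f_j(x) = F(x)$ for $j \in S_x$. Averaging these inequalities with weights $\lambda_j$ yields $F(y) \ge F(x) + u^T(y-x)$, so $u \in \partial F(x)$. This direction is routine.

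For $\partial F(x) \subseteq K$, the key intermediate step is to evaluate the one-sided directional derivative $F'(x;d) := \lim_{t \downarrow 0} \frac{F(x+td) - F(x)}{t}$ and show it equals $\max_{j \in S_x} \nabla f_j(x)^T d$. Indeed, for $j \notin S_x$ we have $f_j(x) < F(x)$, so by continuity $f_j(x+td) < F(x) \le F(x+td)$ for all sufficiently small $t > 0$; hence $F(x+td) = \max_{j \in S_x} f_j(x+td)$ for such $t$, and dividing by $t$ and letting $t \downarrow 0$ (using differentiability of each $f_j$, $j \in S_x$) gives the identity. Now take any $u \in \partial F(x)$: applying the subgradient inequality at $y = x + td$ gives $\frac{F(x+td)-F(x)}{t} \ge u^T d$, so $F'(x;d) \ge u^T d$ for every direction $d$; together with the identity this means $\max_{j \in S_x}\nabla f_j(x)^T d \ge u^T d$ for all $d$. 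If $u \notin K$, then since $K$ is compact and convex we may strictly separate $u$ from $K$: there is a $d$ with $u^T d > \max_{v \in K} v^T d = \max_{j \in S_x}\nabla f_j(x)^T d$, contradicting the previous inequality. Hence $u \in K$, completing the proof.

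The step I expect to be the main obstacle is the reverse inclusion, and within it the directional-derivative formula: the bookkeeping showing that the inactive functions $f_j$ ($j \notin S_x$) do not enter the maximum for small $t$ relies only on continuity, and then converting the resulting bound $\max_{j \in S_x}\nabla f_j(x)^T d \ge u^T d$ into the membership statement $u \in K$ requires the separating-hyperplane argument together with compactness of $K$. The forward inclusion and the elementary limit computations are straightforward.
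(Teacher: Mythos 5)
Your proof is correct and complete. Note that the paper states this Fact without proof (it is the standard Dubovitskii--Milyutin/Danskin-type formula for the subdifferential of a finite pointwise maximum), so there is no authorial argument to compare against; your two-inclusion argument --- the easy direction via the subgradient inequality for active indices, and the reverse direction via the directional-derivative identity $F'(x;d)=\max_{j\in S_x}\nabla f_j(x)^T d$ followed by strict separation from the compact convex polytope $K$ --- is exactly the textbook route. You were also right to flag that the hypothesis ``smooth'' must be read as ``smooth and convex'': without convexity the global subgradient inequality (which is how the paper defines $\partial f$) can fail entirely, and indeed the $f_A,f_B,f_j$ to which the Fact is applied in the paper are convex quadratics, so your added assumption is consistent with the intended use.
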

Let 
\begin{align*}
f_A(x) & :=\frac{\Delta(M^A,\mathcal{U}\cap A)}{|A|} + 
\sum_{i \in [k]} \alpha_i {x_i}^2, \\ f_B(x) & :=\frac{\Delta(M^B,\mathcal{U}\cap B)}{|B|} +
\sum_{i\in [k]} \beta_i  (l_i-x_i)^2.
\end{align*}

Then we can view the convex program (\ref{eq:minCenterfirstline}) as minimizing 
\begin{equation}
\label{eq:phi}
f(x):=\max\{f_A(x), f_B(x)\} \mbox{ s.t. } 0 \leq x_i \leq l_i, \forall i\in[k]. 
\end{equation}
Note that $f(x)$ is convex since the maximum of two convex functions is convex. Therefore by Fact~\ref{fact:0subgrad}, our goal is to find a point $x^*$ such that $\vec{0}\in\partial f(x^*)$. Note that $f_A$ and $f_B$ are differentiable. Hence by Fact~\ref{fact:maxsubgrad}, we only need to look at points $x$ for which there exists a convex combination of $\nabla f_A(x)$ and $\nabla f_B(x)$ that is equal to $\vec{0}$. As we will see, this set of points is only a one-dimensional curve in $[0,l_1]\times\cdots\times[0,l_k]$. When $f_A(x)>f_B(x)$, $f(x)$ has a unique gradient and it is equal to $\nabla f_A(x)$. Similarly, when $f_A(x)<f_B(x)$, we have $\nabla f(x)=\nabla 
f_B(x)$.
In the case that $f_A(x)=f_B(x)$, for any $\gamma\in[0,1]$, 
\[
u(\gamma,x):=\gamma \nabla f_A(x)+(1-\gamma)\nabla f_B(x)
\]
is a subgradient of $f(x)$ --- and these are the only subgradient of $f$ at $x$. Now consider the set 
\[
Z=\{x\in [0,l_1]\times\cdots\times[0,l_k]: \exists \gamma\in [0,1], u(\gamma,x)=\vec{0}\}.
\]
If we find $x^*\in Z$ such that $ f_A(x^*)= f_B(x^*)$, then $\vec{0}\in\partial f(x^*)$ and therefore, $x^*$ is an optimal solution. We first describe $Z$ and show that there exists an optimal solution in $Z$. 

\begin{lemma}
\label{lem:structure}
Let $\gamma\in[0,1]$ and $u(\gamma,x)=\vec{0}$. Then $x_i=\frac{(1-\gamma)\beta_i l_i}{\gamma \alpha_i + (1-\gamma) \beta_i}$.
\end{lemma}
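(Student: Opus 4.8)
The statement is really a one-line computation once we write $u(\gamma,x)$ out coordinate by coordinate, so the plan is to do exactly that. Both $f_A$ and $f_B$ are separable sums over the $k$ coordinates: $f_A(x)=\tfrac{\Delta(M^A,\mathcal{U}\cap A)}{|A|}+\sum_{i\in[k]}\alpha_i x_i^2$ and $f_B(x)=\tfrac{\Delta(M^B,\mathcal{U}\cap B)}{|B|}+\sum_{i\in[k]}\beta_i(l_i-x_i)^2$, where the first term in each is a constant independent of $x$. Hence the $i$-th coordinates of the gradients are $\nabla f_A(x)_i = 2\alpha_i x_i$ and $\nabla f_B(x)_i = -2\beta_i(l_i-x_i)$, and therefore
\[
u(\gamma,x)_i \;=\; \gamma\,\nabla f_A(x)_i + (1-\gamma)\,\nabla f_B(x)_i \;=\; 2\bigl(\gamma\alpha_i x_i - (1-\gamma)\beta_i(l_i-x_i)\bigr).
\]

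\noindent The next step is to impose $u(\gamma,x)=\vec 0$, i.e.\ $u(\gamma,x)_i=0$ for every $i\in[k]$. Dividing by $2$ and rearranging, $\gamma\alpha_i x_i + (1-\gamma)\beta_i x_i = (1-\gamma)\beta_i l_i$, that is $\bigl(\gamma\alpha_i + (1-\gamma)\beta_i\bigr)x_i = (1-\gamma)\beta_i l_i$. Solving this scalar linear equation for $x_i$ yields $x_i = \tfrac{(1-\gamma)\beta_i l_i}{\gamma\alpha_i+(1-\gamma)\beta_i}$, which is precisely the claimed formula. Since the derivation is per-coordinate and identical for each $i$, this finishes the argument.

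\noindent The only thing needing a word of care — and the only place I expect any friction — is that the division above is legitimate, i.e.\ the denominator $\gamma\alpha_i+(1-\gamma)\beta_i$ is nonzero. For any cluster $U_i$ that contains points from at least one of the two groups we have $\alpha_i>0$ or $\beta_i>0$, so the denominator is strictly positive whenever $\gamma\in(0,1)$; at the endpoints $\gamma\in\{0,1\}$ one inspects the equation directly (e.g.\ $\gamma=0$ forces $\beta_i x_i=\beta_i l_i$, hence $x_i=l_i$ when $\beta_i>0$, consistent with the formula), with genuinely empty induced clusters being a degenerate case that can be excluded or handled separately. Modulo this bookkeeping, the lemma is immediate from the gradient computation.
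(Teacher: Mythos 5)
Your proof is correct and follows exactly the paper's own argument: compute the coordinate-wise gradients $2\alpha_i x_i$ and $2\beta_i(x_i-l_i)$, set the convex combination to zero, and solve the resulting scalar linear equation for $x_i$. Your extra remark about the denominator $\gamma\alpha_i+(1-\gamma)\beta_i$ possibly vanishing is a reasonable bit of bookkeeping that the paper omits, but it does not change the substance of the argument.
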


\begin{proof}
We have $\frac{\partial}{\partial x_i} f_A(x)=2\alpha_i x_i$ and $\frac{\partial}{\partial x_i}f_B(x)=2\beta_i (x_i-l_i)$. Using the fact that $u(\gamma,x)=\vec{0}$, we have 
\[
\gamma (2\alpha_i x_i) + (1-\gamma)(2\beta_i (x_i-l_i))=0.
\]
Hence, 
\[
x_i = \frac{(1-\gamma)\beta_i l_i}{\gamma \alpha_i + (1-\gamma) \beta_i}.
\]
\end{proof}

The previous lemma gives a complete description of set $Z$. One example of set $Z$ is shown in Figure~\ref{fig:curveGamma} left for the case of $k=2$. The following is an immediate result of Lemma~\ref{lem:structure}.
\begin{lemma}
$Z=\{x: x_i=\frac{(1-\gamma)\beta_i l_i}{\gamma \alpha_i + (1-\gamma) \beta_i}, \gamma\in[0,1]\}$.
\end{lemma}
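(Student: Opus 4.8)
The plan is to establish the claimed equality by proving the two inclusions separately, with one direction being an immediate restatement of Lemma~\ref{lem:structure} and the other requiring only a short verification. Recall that, by definition, $Z=\{x\in[0,l_1]\times\cdots\times[0,l_k]:\exists\gamma\in[0,1],\ u(\gamma,x)=\vec{0}\}$, and write $R$ for the set on the right-hand side of the statement, i.e.\ the set of $x$ for which there exists $\gamma\in[0,1]$ with $x_i=\frac{(1-\gamma)\beta_i l_i}{\gamma\alpha_i+(1-\gamma)\beta_i}$ for all $i\in[k]$. For the inclusion $Z\subseteq R$, I would simply take $x\in Z$, pick the witnessing $\gamma\in[0,1]$ with $u(\gamma,x)=\vec{0}$, and invoke Lemma~\ref{lem:structure}, which says exactly that this forces $x_i=\frac{(1-\gamma)\beta_i l_i}{\gamma\alpha_i+(1-\gamma)\beta_i}$ coordinatewise; hence $x\in R$ with the same $\gamma$.

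For the reverse inclusion $R\subseteq Z$, I would fix $\gamma\in[0,1]$ and let $x$ be the point with $x_i=\frac{(1-\gamma)\beta_i l_i}{\gamma\alpha_i+(1-\gamma)\beta_i}$, and then check two things. First, that $x$ actually lies in the box: assuming each cluster is nonempty in both groups (so $\alpha_i,\beta_i>0$ and $\mu_i^A,\mu_i^B$, hence $l_i$, are defined), the denominator $\gamma\alpha_i+(1-\gamma)\beta_i$ is strictly positive, so $x_i$ is well defined; $x_i\ge 0$ since numerator and denominator are nonnegative; and $x_i\le l_i$ because $(1-\gamma)\beta_i\le\gamma\alpha_i+(1-\gamma)\beta_i$ (equivalently $\gamma\alpha_i\ge 0$). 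Second, that $u(\gamma,x)=\vec{0}$ for this $\gamma$: using $\frac{\partial}{\partial x_i}f_A(x)=2\alpha_i x_i$ and $\frac{\partial}{\partial x_i}f_B(x)=2\beta_i(x_i-l_i)$ from the proof of Lemma~\ref{lem:structure}, the $i$-th coordinate of $u(\gamma,x)$ is $2\bigl(\gamma\alpha_i+(1-\gamma)\beta_i\bigr)x_i-2(1-\gamma)\beta_i l_i$, which vanishes by the choice of $x_i$. Since this holds for every $i$, $u(\gamma,x)=\vec{0}$, so $x\in Z$. Combining the two inclusions yields $Z=R$.

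The calculation itself is entirely routine — it is essentially the same algebra already carried out in Lemma~\ref{lem:structure}, read in reverse. The only point that needs any care, and hence the main (mild) obstacle, is the degenerate situation in which some cluster $U_i$ is empty for one of the two groups, so that $\mu_i^A$ or $\mu_i^B$ and therefore $l_i$ and the quotient defining $x_i$ are ill-posed, or in which both $\alpha_i$ and $\beta_i$ vanish and the denominator could be zero at an endpoint $\gamma\in\{0,1\}$. Under the standard standing assumption that clusters are nonempty in both groups (or, more weakly, that the relevant group means exist and not both $\alpha_i,\beta_i$ are zero), the argument above goes through verbatim, and no further work is required.
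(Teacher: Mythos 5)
Your proof is correct and matches the paper, which simply states this lemma as an immediate consequence of Lemma~\ref{lem:structure}; you spell out the two inclusions explicitly, with the forward one being Lemma~\ref{lem:structure} verbatim and the reverse one being the same algebra run backwards. Your remark about the degenerate case (empty group-clusters making $\mu_i^A$, $\mu_i^B$, or the denominator ill-posed) is a reasonable point of care that the paper leaves implicit, but it does not change the substance of the argument.
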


\begin{figure*}[ht!]
\begin{subfigure}{.4\textwidth}
  \centering
  \includegraphics[width=\linewidth]{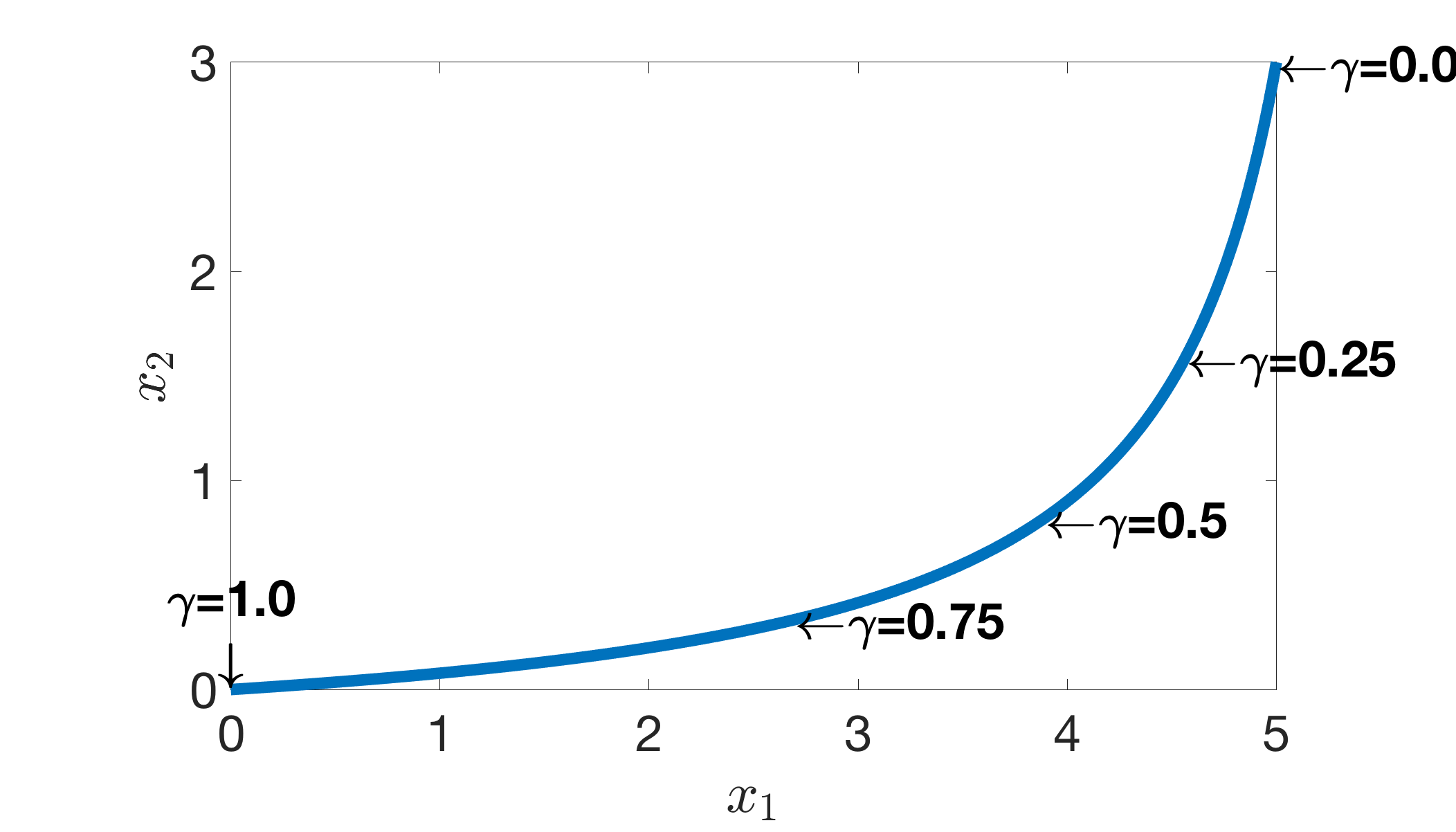}  
  \label{fig:sub-curve}
\end{subfigure}
\begin{subfigure}{.4\textwidth}
  \centering
  \includegraphics[width=\linewidth]{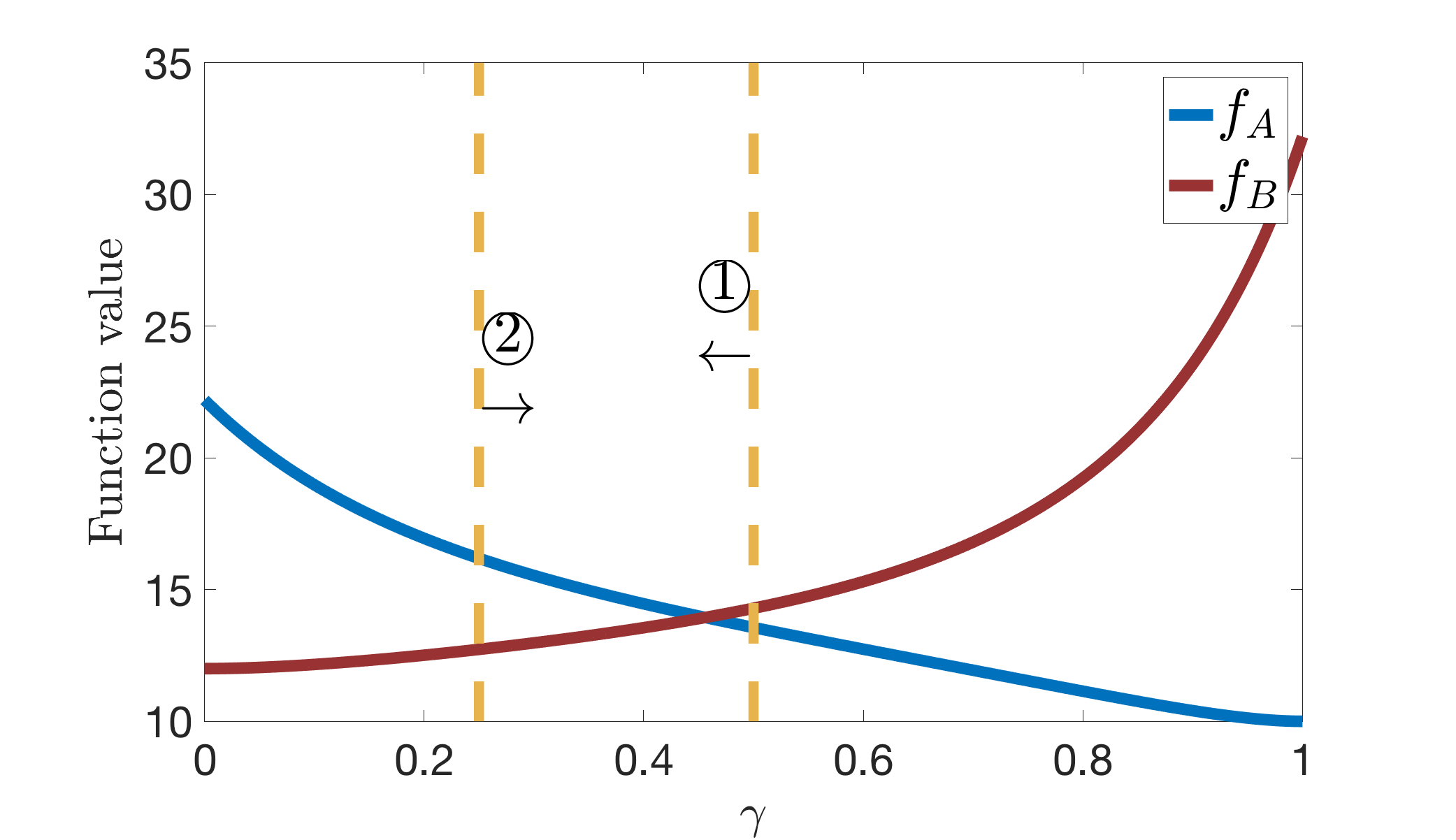}  
  \label{fig:sub-fggamma}
\end{subfigure}
\caption{Left: an example of the one-dimensional curve for $k=2$. Right: the functions $f_A$ and $f_B$ with respect to $\gamma$, and two steps of the line search algorithm. We can use a line search to find the optimal value of $\gamma$ and an optimal solution to (\ref{eq:phi}).}
\label{fig:curveGamma}
\end{figure*}

Note that when $\gamma=1$, this recovers the all-zero vector and when $\gamma=0$, $x=(l_1,\ldots,l_k)$. Therefore these extreme points are also in $Z$. As we mentioned before if there exists $x^*\in Z$ such that $f_A(x^*)=f_B(x^*)$, then $x^*$ is an optimal solution. Therefore suppose such an $x^*$ does not exist. One can see that 
\[
\frac{d}{d\gamma}(\frac{(1-\gamma)\beta_i l_i}{\gamma \alpha_i + (1-\gamma) \beta_i}) = \frac{-\alpha_i \beta_i l_i}{(\gamma \alpha_i + (1-\gamma) \beta_i)^2}.
\]
Therefore, for $1\leq i\leq k$, $x_i$ is decreasing in $\gamma$. Also one can see that $f_A$ is increasing in $x_i$ and $f_B$ is decreasing in $x_i$. Therefore $f_A$ is decreasing in $\gamma$ and $f_B$ is increasing in $\gamma$. Figure~\ref{fig:curveGamma} right shows an example that illustrates the change of $f_A$ and $f_B$ with respect to $\gamma$. This implies that if there does not exist any $x^*\in Z$ such that $f_A(x^*)=f_B(x^*)$, then either $f_A(\vec{0})>f_B(\vec{0})$ or $f_B(\ell)>f_A(\ell)$, where $\ell=(l_1,\ldots,l_k)$. In the former case, the optimal solution to (\ref{eq:minCenterfirstline}) is $\vec{0}$ which means that the fair centers are located on the means of points for 
group
$A$. In the latter case the optimal solution is $\ell$ which means that the fair centers are located on the means of points for 
group
$B$.

The above argument asserts that 
we only need to search the set $Z$ to find an optimal solution. Each element of $Z$ is uniquely determined by the corresponding $\gamma\in[0,1]$. Our goal is to find an element $x^*\in Z$ such that $f_A(x^*)=f_B(x^*)$. Since $f_A$ is decreasing in $\gamma$ and $f_B$ is increasing in $\gamma$, we can use line search to find such a point in $Z$. If such a point does not exist in $Z$, then the line search converges to $\gamma=0$ or $\gamma=1$. Two steps of such a line search are shown in Figure~\ref{fig:curveGamma}. See Algorithm~\ref{alg:binarysearch} for a precise description. Using this line search algorithm, we can solve the convex program described in (\ref{eq:phi}) in $O(k\log (\max_i l_i))$ time. 
\vspace{-1.5mm}

\RestyleAlgo{algoruled}
\begin{algorithm}[th]
\footnotesize
\textbf{Input:} A set of points $U=A\cup B$ and a partition $\mathcal{U}=\{U_1,\ldots,U_k\}$ of $U$.\\[0.6ex]

Compute $\alpha_i, \beta_i, \mu_i^A, \mu_i^B, l_i, M^A, M^B$
\DontPrintSemicolon\tcp*{See Definition~\ref{def:notation}.}

$\gamma\leftarrow 0.5$

\For{$t = 1,\ldots, T$}{
	$x_i \leftarrow \frac{(1-\gamma)\beta_i l_i}{\gamma \alpha_i + (1-\gamma) \beta_i}$, for $i=1,\ldots k$
	
	Compute $f_A(x)$ and $f_B(x)$
	

    \uIf{$f_A(x)>f_B(x)$}{
        $\gamma\leftarrow \gamma + (1/2)^{-(t+1)}$
    }\uElseIf{$f_A(x)<f_B(x)$}{
        $\gamma\leftarrow \gamma - (1/2)^{-(t+1)}$
    }\Else{
        \Break
    }
    
}

$c_i\leftarrow\frac{(l_i-x_i)\mu_i^A+x_i\mu_i^B}{l_i}$, for all $i=1,\ldots,k$

\Return $C=(c_1,\ldots,c_k)$
\caption{Line Search$(U,\mathcal{U})$}
\label{alg:binarysearch}
\end{algorithm}
\vspace{-2mm}
\subsection{Fair \texorpdfstring{$k$}{k}-means is well-behaved}
\label{sec:miscellaneous}
In this section, we discuss the stability, convergence, and approximability of Fair-Lloyd for 2 groups. As we will show in Section~\ref{sec:generalization}, these results can be extended to $m>2$ groups. 

\paragraph{Stability.} The line search algorithm finds the optimal solution to (\ref{eq:minCenterfirstline}). This means that for a fixed partition of the points (e.g., the last clustering that the algorithm outputs), the returned centers are optimal in terms of the maximum average cost of the
groups. However, one important question is whether we can improve the cost for the 
group
with the smaller average cost.
The following proposition shows that this is not possible; assuring that the solution is pareto optimal. 

\begin{proposition}
\label{prop:stability}
Let $x^*$ be the optimal solution for a fixed partition $\mathcal{U}=\{U_1,\ldots,U_k\}$ of $U$. Then there does not exist any other optimal solution with an average cost better than $f_A(x^*)$ or $f_B(x^*)$ for 
groups
$A$ and $B$, respectively.
\end{proposition}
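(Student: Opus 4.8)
The plan is to show that $x^*$ is Pareto-optimal among the minimizers of $f(x)=\max\{f_A(x),f_B(x)\}$ over the box $[0,l_1]\times\cdots\times[0,l_k]$. Write $\theta^*=f(x^*)$ for the optimal value, so that every optimal solution $x'$ satisfies $\max\{f_A(x'),f_B(x')\}=\theta^*$. I will first reduce to the case $\alpha_i,\beta_i>0$ for all $i$: if $\alpha_i=0$ (the cluster contains no point of $A$), then $f_A$ does not depend on $x_i$ while $f_B$ is strictly decreasing in $x_i$, so every optimal solution — including the output of the line search, whose formula gives $x_i=l_i$ in this case — has $x_i^*=l_i$; symmetrically $\beta_i=0$ forces $x_i^*=0$. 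Fixing these coordinates reduces the problem to one in which every remaining $\alpha_i,\beta_i$ is strictly positive, so that $f_A$ and $f_B$ are \emph{strictly} convex quadratics.

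Next I would dispose of the case where the optimum does not equalize the two costs. If $f_A(x^*)>f_B(x^*)$, then by the structural analysis preceding Algorithm~\ref{alg:binarysearch} the optimum is attained at the corner $x^*=\vec{0}$, with $\theta^*=f_A(\vec{0})$. Since $f_A$ is a strictly convex quadratic whose unconstrained minimizer $\vec{0}$ is a vertex of the box, $f_A(x')>f_A(\vec{0})=\theta^*$ for every $x'\neq\vec{0}$ in the box, hence $f(x')\ge f_A(x')>\theta^*$ and $x'$ is not optimal. Thus $\vec{0}$ is the unique optimal solution and the statement holds vacuously; the subcase $f_B(x^*)>f_A(x^*)$, where $x^*=\ell=(l_1,\ldots,l_k)$, is symmetric.

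It remains to treat the equalized case $f_A(x^*)=f_B(x^*)=\theta^*$. Suppose, for contradiction, that some optimal $x'$ satisfies $f_A(x')<\theta^*$; by the symmetry between $A$ and $B$ this also covers the case $f_B(x')<\theta^*$, and both cannot hold at once, for then $f(x')<\theta^*$. Optimality of $x'$ forces $f_B(x')=\theta^*$, and $x'\neq x^*$ since $f_A(x')\neq f_A(x^*)$. Consider the midpoint $\bar{x}=\tfrac12(x^*+x')$, which again lies in the box. Convexity of $f_A$ gives $f_A(\bar{x})\le\tfrac12 f_A(x^*)+\tfrac12 f_A(x')<\theta^*$, while strict convexity of $f_B$ (using $x'\neq x^*$) gives $f_B(\bar{x})<\tfrac12 f_B(x^*)+\tfrac12 f_B(x')=\theta^*$. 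Hence $f(\bar{x})=\max\{f_A(\bar{x}),f_B(\bar{x})\}<\theta^*$, contradicting the optimality of $x^*$, and so no such $x'$ exists.

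The only delicate point is the reduction to strictly convex $f_A$ and $f_B$, i.e. the bookkeeping for clusters that miss one of the two groups, together with the observation borrowed from Section~\ref{subsec:linesearch} that a non-equalizing optimum must sit at a corner ($\vec 0$ or $\ell$) and is then the \emph{unique} minimizer — which is what makes the non-equalized case a triviality. With strict convexity available, the equalized case is just the short midpoint argument above.
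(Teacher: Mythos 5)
Your proof is correct, and in the main (equalized) case it takes a genuinely different route from the paper. The paper argues by local perturbation: a competitor $y$ with $f_A(y)>f_B(y)$ can have one coordinate decreased slightly so that $f_A$ strictly drops while the maximum is still attained by $f_A$, contradicting optimality of the common value. You instead average the competitor $x'$ with $x^*$ and invoke strict convexity of $f_B$ at the midpoint to drive the maximum strictly below $\theta^*$. Your version is shorter and makes explicit where strict convexity (i.e.\ $\alpha_i,\beta_i>0$) is actually needed; the paper's perturbation argument silently needs the same thing (the decreased coordinate must have $\alpha_i>0$ to get a strict decrease of $f_A$), a point it does not address, whereas your upfront reduction handles it. Both proofs treat the non-equalized case identically, via the line-search structure (optimum at a corner, unique by strict convexity of the dominating function). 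One small overstatement in your reduction: when $\alpha_i=0$ it is not true that \emph{every} optimal solution has $x_i=l_i$ (if $f_A$ strictly dominates at the optimum, $x_i$ is free); what you actually need, and what your argument implicitly supplies, is that any optimal competitor can be replaced by one with $x_i=l_i$ that is at least as good in both objectives and agrees with $x^*$ on the degenerate coordinates, after which the strictly convex midpoint argument applies to the remaining coordinates. This is a cosmetic fix, not a gap.
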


\begin{proof}
Let $y$ be another optimal solution. Without loss of generality, suppose $f_A(x^*)\geq f_B(x^*)$. If $f_A(x^*)> f_B(x^*)$, then by our discussion on the line search algorithm, $x^*=\vec{0}$ and it is the only optimal solution. Therefore $y=x^*$. Now suppose $f_A(x^*)= f_B(x^*)$. For the sake of contradiction and without loss of generality, assume $f_A(x^*)=f_A(y)$, but $f_B(x^*)>f_B(y)$. Therefore $f_A(y)>f_B(y)$. First note that $y\neq \vec{0}$ because if $f_A(\vec{0})>f_B(\vec{0})$, then for any other $x$ in the feasible region, $f_A(x)>f_B(x)$ which is a contradiction because $f_A(x^*)= f_B(x^*)$. Hence we can decrease one of the coordinates of $y$ by a small amount to get a point $y'$ in the feasible region. If the change is small enough, we have $f_A(y)>f_A(y')>f_B(y')>f_B(y)$ but this is a contradiction because it implies $f(x^*)=f(y)>f(y')$ which means $x^*$ was not an optimal solution.
\end{proof}

\paragraph{Convergence.}
Lloyd's algorithm for the standard $k$-means problem converges to a solution in finite time, essentially because the number of possible partitions is finite~\citep{lloyd1982least}. This also holds for the Fair-Lloyd algorithm for the fair $k$-means problem. Note that for any fixed partition of the points, our algorithm finds the optimal fair centers. Also, note that there are only a finite number of partitions of the points. Therefore, if our algorithm continues until a step where the clustering does not change afterward, then we say that the algorithm has converged and indeed the solution is a local optimum. However, note that, in the case where there is more than one way to assign points to the centers (i.e., there exists a point that have more than one closest center), then we should exhaust all the cases, otherwise the output is not necessarily a local optimal. This is not a surprise because the same condition also holds for the Lloyd's algorithm for the $k$-means problem. For example, see Figure~\ref{fig:kmeansCE}. Adjacent points have unit distance from each other. The centers are optimum for the illustrated clustering. However, they do not form a local optimum because moving $c_2$ and $c_3$ to the left by a small amount $\epsilon$ decreases the $k$-means objective from $2$ to $2(1-\epsilon)^2+\epsilon^2=2-4\epsilon+3\epsilon^2 < 2$.

\begin{figure}[h]
    \centering
    \begin{tikzpicture}
\draw[blue!50, very thick, dashed] (0,0) ellipse (1.25 and 1);
\draw[blue!50, very thick, dashed] (-2,0) ellipse (0.5 and 1);
\draw[blue!50, very thick, dashed] (2,0) ellipse (0.5 and 1);
\filldraw[black] (0,0) circle (2pt) node[anchor=north] {$c_2$};
\filldraw[black] (-1,0) circle (2pt) node[anchor=south] {$A$};
\filldraw[black] (1,0) circle (2pt) node[anchor=south] {$A$};
\filldraw[black] (0,0) circle (2pt) node[anchor=north] {$c_2$};
\filldraw[black] (-2,0) circle (2pt) node[anchor=north] {$c_1$};
\filldraw[black] (2,0) circle (2pt) node[anchor=north] {$c_3$};
\filldraw[black] (-2,0) circle (2pt) node[anchor=south] {$A$};
\filldraw[black] (2,0) circle (2pt) node[anchor=south] {$A$};
\end{tikzpicture}
    \caption{An example of $k$-means problem where the current clustering is not a local optimal and we need to check all the possible partitions with the current centers. $c_1,c_2,c_3$ are the centers and the points are marked with the letter A on top of them.}
    \label{fig:kmeansCE}
\end{figure}
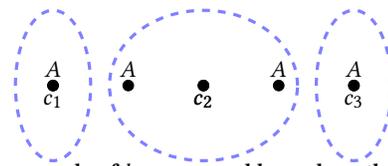
\vspace{-2mm}
\paragraph{Initialization.} An important consideration is how to initialize the centers. While a random choice is often used in practice for the $k$-means algorithm, another choice that has better provable guarantees~\citep{kumar2010clustering} is to use a set of centers with objective value that is within a constant factor of the minimum. We will show that a $c$-approximation for the $k$-means problem implies a $2c$-approximation for the fair $k$-means problem, and so this method could be used to initialize centers for Fair-Lloyd as well.
The best known approximation algorithm for the $k$-means problem finds a solution within a factor $\rho+\epsilon$ of optimal, where $\rho\approx 6.357$~\citep{AhmadianNSW17}.

\begin{theorem}
\label{thm:approx}
If the $k$-means problem admits a $c$-approximation in polynomial time then the fair $k$-means problem admits a $2c$-approximation in polynomial time.
\end{theorem}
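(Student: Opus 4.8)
The plan is to reduce fair $k$-means to a single \emph{weighted} $k$-means instance and then exploit the elementary fact that the maximum of two nonnegative numbers is sandwiched between their average and their sum. Concretely, define a weighted point set in which every $p\in A$ carries weight $1/|A|$ and every $p\in B$ carries weight $1/|B|$, and for a set of centers $C$ let
\[
g(C) := \frac{\Delta(C,\mathcal{U}_C\cap A)}{|A|} + \frac{\Delta(C,\mathcal{U}_C\cap B)}{|B|}
\]
be the resulting weighted $k$-means cost. The nearest-center assignment is still optimal for a weighted instance (weights do not affect which center is closest), so $\mathcal{U}_C$ is exactly the partition realizing the weighted cost of $C$, and $g(C^W)=\min_C g(C)$ is the weighted $k$-means optimum, where $C^W:=\argmin_C g(C)$. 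Since $\tfrac12(a+b)\le\max\{a,b\}\le a+b$ for $a,b\ge 0$, we obtain for every $C$ the two-sided bound
\[
\tfrac12\, g(C) \;\le\; \Phi(C,\mathcal{U}_C) \;\le\; g(C).
\]

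Next I would run the assumed polynomial-time $c$-approximation for $k$-means on this weighted instance, obtaining centers $\tilde C$ with $g(\tilde C)\le c\, g(C^W)$. To see that a $c$-approximation for the (unweighted) $k$-means problem suffices, note that the weights $1/|A|$ and $1/|B|$ are rational with common denominator $|A|\cdot|B|$; scaling by $|A|\,|B|$ gives integer weights $|B|$ on each $A$-point and $|A|$ on each $B$-point, and replacing each point by that many colocated copies produces an unweighted instance with at most $2|A|\,|B|\le n^2$ points whose $k$-means cost equals $|A|\,|B|\cdot g$. The approximation algorithm runs in time $\mathrm{poly}(n^2)=\mathrm{poly}(n)$ on this instance, and $k$ is unchanged. (Alternatively, one may simply invoke the fact that the standard $k$-means approximation algorithms handle weighted instances directly.)

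Finally I would chain the inequalities. Let $C^*:=\argmin_C \Phi(C,\mathcal{U}_C)$ be an optimal fair $k$-means solution. Then
\[
\Phi(\tilde C,\mathcal{U}_{\tilde C}) \;\le\; g(\tilde C) \;\le\; c\, g(C^W) \;\le\; c\, g(C^*) \;\le\; 2c\, \Phi(C^*,\mathcal{U}_{C^*}),
\]
where the first step is the upper sandwich bound, the second is the approximation guarantee, the third is optimality of $C^W$ for $g$, and the fourth is the lower sandwich bound. Hence $\tilde C$ is a $2c$-approximation for fair $k$-means computable in polynomial time. The only step that requires any care is the reduction from the weighted to the unweighted $k$-means problem (the duplication argument and verifying its polynomial blow-up, or citing weight-awareness of the approximation algorithm); the sandwich bound and the chain of inequalities are immediate.
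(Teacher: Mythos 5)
Your proof is correct and follows essentially the same route as the paper: reduce to the weighted $k$-means objective $g(C)=\frac{\Delta(C,\mathcal{U}_C\cap A)}{|A|}+\frac{\Delta(C,\mathcal{U}_C\cap B)}{|B|}$ and use the sandwich $\max\{a,b\}\le a+b\le 2\max\{a,b\}$ to chain the same four inequalities. Your explicit justification that a $c$-approximation for unweighted $k$-means handles the weighted instance (via point duplication with polynomial blow-up) is a small point the paper's proof leaves implicit, but it does not change the argument.
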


\begin{proof}
Let
\[
g(C)=\frac{\Delta(C, \mathcal{U}_C\cap A)}{|A|}+ \frac{\Delta(C, \mathcal{U}_C\cap B)}{|B|}.
\]
This is basically the $k$-means objective when we consider a weight of $\frac{1}{|A|}$ for the points in $A$ and a weight of $\frac{1}{|B|}$ for the points in $B$.

Let $O$ be an optimal solution to $g$ and $S$ be a $c$-approximation solution to $g$ (i.e., $g(S)\leq cg(O)$). Moreover
\begin{align*}
\Phi(S,\mathcal{U}_S) & = \max\{\frac{\Delta(S, \mathcal{U}_S\cap A)}{|A|}, \frac{\Delta(S, \mathcal{U}_S\cap B)}{|B|}\} \\ & \leq \frac{\Delta(S, \mathcal{U}_S\cap A)}{|A|}+ \frac{\Delta(S, \mathcal{U}_S\cap B)}{|B|} \\ & = g(S).
\end{align*}
Hence $\Phi(S,\mathcal{U}_S)\leq cg(O)$. Now let $O'$ be an optimal solution for $\Phi$. Then
\begin{align*}
g(O') & =\frac{\Delta(O', \mathcal{U}_{O'}\cap A)}{|A|}+ \frac{\Delta(O', \mathcal{U}_{O'}\cap B)}{|B|} \\ & \leq 2\max\{\frac{\Delta(O', \mathcal{U}_{O'}\cap A)}{|A|}, \frac{\Delta(O', \mathcal{U}_{O'}\cap B)}{|B|}\} \\ & = 2\Phi(O',\mathcal{U}_{O'})\leq 2\Phi(S,\mathcal{U}_{S}).
\end{align*}
Also by optimality of $O$ for $g$, we have $g(O)\leq g(O')$. Therefore
\[
g(O)\leq 2\Phi(O',\mathcal{U}_{O'})\leq 2\Phi(S,\mathcal{U}_S) \leq 2cg(O).
\]
This implies that $\frac{\Phi(S,\mathcal{U}_S)}{\Phi(O',\mathcal{U}_{O'})}\leq 2c$.
\end{proof}

\section {Generalization to \texorpdfstring{$m>2$}{m>2} groups}
\label{sec:generalization}

Let $U=A_1\cup \cdots\cup A_m$. Then the objective of fair $k$-means for $m$ demographic groups is to find a set of centers $C$ that minimizes the following
\[
\Phi(C, \mathcal{U}_C) := \max\{\frac{\Delta(C, \mathcal{U}_C\cap A_1)}{|A_1|},\ldots,\frac{\Delta(C, \mathcal{U}_C\cap A_m)}{|A_m|}\},
\]
Let $\mathcal{U}=\{U_1,\ldots,U_k\}$ be a partition of $U$, and $\mu_i^j$ be the mean of $U_i \cap A_j$ (i.e., the mean of members of subgroup $j$ in cluster $i$). Then by a similar argument to Lemma~\ref{lem:lineCenters}, one can conclude that for a fair set of centers $C=\{c_1,\ldots,c_k\}$ with respect to $\mathcal{U}$, $c_i$ is in the convex hull of $\{\mu_i^1,\ldots,\mu_i^m\}$. Then we can generalize the convex program in Equation~\ref{eq:minCenterfirstline} to $m$ demographic groups as the following:
\begin{align}
\label{eq:mgroupsprogram}
\min ~ ~ & \theta\\ \nonumber
\text{s.t. ~} & \frac{\Delta(M^j,\mathcal{U}\cap A_j)}{|A_j|} +
\sum_{i \in [k]} \alpha_i^j \|c_i-\mu_i^j\|^2 \leq \theta \hspace{4mm},  j \in [m] \\ \nonumber
& c_i \in Conv(\mu_i^1,\ldots,\mu_i^m) \hspace{4mm},  i \in [k]
\end{align}
where $\alpha_i^j=\frac{|U_i\cap A_j|}{|A_j|}$ and $M^j=\{\mu_1^j,\ldots,\mu_k^j\}$. The set of $c_i$'s found by solving the above convex program will be a fair set of centers with respect to $\mathcal{U}$. We can solve this using standard convex optimization algorithms including gradient descent. However, similar to the case of two groups, we can find a fair set of centers by searching a standard $(m-1)$-simplex. Namely, we only need to search the following set to find a fair set of centers.
\[
Z=\{C=(c_1,\ldots,c_k): c_i = \sum_{j=1}^m \frac{\gamma_j \alpha_i^j}{\gamma_1 \alpha_i^1+\cdots+\gamma_m \alpha_i^m}\mu_i^j ~ ~ , \sum_{j=1}^m \gamma_j = 1\}
\]

The following notations will be convenient. For $C=(c_1\ldots,c_k)$, and $j\in [m]$, let \[f_j(C)=\frac{\Delta(M^j,\mathcal{U}\cap A_j)}{|A_j|} +
\sum_{i \in [k]} \alpha_i^j \|c_i-\mu_i^j\|^2,\]
and $F(C)=\max_{j\in [m]} f_j(C)$. Then the convex program represented in (\ref{eq:mgroupsprogram}) is equivalent to $\min F(C): c_i\in Conv(\mu_i^1,\ldots,\mu_i^m) ~ \forall i\in [k]$.

Let $u_i^j = c_i-\mu_i^j$. For a vector $v$, let $v(s)$ denote its $s$'th component. Then we have
\[
\|c_i-\mu_i^j\|^2=\sum_{s=1}^d u_i^j(s)^2
\]

\begin{theorem}
\label{thm:mgroupsZ}
Any optimum solution of (\ref{eq:mgroupsprogram}) is in $Z$.
\end{theorem}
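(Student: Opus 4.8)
The plan is to generalize the two-group argument leading to Lemma~\ref{lem:structure}, replacing the scalar mixing parameter $\gamma\in[0,1]$ with a vector $\gamma$ in the probability simplex $\{\gamma\in\RR^m:\gamma_j\geq 0,\ \sum_{j=1}^m\gamma_j=1\}$, and then reading off the first-order optimality condition of the convex program. Two preliminary observations drive the reduction. First, $F(C)=\max_{j\in[m]}f_j(C)$ is convex in $C=(c_1,\ldots,c_k)$, being a maximum of the convex functions $f_j$ (each a constant plus $\sum_i\alpha_i^j\|c_i-\mu_i^j\|^2$). Second, every point of $Z$ is feasible for~(\ref{eq:mgroupsprogram}): in the defining formula the coefficients $\frac{\gamma_j\alpha_i^j}{\gamma_1\alpha_i^1+\cdots+\gamma_m\alpha_i^m}$ are nonnegative and sum to $1$, so each $c_i$ lies in $Conv(\mu_i^1,\ldots,\mu_i^m)$. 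The feasible region $K=\prod_{i\in[k]}Conv(\mu_i^1,\ldots,\mu_i^m)$ is compact, so $F$ attains its minimum over $K$; moreover, by the $m$-group analogue of Lemma~\ref{lem:lineCenters} (projecting any $c_i$ onto $Conv(\mu_i^1,\ldots,\mu_i^m)$ does not increase $\|c_i-\mu_i^j\|$ for any $j$, hence increases neither any $f_j$ nor $F$), a minimizer of $F$ over $K$ is in fact an unconstrained minimizer of $F$ over $\RR^{kd}$. Hence the constrained and unconstrained problems have the same optimal value, so every optimum $C$ of~(\ref{eq:mgroupsprogram}) is an unconstrained minimizer of the convex function $F$ and, by Fact~\ref{fact:0subgrad}, satisfies $\vec{0}\in\partial F(C)$.

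It then remains to prove the implication $\vec{0}\in\partial F(C)\Rightarrow C\in Z$. Let $S_C=\{j\in[m]:f_j(C)=F(C)\}$. By Fact~\ref{fact:maxsubgrad}, $\partial F(C)$ is the convex hull of $\{\nabla f_j(C):j\in S_C\}$, so there exist weights $\gamma_j\geq 0$ for $j\in S_C$ summing to $1$ with $\sum_{j\in S_C}\gamma_j\nabla f_j(C)=\vec{0}$; extending by $\gamma_j=0$ for $j\notin S_C$ gives $\gamma$ in the simplex with $\sum_{j=1}^m\gamma_j\nabla f_j(C)=\vec{0}$. From $f_j(C)=\frac{\Delta(M^j,\mathcal{U}\cap A_j)}{|A_j|}+\sum_{i\in[k]}\alpha_i^j\|c_i-\mu_i^j\|^2$, the block of $\nabla f_j(C)$ in the variables $c_i$ equals $2\alpha_i^j(c_i-\mu_i^j)$. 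Reading $\sum_j\gamma_j\nabla f_j(C)=\vec{0}$ block by block gives $\sum_{j=1}^m\gamma_j\alpha_i^j(c_i-\mu_i^j)=\vec{0}$ for each $i\in[k]$, i.e. $\big(\sum_{j=1}^m\gamma_j\alpha_i^j\big)\,c_i=\sum_{j=1}^m\gamma_j\alpha_i^j\,\mu_i^j$, which rearranges to $c_i=\sum_{j=1}^m\frac{\gamma_j\alpha_i^j}{\gamma_1\alpha_i^1+\cdots+\gamma_m\alpha_i^m}\,\mu_i^j$. Since one and the same $\gamma$ serves all clusters $i$ at once, this is exactly the condition defining membership in $Z$, which completes the argument.

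The subtle point — and the main obstacle to a fully airtight version — is the degenerate case in which $\gamma_1\alpha_i^1+\cdots+\gamma_m\alpha_i^m=0$ for some cluster $i$, i.e.\ cluster $i$ contains no point of any group that is tight at $C$; then the $c_i$-block of the optimality condition is vacuous, $c_i$ is left free, and the displayed formula reads $0/0$. One must then argue separately that such a $c_i$ can still be expressed in the $Z$-form with the same $\gamma$, or dispose of the case by a limiting/perturbation argument analogous to the handling of the endpoints $\gamma\in\{0,1\}$ in the two-group discussion preceding Algorithm~\ref{alg:binarysearch}. Under the natural nondegeneracy assumption that every group is represented in every cluster (all $\alpha_i^j>0$) this difficulty disappears, and for $m=2$ the whole argument collapses to Lemma~\ref{lem:structure} together with that endpoint discussion.
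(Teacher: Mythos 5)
Your proof is correct and follows essentially the same route as the paper's: invoke Fact~\ref{fact:0subgrad} and Fact~\ref{fact:maxsubgrad} to obtain a vanishing convex combination $\sum_j\gamma_j\nabla f_j(C)=\vec{0}$ and solve the resulting block equations for each $c_i$, yielding exactly the parametrization of $Z$. You are in fact more careful than the paper on two points it glosses over --- the reduction from the constrained program~(\ref{eq:mgroupsprogram}) to unconstrained minimization of $F$ (needed before applying Fact~\ref{fact:0subgrad}), and the degenerate case $\sum_j\gamma_j\alpha_i^j=0$ where the formula for $c_i$ reads $0/0$ --- both of which the paper implicitly assumes away.
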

\begin{proof}
We can view a set of centers as a point in a $k\times d$ dimensional space. Let $\{e_{i,s}: i\in [k], s\in[d]\}$ be the set of standard basis of this space. Then we have
\[
\frac{d}{d e_{i,s}} f_j(C) = 2\alpha_i^j u_i^j(s).
\]
By Fact \ref{fact:0subgrad} and \ref{fact:maxsubgrad}, we only need to show that set $Z$ is the set of all points for which there exists a convex combinations of $\nabla f_1(C) , \ldots ,\allowbreak \nabla f_m(C)$ that is equal to $\vec{0}$.
Let $0\leq \gamma_1,\ldots,\gamma_m\leq 1$ such that $\sum_{t=1}^m \gamma_t=1$. We want to find a $C$ such that $\sum_{j=1}^m \gamma_j \nabla f_j(C) = \vec{0}$. Therefore for each $i,s$, we have
\[
0 = \sum_{j=1}^m \gamma_j \frac{d}{d e_{i,s}} f_j(C) = \sum_{j=1}^m \gamma_j (2\alpha_i^j u_i^j(s)) = \sum_{j=1}^m 2 \gamma_j \alpha_i^j (c_i(s)-\mu_i^j(s)).
\]
Thus
\[
c_i(s) = \sum_{j=1}^m \frac{\gamma_j \alpha_i^j}{\gamma_1 \alpha_i^1+\cdots+\gamma_m \alpha_i^m}\mu_i^j(s),
\]
and
\[
c_i = \sum_{j=1}^m \frac{\gamma_j \alpha_i^j}{\gamma_1 \alpha_i^1+\cdots+\gamma_m \alpha_i^m}\mu_i^j
\]
This shows that the set of centers that satisfy $\sum_{j=1}^m \gamma_j \nabla f_j(C) = 0$, for some $\gamma_1,\ldots,\gamma_m$, are exactly the members of $Z$.
\end{proof}
\vspace{-1mm}
Note that any element in $Z$ is identified by a point in the standard $(m-1)$-simplex, i.e., $(\gamma_1,\ldots,\gamma_m)$ such that $\sum_{j=1}^m \gamma_j = 1$. However, the function defined on the $(m-1)$-simplex is not necessarily convex. Indeed, as we will show in Figure~\ref{fig:qcexample} in the Appendix, it is not even quasiconvex. Thus, one can either use standard convex optimization algorithms to solve the original convex program in ($\ref{eq:mgroupsprogram}$), or other heuristics to only search the set $Z$. For our experiments, we use a variant of the multiplicative weight update algorithm on set $Z$ --- see Algorithm~\ref{alg:MWU}. 

To certify the optimality of the solution, one can use Fact~\ref{fact:0subgrad} and show that $\vec{0}$ is a subgradient. However, the iterative algorithms usually do not find the exact optimum, but rather converge to the optimum solution. To evaluate the distance of a solution from the optimum, we propose a min/max theorem for set $Z$ in Section~\ref{sec:certificate}. This theorem allows us to certify that the solutions found by our heuristic in the experiments are within a distance of 0.01 from the optimal.

\subsection{Multiplicative Weight Update Heuristic}
Note that the original optimization problem given in (\ref{eq:mgroupsprogram}) is convex. However we can use a heuristic to solve the problem in the $\gamma$ space. One such heuristic is the multiplicative weight update algorithm \citep{arora2012multiplicative}, precisely defined as Algorithm~\ref{alg:MWU}.

\vspace{-1mm}
\RestyleAlgo{algoruled}
\begin{algorithm}[ht!]
\footnotesize
\textbf{Input:} Integers $m$ and $k$, numbers $\alpha_i^j$, and centers $\mu_i^j$ for $i\in[k]$ and $j\in [m]$, and $\frac{\Delta(M^j,\mathcal{U}\cap A_j)}{|A_j|}$ for $j\in [m]$.\\[0.6ex]

$\gamma_j\leftarrow \frac{1}{m}$, for $j\in [m]$

\For{$t = 1,\ldots, T$}{
	$c_i \leftarrow \sum_{j=1}^m \frac{\gamma_j \alpha_i^j}{\gamma_1 \alpha_i^1+\cdots+\gamma_m\alpha_i^m} \mu_i^j$, for $i\in [k]$
	
	$C\leftarrow (c_1,\ldots,c_k)$
	
	Compute $f_j(C)$ for all $j\in [m]$

    $F(C) \leftarrow \max_{j\in[m]} f_j(C)$
    
    $d_j \leftarrow F(C) - f_j(C)$, for $j\in [m]$
    
    $\gamma_j \leftarrow \gamma_j(1-\frac{d_j}{\sqrt{t+1}\max_{j\in[m]} d_j})$

    Normalize $\gamma_j$'s such that $\sum_{j=1}^m \gamma_j = 1$
}

\Return $C$
\caption{Multiplicative Weight Update}
\label{alg:MWU}
\end{algorithm}
\vspace{-1mm}

\subsection{Certificate of Optimality}
\label{sec:certificate}
Next, we give a min/max theorem that can be used to find a lower bound for the optimum value. Using this theorem, we can certify that, in practice, the multiplicative weight update algorithm finds a solution very close to the optimum.

\begin{theorem}
\label{thm:minmax}
Let $S\subseteq [m]$ and 
\begin{align*}
Z_S= & \{C=(c_1,\ldots,c_k): c_i = \small\sum_{j=1}^m \frac{\gamma_j \alpha_i^j}{\gamma_1 \alpha_i^1+\cdots+\gamma_m \alpha_i^m}\mu_i^j ~ ~ , \\ & \sum_{j=1}^m \gamma_j = 1, \text{ and } \gamma_j=0, \forall j\notin S\}.
\end{align*}
Then \[\max_{C\in Z_S} \min_{j\in S} f_j(C)\leq \min_{C\in Z_S} \max_{j\in S} f_j(C).\] 
Moreover \[\max_{C\in Z_S} \min_{j\in S} f_j(C)\leq \min_{C\in Z_{[m]}} \max_{j\in [m]} f_j(C).\]
\end{theorem}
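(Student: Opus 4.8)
The plan is to sandwich both inequalities between the elementary weak-duality bound $\sup\inf \le \inf\sup$, using two structural facts about the parametrization of $Z_S$ by a simplex. Write $\Gamma_S = \{\gamma\in\RR^m : \gamma_j\ge 0,\ \sum_j\gamma_j=1,\ \gamma_j=0 \text{ for } j\notin S\}$, and for $\gamma\in\Gamma_S$ let $C(\gamma)\in Z_S$ be the point whose $i$-th center is $c_i(\gamma)=\big(\sum_j\gamma_j\alpha_i^j\mu_i^j\big)/\big(\sum_j\gamma_j\alpha_i^j\big)$, so that $Z_S=\{C(\gamma):\gamma\in\Gamma_S\}$ by definition of $Z_S$. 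Recall that $f_j(C)=\Delta(M^j,\mathcal{U}\cap A_j)/|A_j|+\sum_i\alpha_i^j\|c_i-\mu_i^j\|^2$ is a convex quadratic in $C$, since its first term is a constant and each $\alpha_i^j\ge 0$.

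First I would record two observations. (a) For any reals $v_1,\dots,v_m$ we have $\max_{\gamma\in\Gamma_S}\sum_j\gamma_j v_j=\max_{j\in S} v_j$, and $\min_{j\in S} v_j\le\sum_j\gamma_j v_j$ for every $\gamma\in\Gamma_S$, because $\sum_j\gamma_j v_j$ is a convex combination of $\{v_j:j\in S\}$. (b) For each $\gamma\in\Gamma_S$, the point $C(\gamma)$ is a \emph{global minimizer} over all $C$ of the convex function $C\mapsto\sum_j\gamma_j f_j(C)$. This is exactly the gradient computation in the proof of Theorem~\ref{thm:mgroupsZ}: $\nabla\big(\sum_j\gamma_j f_j\big)(C(\gamma))=\vec{0}$, and a convex function with zero (sub)gradient at a point attains its minimum there by Fact~\ref{fact:0subgrad}. (Equivalently, the objective is separable over clusters, and for each $i$ the term $\sum_j\gamma_j\alpha_i^j\|c_i-\mu_i^j\|^2$ is a weighted least-squares problem whose minimizer is the weighted centroid $c_i(\gamma)$.)

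With these in hand, the first inequality follows from the chain
\begin{align*}
\max_{C\in Z_S}\min_{j\in S} f_j(C)
&= \max_{\gamma\in\Gamma_S}\ \min_{j\in S} f_j(C(\gamma))
\ \le\ \max_{\gamma\in\Gamma_S}\ \sum_j\gamma_j f_j(C(\gamma)) \\
&= \max_{\gamma\in\Gamma_S}\ \min_{C}\ \sum_j\gamma_j f_j(C)
\ \le\ \min_{C}\ \max_{\gamma\in\Gamma_S}\ \sum_j\gamma_j f_j(C) \\
&= \min_{C}\ \max_{j\in S} f_j(C)
\ \le\ \min_{C\in Z_S}\ \max_{j\in S} f_j(C),
\end{align*}
where the first and third equalities use (a), the second equality uses (b), the middle inequality is the always-valid weak-duality bound $\sup_\gamma\inf_C\le\inf_C\sup_\gamma$, and the last inequality holds because $Z_S$ is a subset of the whole $C$-space. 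For the ``Moreover'' part, observe that the chain in fact establishes $\max_{C\in Z_S}\min_{j\in S}f_j(C)\le\min_{C}\max_{j\in S}f_j(C)$ with the minimum over all $C$; since $\max_{j\in S}f_j(C)\le\max_{j\in[m]}f_j(C)$ pointwise (as $S\subseteq[m]$) and $Z_{[m]}$ is again a subset of the whole space, $\min_{C}\max_{j\in S}f_j(C)\le\min_{C}\max_{j\in[m]}f_j(C)\le\min_{C\in Z_{[m]}}\max_{j\in[m]}f_j(C)$, which completes it.

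The only step that is not bookkeeping is observation (b) --- identifying each $C(\gamma)$ as the exact (unconstrained) minimizer of the $\gamma$-weighted combination of the $f_j$'s --- and this is precisely where the defining formula for $Z_S$ is used; it is essentially a restatement of Theorem~\ref{thm:mgroupsZ}. The only point I expect to need an explicit remark is the degenerate case $\sum_j\gamma_j\alpha_i^j=0$, i.e.\ cluster $i$ empty for every group in the support of $\gamma$; this is vacuous under the standing assumption that every $U_i\cap A_j$ is nonempty (already needed for $\mu_i^j$, hence for $Z_S$, to be defined). Notably, no minimax theorem (Sion or von Neumann) is required, since we only sandwich and never invoke the nontrivial direction of strong duality.
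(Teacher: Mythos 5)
Your proposal is correct. It takes a somewhat different route from the paper's, though both ultimately rest on the same key fact: that a point of $Z_S$ is precisely a zero of the $\gamma$-weighted gradient $\sum_{j}\gamma_j\nabla f_j$. The paper argues pairwise: for $C,C'\in Z_S$ it uses $\bigl(\sum_{j\in S}\gamma_j\nabla f_j(C)\bigr)\cdot(C'-C)=0$ to extract a single index $j^*$ with $\nabla f_{j^*}(C)\cdot(C'-C)\ge 0$, and convexity then gives $\max_{j\in S}f_j(C')\ge f_{j^*}(C')\ge f_{j^*}(C)\ge\min_{j\in S}f_j(C)$. You instead package the same zero-gradient fact as your observation (b) --- $C(\gamma)$ globally minimizes the Lagrangian $\sum_j\gamma_jf_j$ --- and run the standard weak-duality chain $\sup_\gamma\inf_C\le\inf_C\sup_\gamma$. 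The payoff of your formulation is real: your chain bounds $\max_{C\in Z_S}\min_{j\in S}f_j(C)$ by the \emph{unconstrained} $\min_{C}\max_{j\in S}f_j(C)$, so the ``Moreover'' inequality follows by two trivial restrictions ($S\subseteq[m]$ pointwise and $Z_{[m]}$ a subset of all $C$). The paper, having only compared points within $Z_S$, needs an extra geometric step for the second inequality --- that the $Z_S$-optimum is also optimal against centers outside the convex hull of $\{\mu_i^j:j\in S\}$, justified only by ``similar to Lemma~\ref{lem:lineCenters}'' --- which your argument bypasses entirely. Your handling of the degenerate denominator $\sum_j\gamma_j\alpha_i^j=0$ is also more careful than the paper, which silently ignores it.
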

\begin{proof}
Let $C,C'\in Z_S$ and let $\gamma,\gamma'$ be the corresponding parameters for $C,C'$, respectively. Note that $Z_S\subseteq Z$. Therefore $\sum_{j\in[m]} \gamma_j \nabla f_j(C)=\vec{0}$. Hence because $\gamma_j=0$ for any $j\notin S$, we have
$\sum_{j\in S} \gamma_j \nabla f_j(C)=\vec{0}$. Hence $(\sum_{j\in S} \gamma_j \nabla f_j(C)) \cdot (C'-C)=0$. Therefore there exists a $j^*\in S$ such that $\nabla f_{j^*}(C) \cdot (C'-C) \geq 0$. Thus because $f_{j^*}$ is convex, we have
\[
f_{j^*}(C') \geq f_{j^*}(C) + \nabla f_{j^*}(C) \cdot (C'-C) \geq f_{j^*}(C)
\]
Therefore $\max_{j\in S} f_j(C') \geq \min_{j\in S} f_j(C)$. Note that this holds for any $C,C'\in Z$ and this implies the first part of the theorem.

Let $C'$ be the optimum solution to $\min_{C\in Z_S} \max_{j\in S} f_j(C)$. Note that by Theorem~\ref{thm:mgroupsZ}, this is an optimum solution to the problem of finding a fair set of centers for the groups in $S$. Moreover for any set of centers $C''$ outside the convex hull of the centers of groups in $S$, we have $\max_{j\in S} f_j(C')\leq \max_{j\in S} f_j(C'')$ --- proof of this is similar to Lemma~\ref{lem:lineCenters}. Hence $\max_{j\in S} f_j(C')\leq \min_{C\in Z_{[m]}} \max_{j\in S} f_j(C)$. Thus we have
\begin{align*}
\max_{C\in Z_S} \min_{j\in S} f_j(C) & \leq \min_{C\in Z_S} \max_{j\in S} f_j(C)=\max_{j\in S} f_j(C') \\ & \leq \min_{C\in Z_{[m]}} \max_{j\in S} f_j(C) \leq \min_{C\in Z_{[m]}} \max_{j\in [m]} f_j(C)
\end{align*}
\end{proof}
\vspace{-2mm}
Note that we can use Theorem~\ref{thm:minmax}, to get a lower bound on the optimum solution of the convex program in (\ref{eq:mgroupsprogram}). For example, suppose $C'$ is a solution returned by a heuristic. Then 
\[
\min_{j\in [m]} f_j(C') \leq \max_{C\in Z_{[m]}} \min_{j\in [m]} f_j(C) \leq \min_{C\in Z_{[m]}} \max_{j\in [m]} f_j(C).
\]
Therefore $\min_{j\in [m]} f_j(C')$ is a lower bound for the optimum solution. Hence the difference of the solution returned by the heuristic with the optimum solution is at most \[(\max_{j\in [m]} f_j(C'))-(\min_{j\in [m]} f_j(C')).\] This will be very useful for the case where $f_j(C^*)=F(C^*)$ for all $j\in [m]$, where $C^*$ is the optimum solution. The reason is that in this case, Theorem~\ref{thm:minmax} implies $\max_{C\in Z_{[m]}} \min_{j\in [m]} f_j(C) = \min_{C\in Z_[m]} \max_{j\in [m]} f_j(C)$. However this might not be the case and we might have $f_j(C^*)<F(C^*)$ for some $j$. In this case we can use $S\subset [m]$. For example an $S$ that gives a larger lower bound and for which $\max_{C\in Z_S} \min_{j\in S} f_j(C) = \min_{C\in Z_S} \max_{j\in S} f_j(C)$.

\subsection{Stability and Approximability}
We conclude this section by a discussion on the stability and the approximability of fair $k$-means for $m$ groups.

Our stability results generalizes to $m$ demographic groups. 
Let $C^*=\{c_1^*,\ldots,c_k^*\}$ be an optimal solution, and $S\subseteq [m]$. Also let $f_j(C^*) = \max_{i\in [m]} f_i(C^*)$ for $j\in S$, and $f_j(C^*) < \max_{i\in [m]} f_i(C^*)$ for $j\notin S$. Then one can see that, for all $i\in [k]$, $c_i^*\in Conv(\{\mu_i^j:j\in S\})$. This uniquely determines the location of the optimal solution, and thus we cannot improve the value of functions $f_j$ where $j\notin S$. Moreover, with an argument similar to Proposition~\ref{prop:stability}, one can deduce that we cannot improve the value of functions $f_j$ where $j\in S$.

Moreover, the Fair-Lloyd algorithm for $m$ demographic groups converges to a solution in finite time, essentially because the number of possible partitions of points is finite. Finally, if the $k$-means problem admits a $c$-approximation then the fair $k$-means problem for $m$ demographic groups admits an $mc$-approximation --- the proof is similar to the proof of Theorem~\ref{thm:approx}. 

\begin{figure*}[t]
\rotatebox[origin=c]{90}{w/o PCA}
\centering
\begin{subfigure}{.31\textwidth}
  \centering
  \caption*{LFW dataset}
  \includegraphics[width=\linewidth]{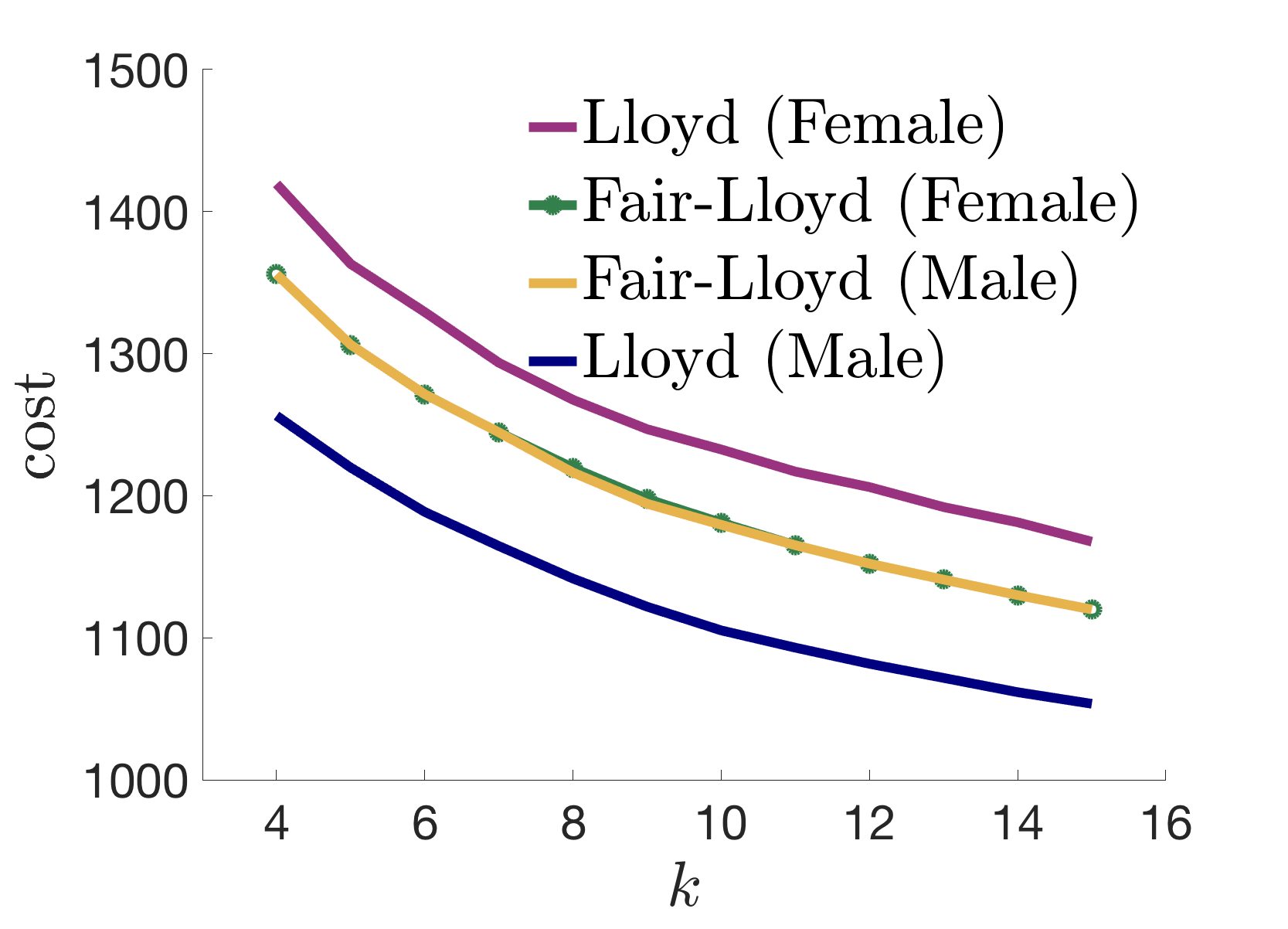}
\end{subfigure}
\begin{subfigure}{.31\textwidth}
  \centering
  \caption*{Adult dataset}
  \includegraphics[width=\linewidth]{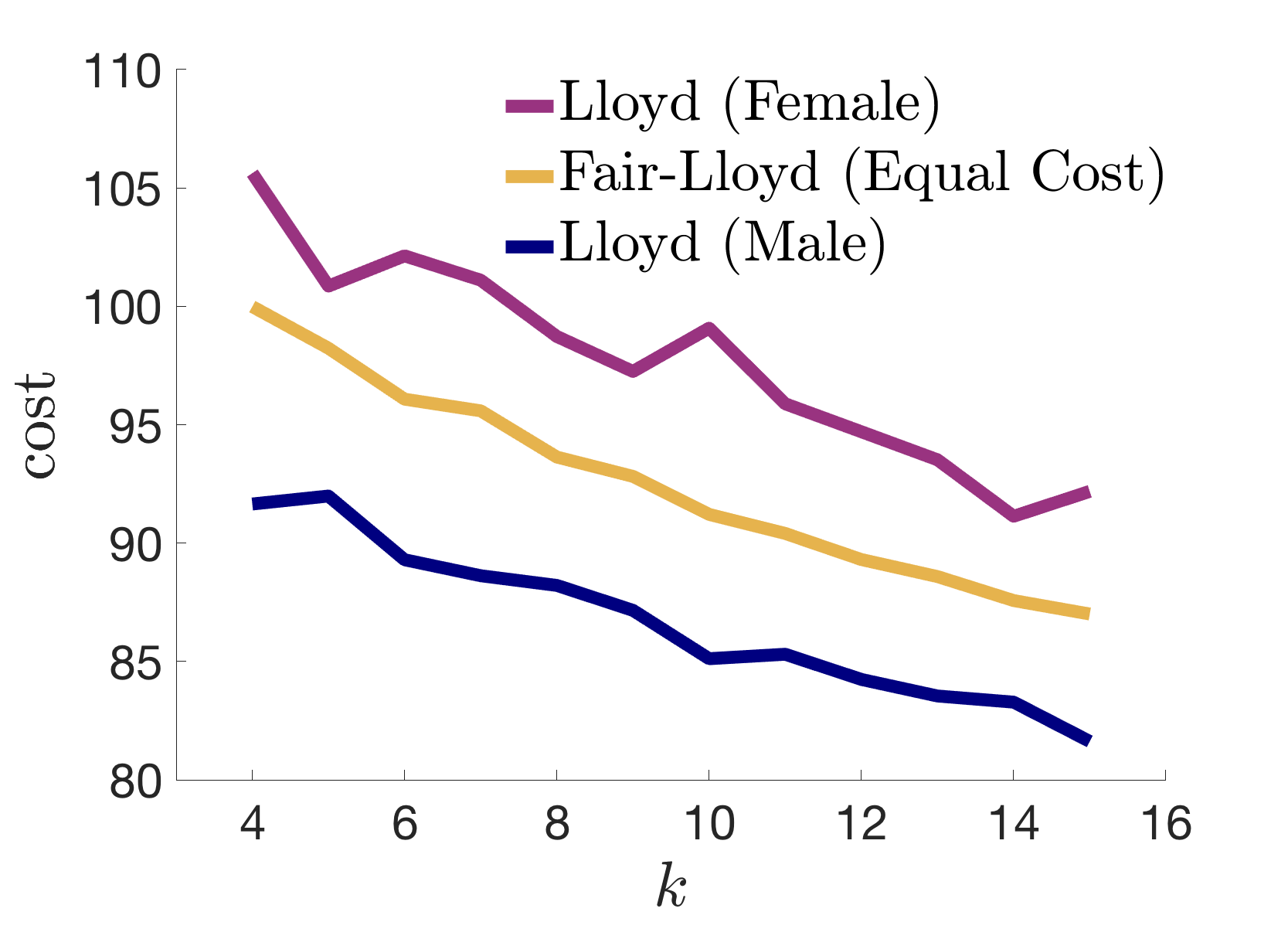}  
\end{subfigure}
\begin{subfigure}{.31\textwidth}
  \centering
  \caption*{Credit dataset}
  \includegraphics[width=\linewidth]{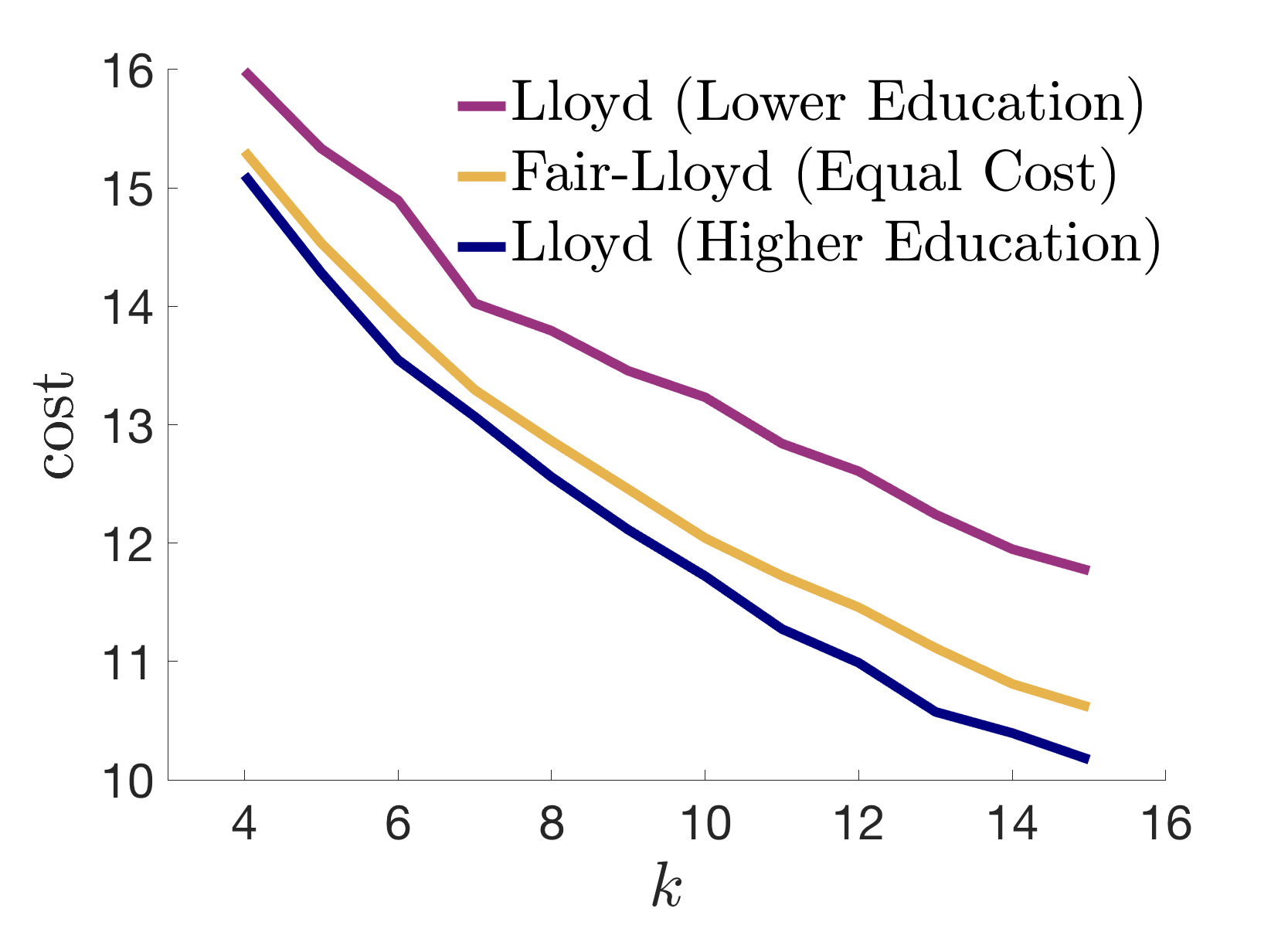}  
\end{subfigure}
\hfill

\rotatebox[origin=c]{90}{w/ PCA}
\centering
\begin{subfigure}{.31\textwidth}
  \centering
  \includegraphics[width=\linewidth]{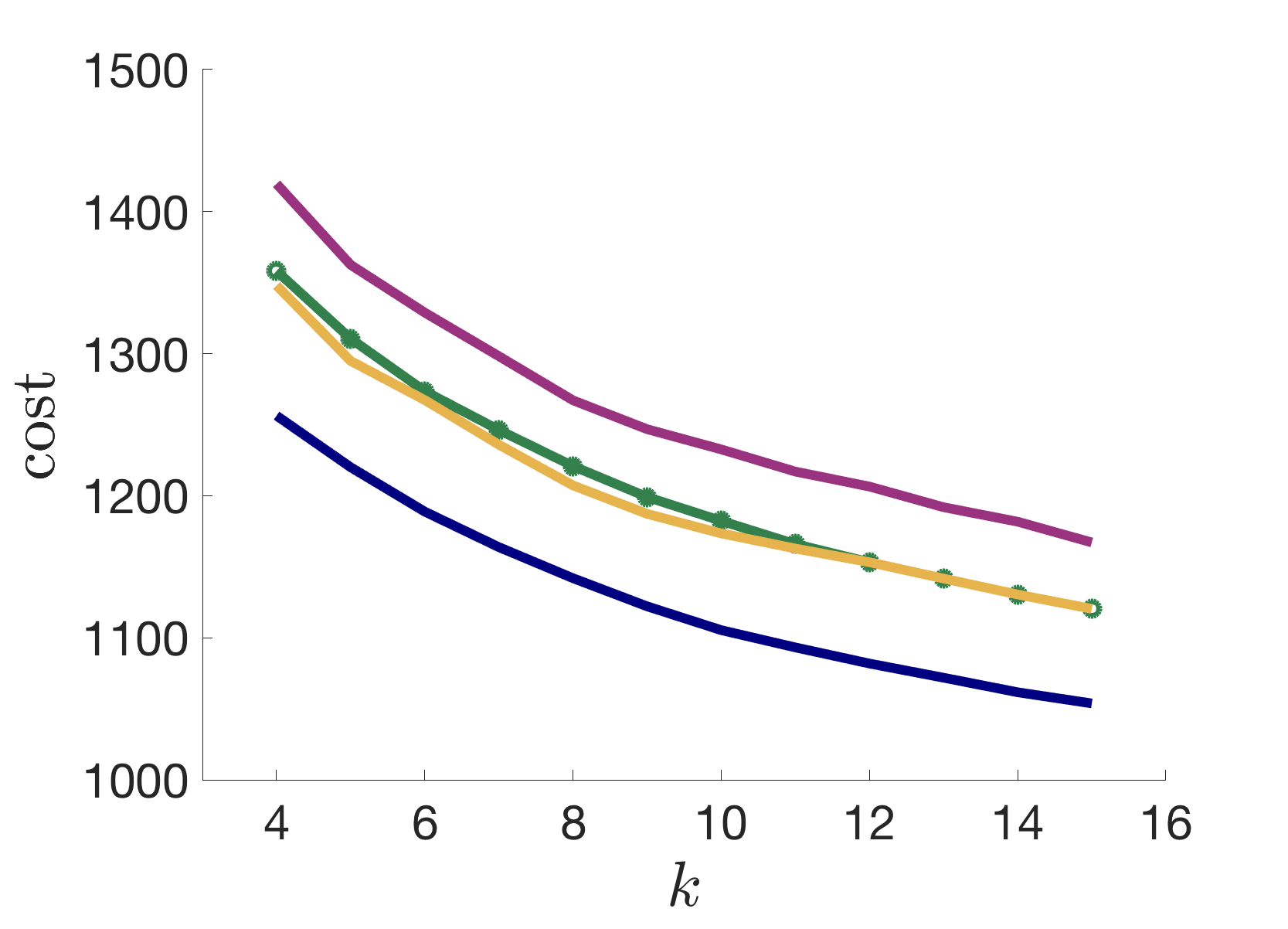}
\end{subfigure}
\begin{subfigure}{.31\textwidth}
  \centering
  \includegraphics[width=\linewidth]{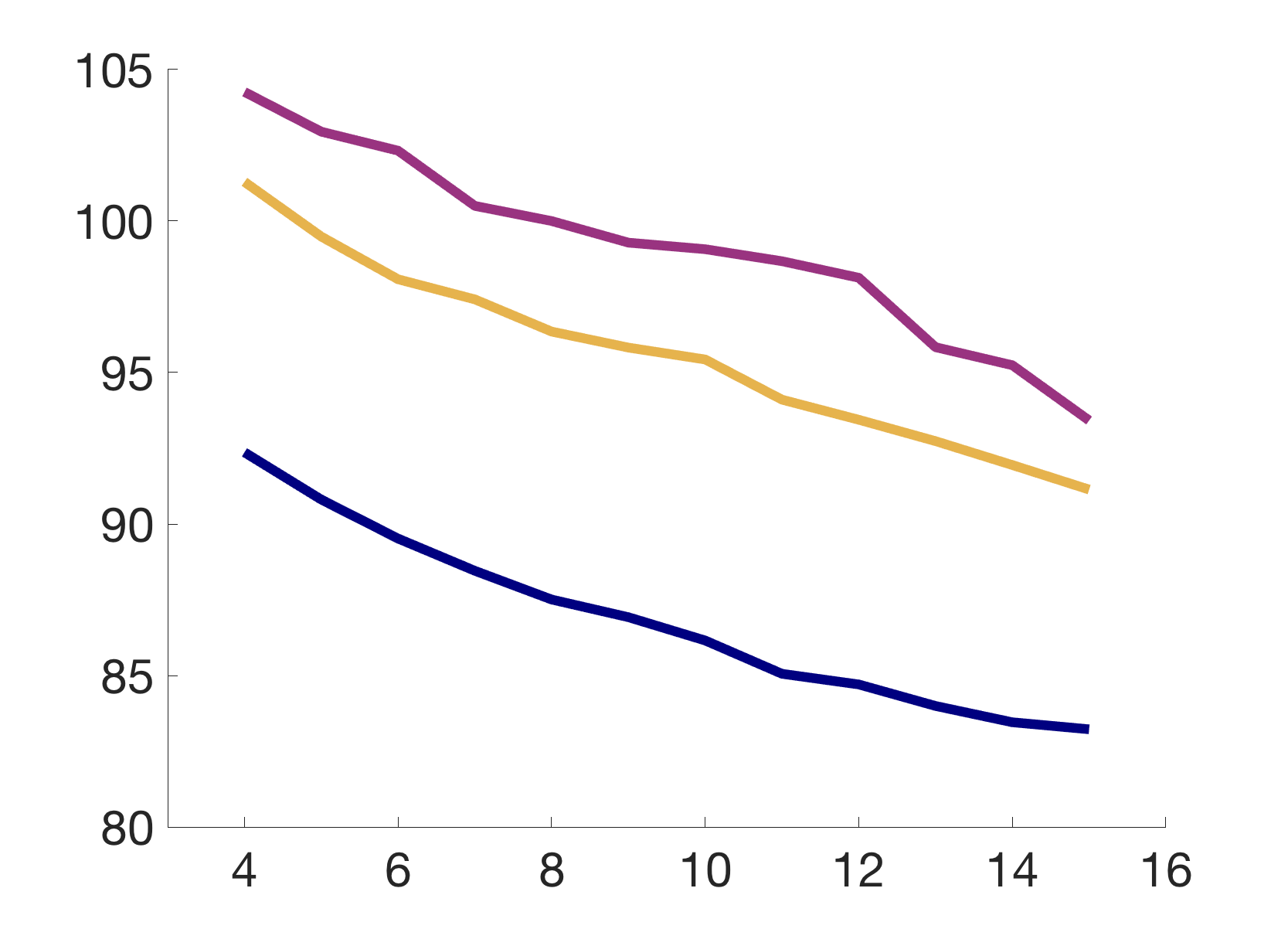}  
\end{subfigure}
\begin{subfigure}{.31\textwidth}
  \centering
  \includegraphics[width=\linewidth]{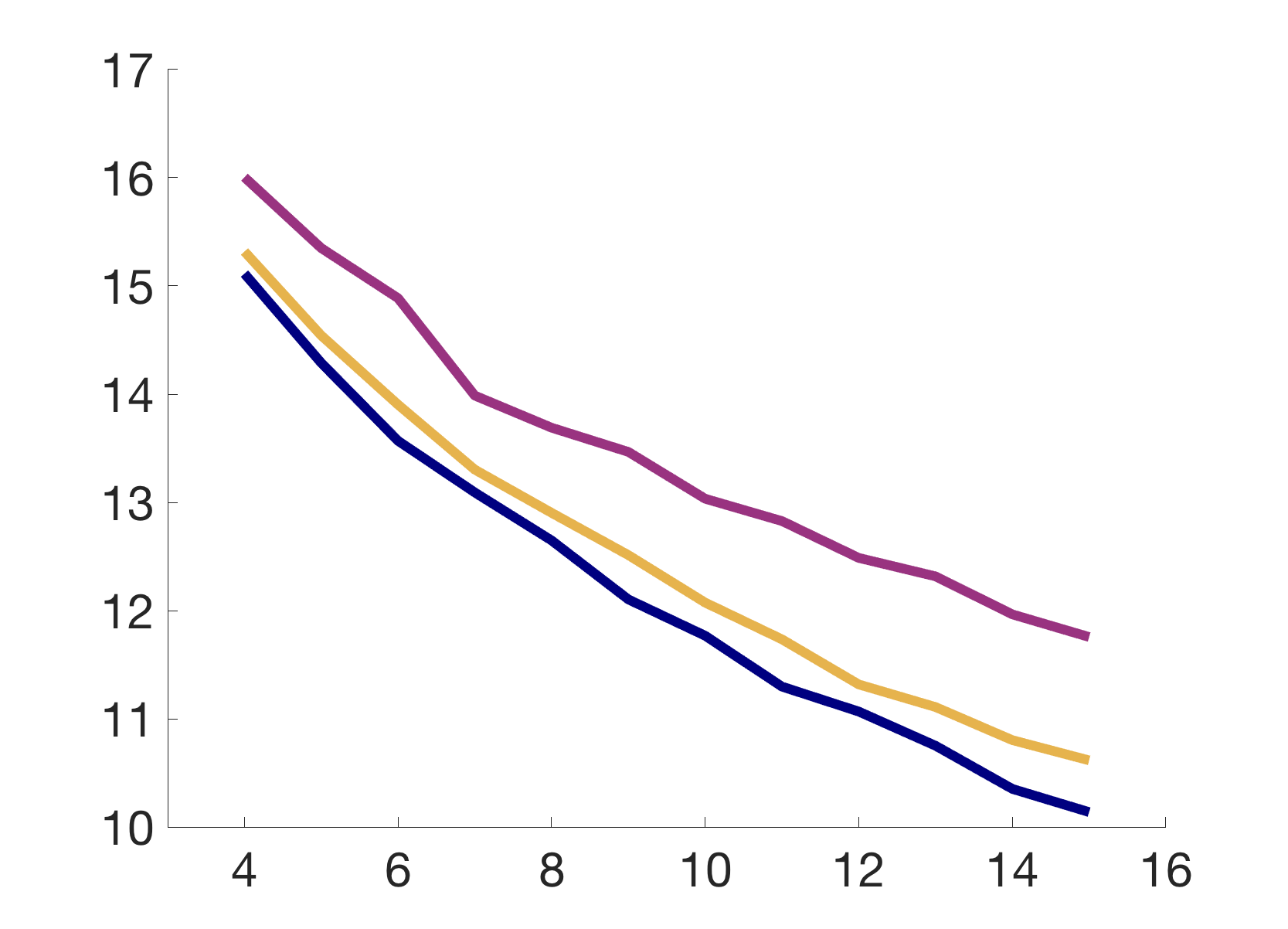}  
\end{subfigure}
\hfill

\rotatebox[origin=c]{90}{w/ Fair-PCA}
\centering
\begin{subfigure}{.31\textwidth}
  \centering
  \includegraphics[width=\linewidth]{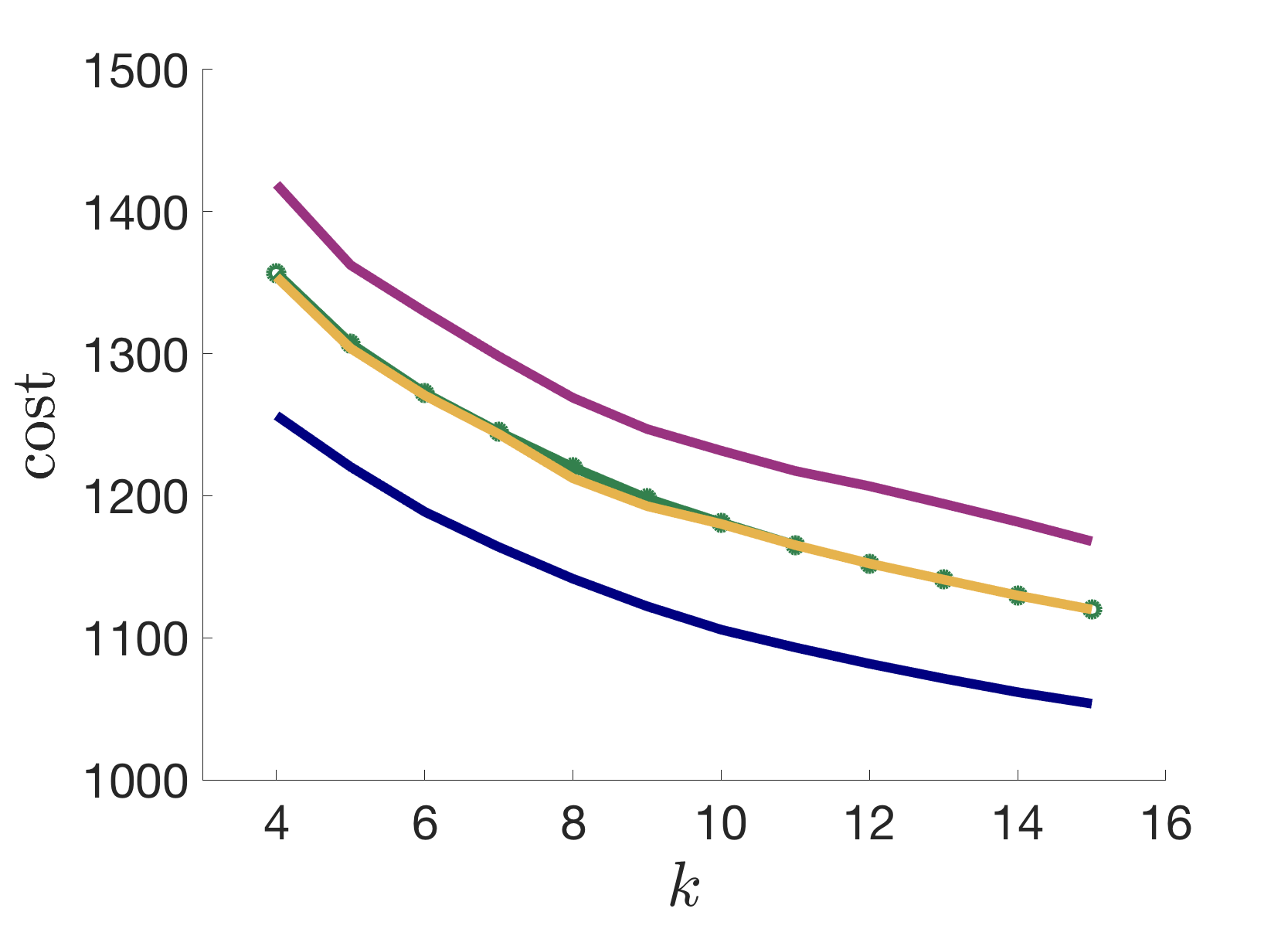}
\end{subfigure}
\begin{subfigure}{.31\textwidth}
  \centering
  \includegraphics[width=\linewidth]{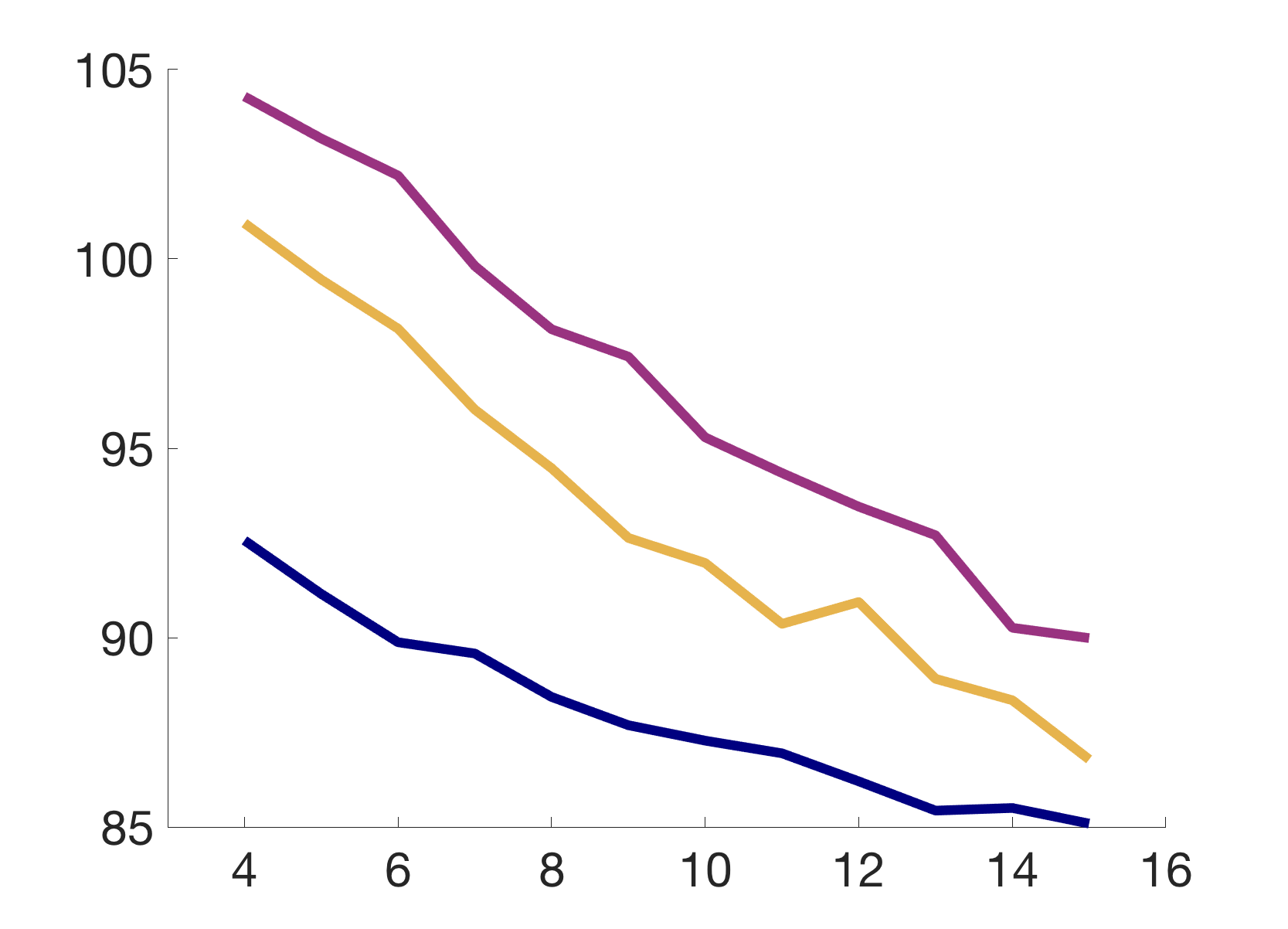}  
\end{subfigure}
\begin{subfigure}{.31\textwidth}
  \centering
  \includegraphics[width=\linewidth]{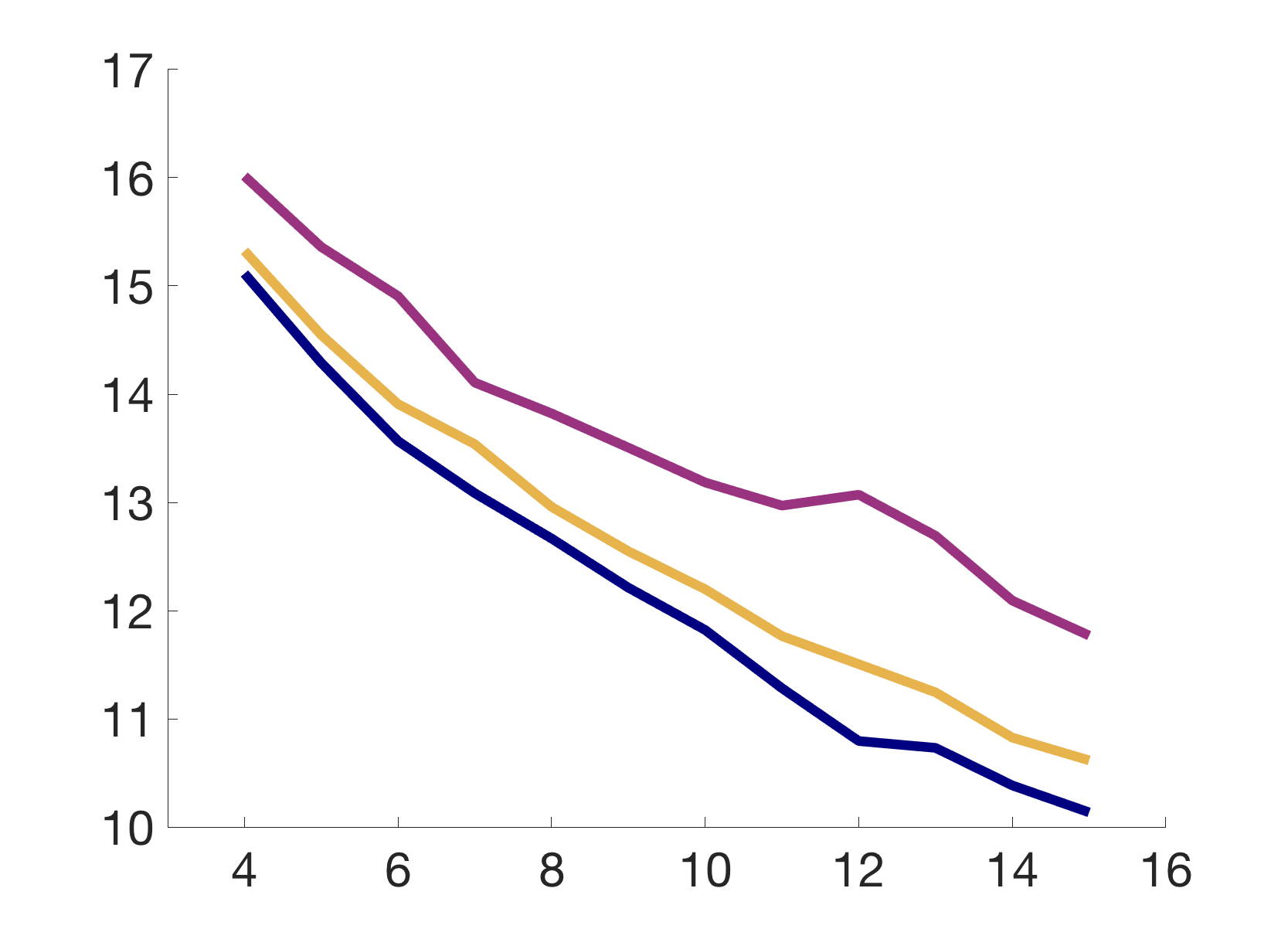}  
\end{subfigure}
\caption{Average clustering cost of different groups when using Fair-Lloyd algorithm versus the standard Lloyd's. Rows correspond to different 
pre-processing
methods and columns to the datasets. Note that the fair clustering costs for the two groups are identical or nearly identical in all datasets.}
\label{fig:cost}
\end{figure*}

\section{Experimental Evaluation}
\label{sec:eval}
We consider a clustering to be fair if it has equal clustering costs across different groups. We compare the average clustering cost for different demographic groups on multiple benchmark datasets, using Lloyd's algorithm and Fair-Lloyd algorithm. The code of our experiments is publicly available at \url{https://github.com/fairkmeans/Fair-K-Means-Clustering}.

\begin{figure}[b]
\centering
    \includegraphics[width=.57\linewidth]{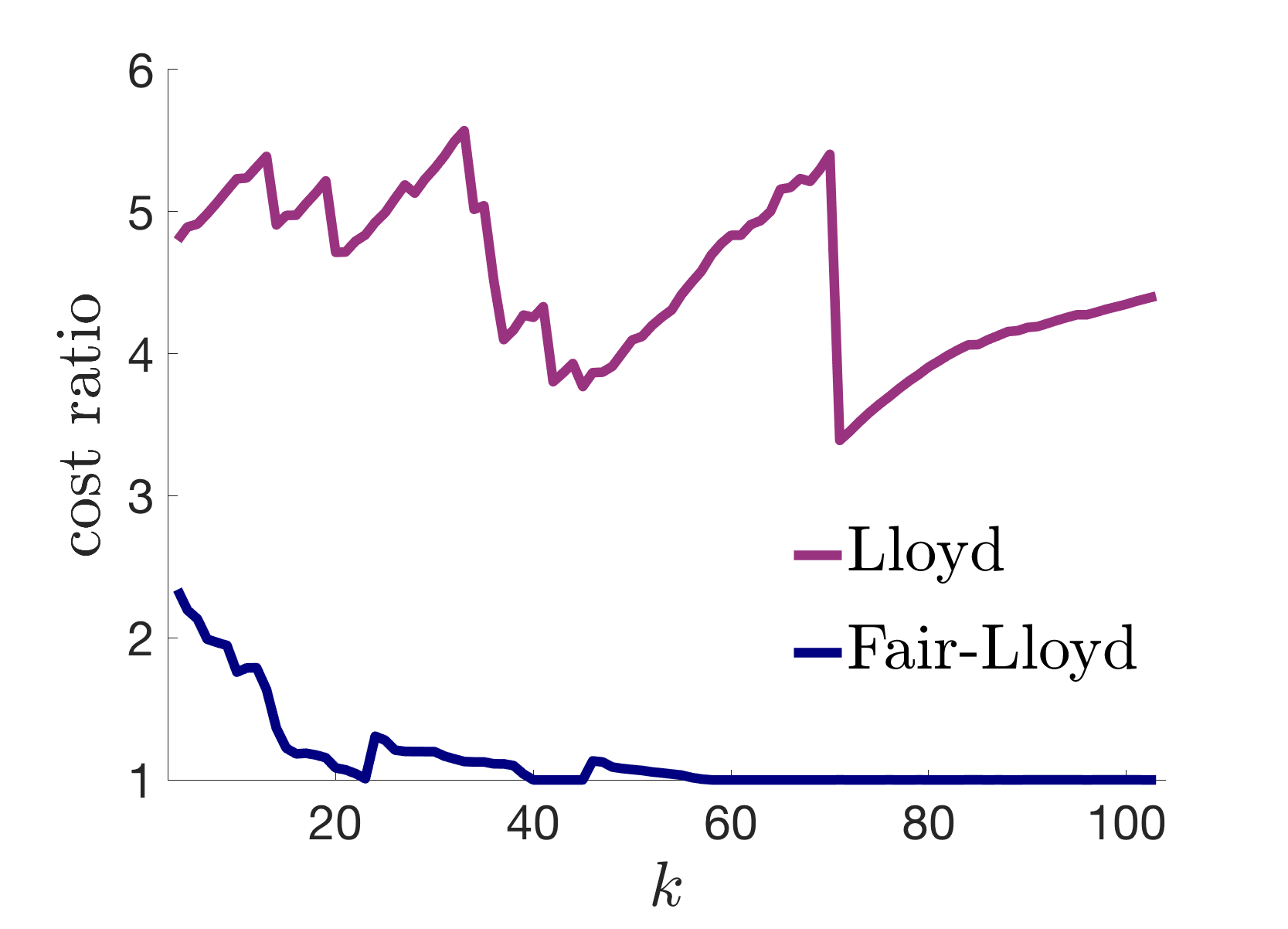}
    \caption{
      Adult dataset: The maximum ratio of average clustering cost between any two racial groups: ``Amer-Indian-Eskim'', ``Asian-Pac-Islander'', ``Black'', ``White'', and ``Other''.
    } \label{fig:adult5RacesRatio}
\end{figure}

\begin{figure}[b]
\begin{subfigure}{.58\columnwidth}
    \centering
    \includegraphics[width=1.1\linewidth]{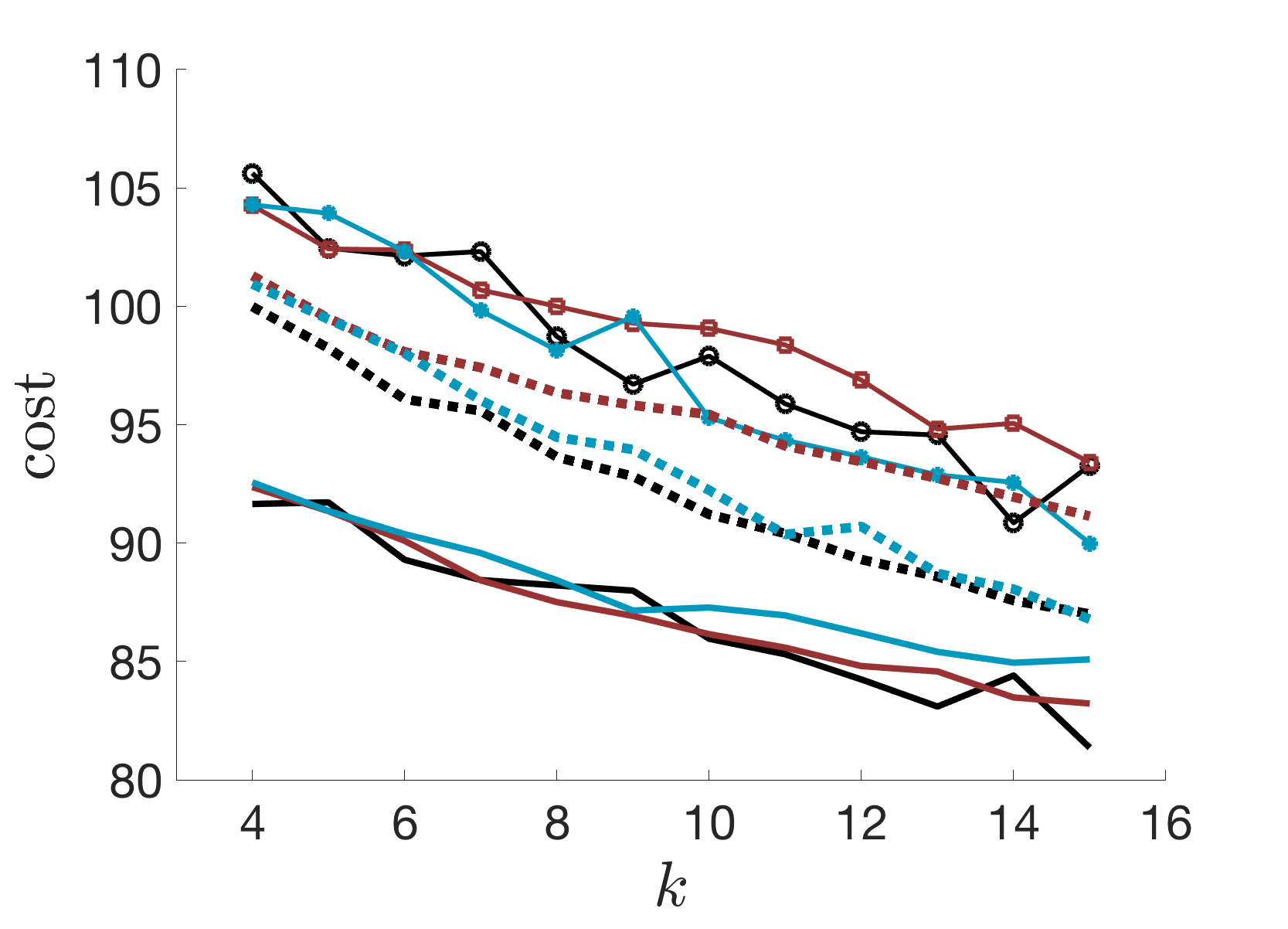}  
\end{subfigure}
\begin{subfigure}{.4\columnwidth}
    \centering
    \includegraphics[width=1\linewidth]{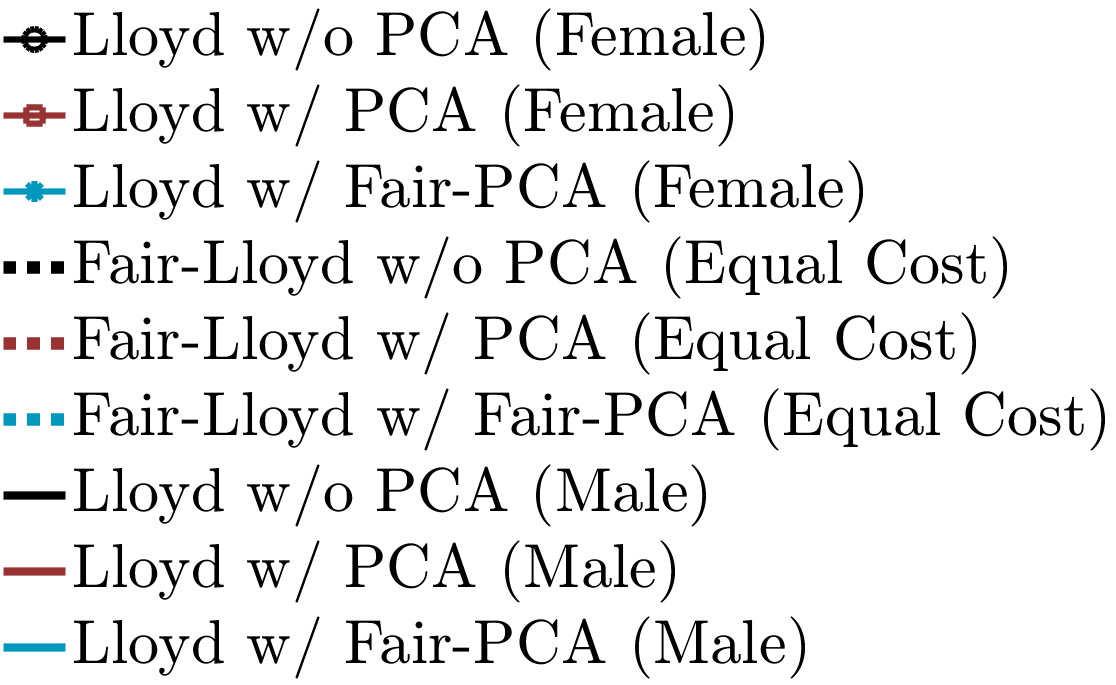}  
\end{subfigure}
\caption{Adult dataset: comparison of the standard Lloyd's and Fair-Lloyd algorithm for the three different 
pre-processing
choices of w/o PCA, w/ PCA, and w/ Fair-PCA. }    
\label{fig:adultAll}
\end{figure}

\begin{figure*}[!t]
\begin{subfigure}{.33\textwidth}
  \centering
  \caption*{LFW dataset}
  \includegraphics[width=\linewidth]{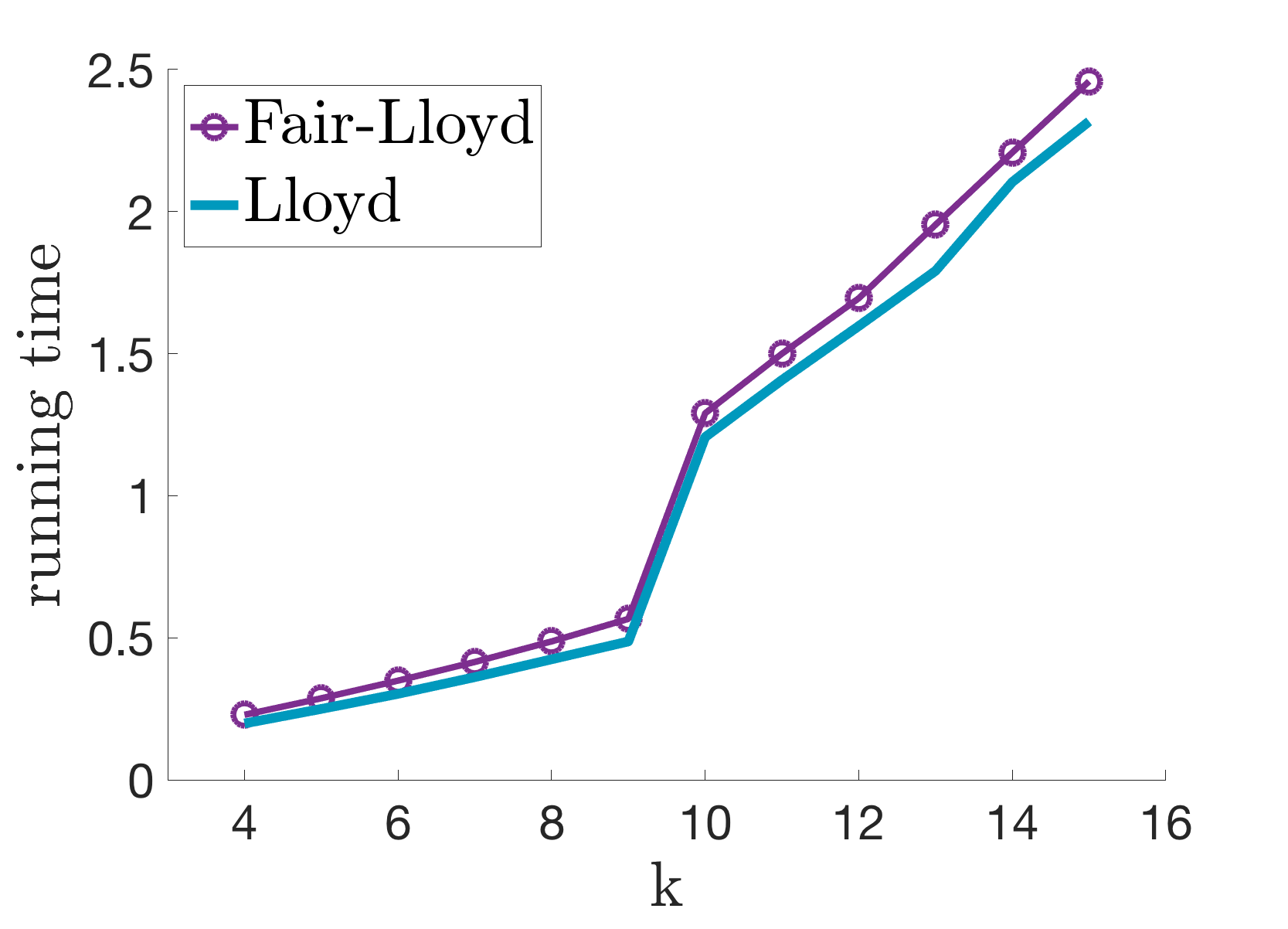}
\end{subfigure}
\begin{subfigure}{.33\textwidth}
  \centering
  \caption*{Adult dataset}
  \includegraphics[width=\linewidth]{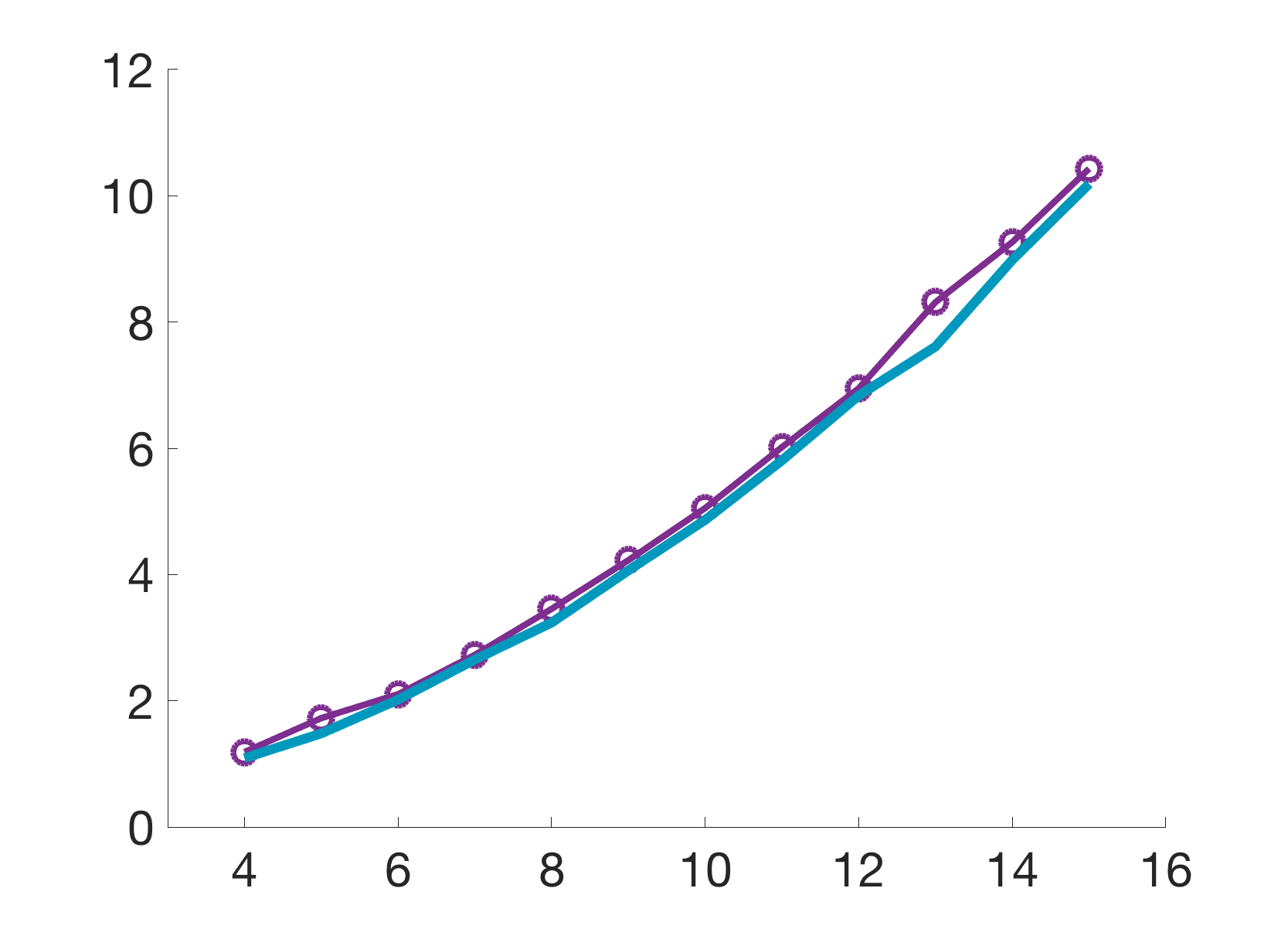} 
\end{subfigure}
\begin{subfigure}{.33\textwidth}
  \centering
  \caption*{Credit dataset}
  \includegraphics[width=\linewidth]{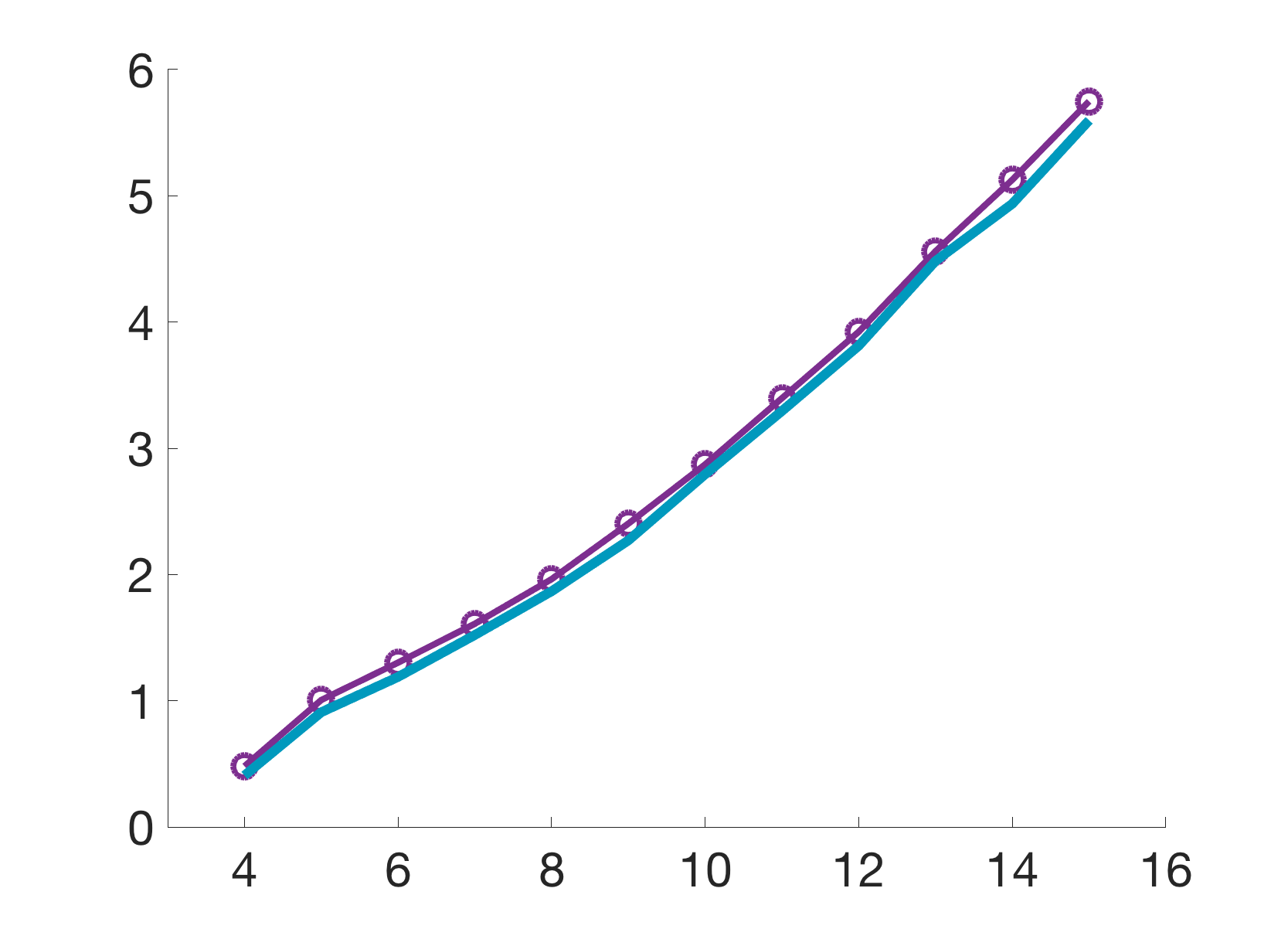}  
\end{subfigure}
\caption{Running time (seconds) of Fair-Lloyd algorithm versus the standard Lloyd's algorithm on the $k$-dimensional PCA space for $200$ iterations.}
\label{fig:runtime}
\end{figure*}

\begin{figure*}[t]
\begin{subfigure}{.33\textwidth}
  \centering
  \caption*{LFW dataset}
  \includegraphics[width=\linewidth]{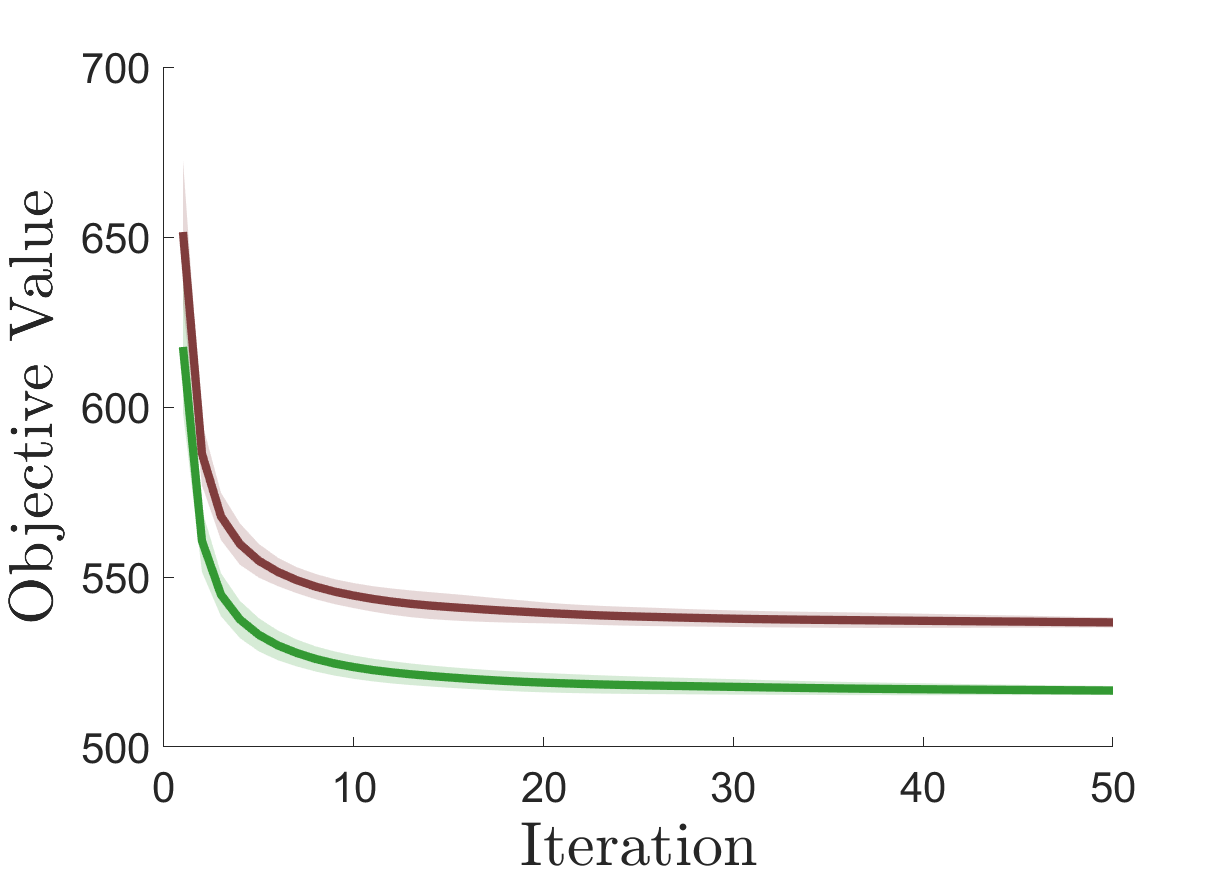}
\end{subfigure}
\begin{subfigure}{.33\textwidth}
  \centering
  \caption*{Adult dataset}
  \includegraphics[width=\linewidth]{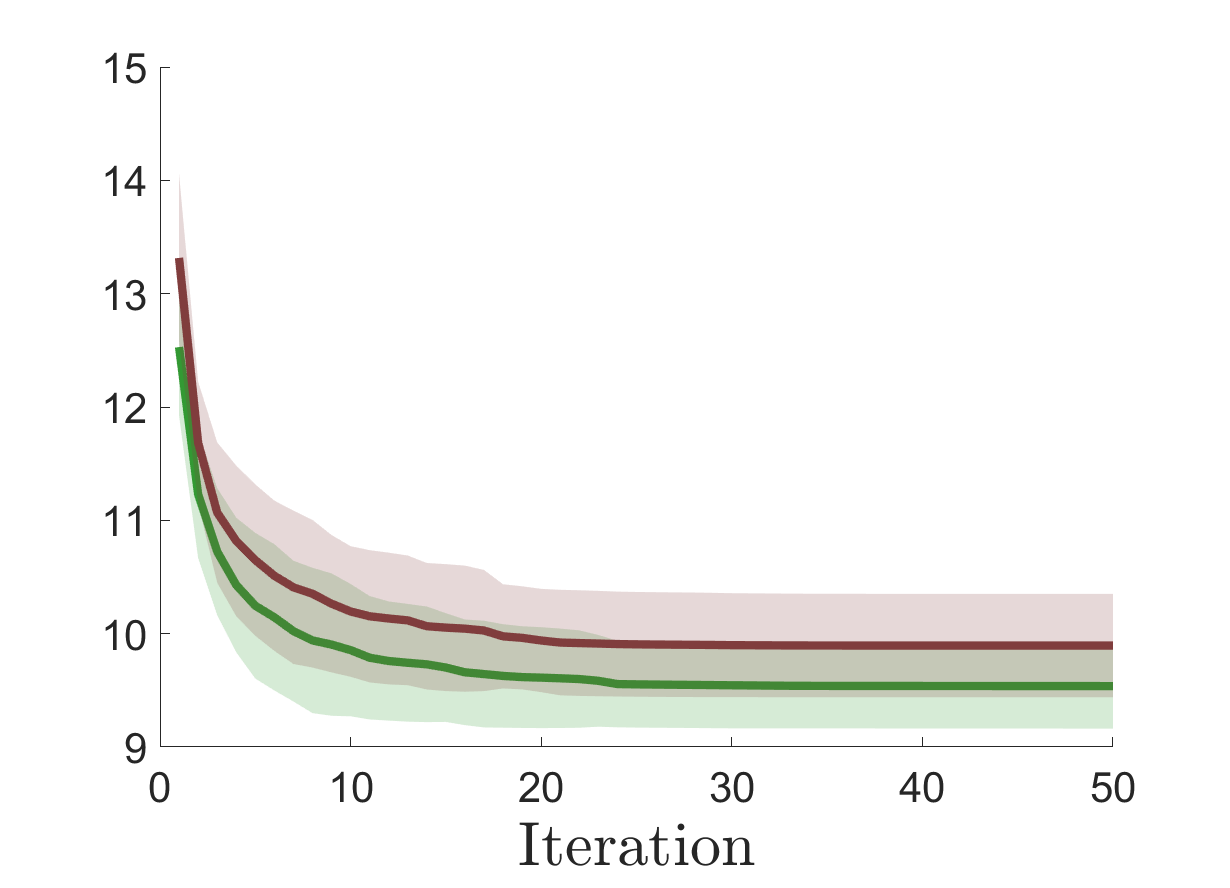} 
\end{subfigure}
\begin{subfigure}{.33\textwidth}
  \centering
  \caption*{Credit dataset}
  \includegraphics[width=\linewidth]{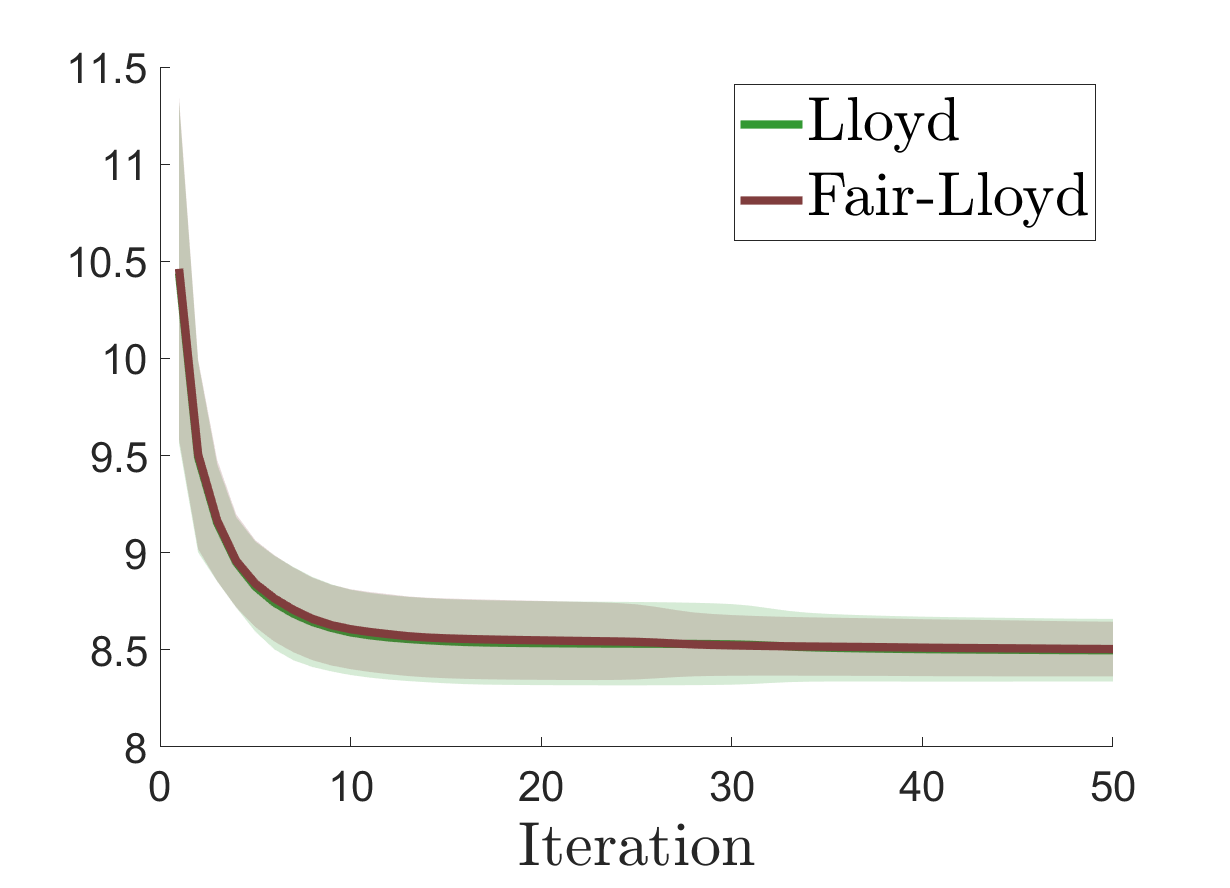}  
\end{subfigure}
\caption{Convergence rate of Fair-Lloyd algorithm versus the standard Lloyd's algorithm for $k=10$. The plotted objective value for the standard Lloyd is the average cost of clustering over the whole population, and the objective value for Fair-Lloyd is the maximum average cost of the demographic groups. The reported objective values are averaged over $20$ runs and the shaded areas are the standard deviation of these runs.}
\label{fig:convergencerate}
\end{figure*}

We used three datasets: 1) Adult dataset 
\citep{ucirepo}, consists of records of 48842 individuals collected from census data,  
with 103 features. 
The demographic groups considered are female/male for the 2-group setting and five racial groups of ``Amer-Indian-Eskim'', ``Asian-Pac-Islander'', ``Black'', ``White'', and ``Other'' for the multiple-groups setting; 2) Labeled faces in the wild (LFW) dataset \citep{huang2008labeled}, consists of 13232 images of celebrities. The size of each image is $49 \times 36$ or a vector of dimension $1764$.
The demographic groups are female/male; and
3) Credit dataset \citep{yeh2009comparisons}, consists of records of 30000 individuals with 21 features. We divided the multi-categorical education attribute to ``higher educated'' and ``lower educated'', and used these as the demographic groups.

As different features in any dataset have different units of measurements (e.g., age versus income), it is standard practice to normalize each attribute to have mean $0$ and variance $1$. 
We also converted any categorical attribute to numerical ones. For both Lloyd's and Fair-Lloyd we tried 200 different center initialization, each with 200 iterations. We used random initial centers (starting both algorithms with the same centers in each run). 

For clustering high-dimensional datasets with $k$-means, Principal Component Analysis (PCA) is often used as a pre-processing step~\citep{ding2004k,kumar2010clustering}, reducing the dimension to $k$. We evaluate Fair-Lloyd both with and without PCA. Since PCA itself could induce representational bias towards one of the (demographic) groups, 
Fair-PCA \citep{samadi2018price} has been shown to be an unbiased alternative, and we use it as a third pre-processing option. 
We refer to these three pre-processing choices as \textbf{w/o PCA}, \textbf{w/ PCA}, and \textbf{w/ Fair-PCA} respectively.

\begin{figure*}[t]
\begin{subfigure}{.335\textwidth}
  \centering
  \caption*{Credit}
  \includegraphics[width=\linewidth]{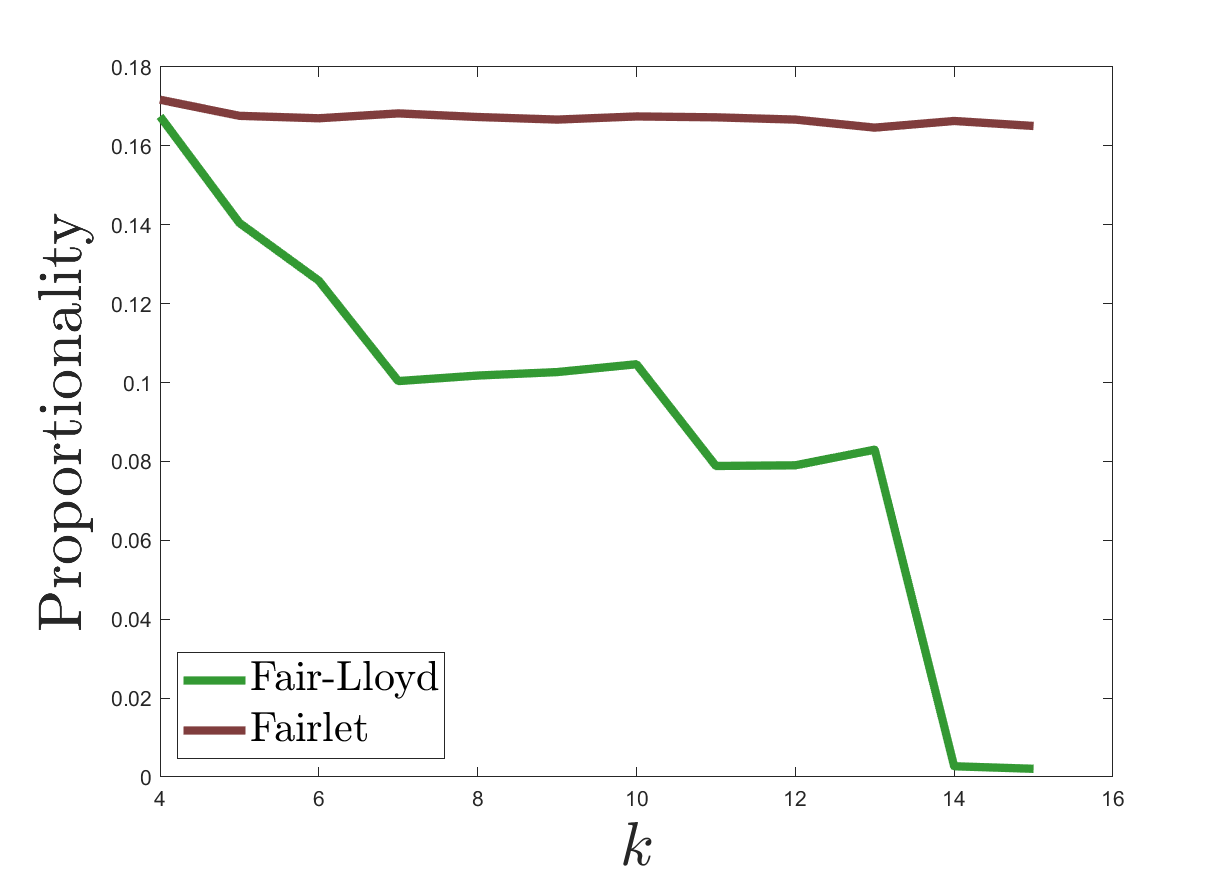}
\end{subfigure}
\hspace{0.7cm}
\begin{subfigure}{.29\textwidth}
  \centering
  \caption*{Adult}
  \includegraphics[width=\linewidth]{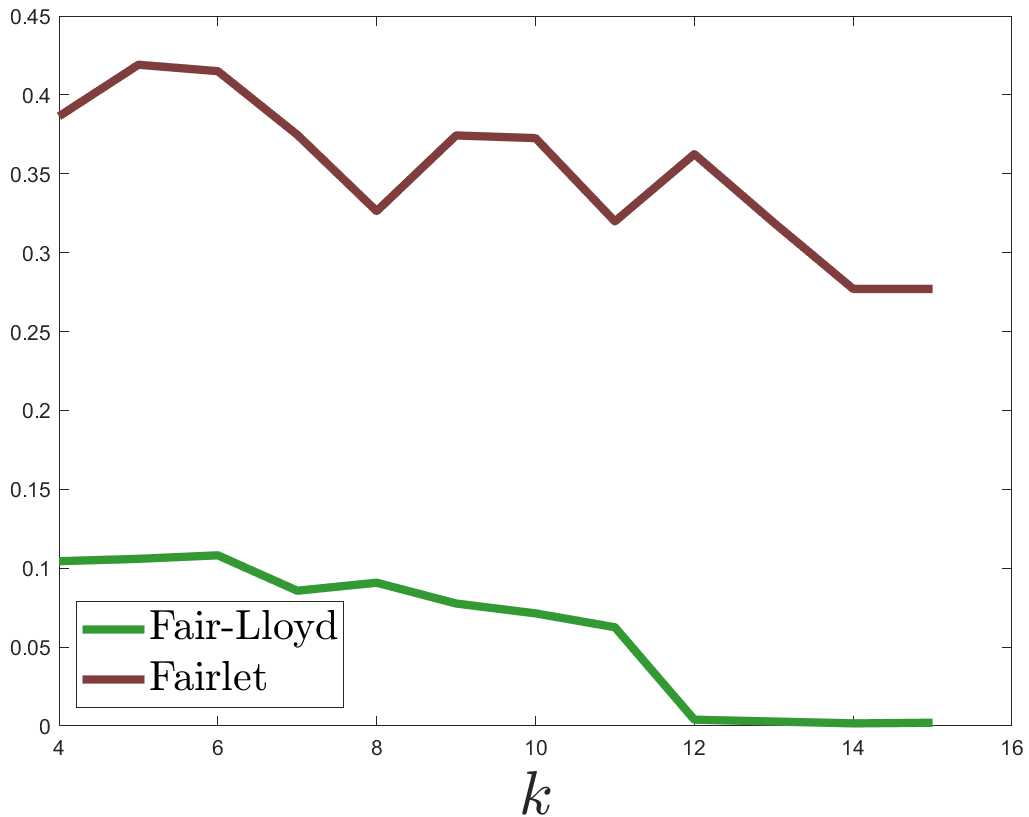}
\end{subfigure}
\begin{subfigure}{.335\textwidth}
  \centering
  \includegraphics[width=\linewidth]{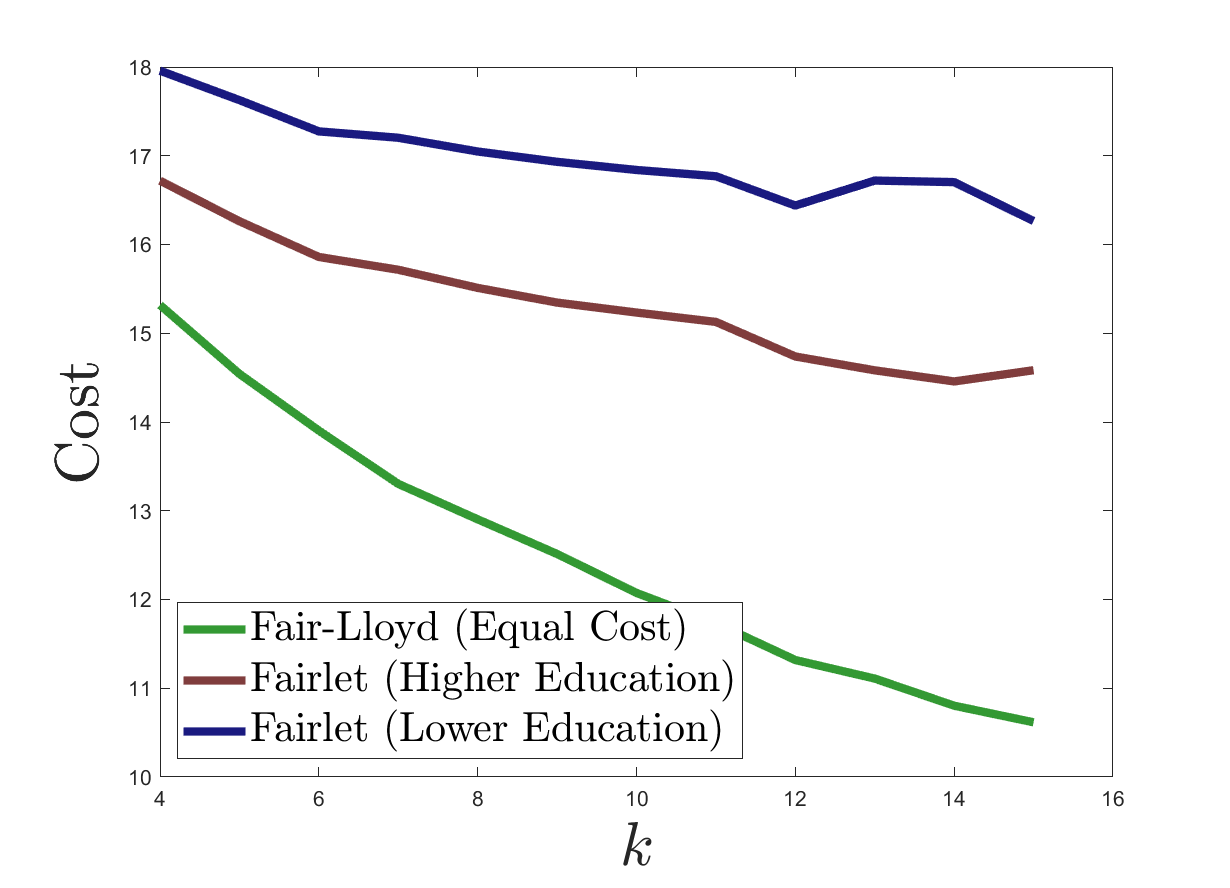} 
\end{subfigure}
\hspace{.7cm}
\begin{subfigure}{.29\textwidth}
  \centering
  \includegraphics[width=\linewidth]{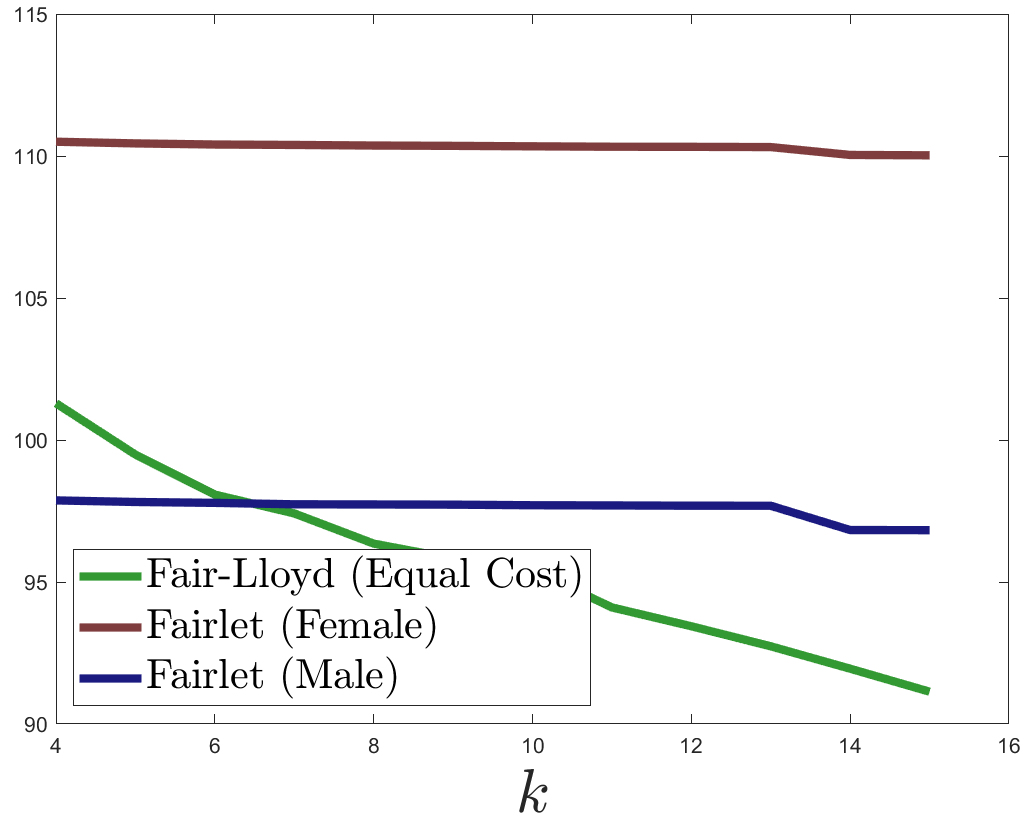} 
\end{subfigure}
\caption{Comparison of socially fair $k$-means (Fair-Lloyd) to proportionally fair $k$-means (Fairlet) on the Credit and Adult dataset in terms of proportionality and clustering cost.}
\label{fig:creditFairletComparison}
\end{figure*}

\paragraph{Results.}
Figure~\ref{fig:cost} shows the average clustering cost for different demographic groups.
In the first row, all datasets are evaluated in their original dimension with no pre-processing applied (w/o PCA). In the second and third rows (w/ PCA and w/ Fair-PCA), the PCA/Fair-PCA dimension is equal to the target number of clusters~$k$. 

Our first observation is that the standard Lloyd's algorithm results in a significant gap between the clustering cost of individuals in different groups, with higher clustering cost for females in the Adult and LFW datasets, and for lower-educated individuals in the Credit dataset. The average clustering cost of a female is up to 15\% (11\%) higher than a male in the Adult (LFW) dataset when using standard Lloyd's. A similar bias is observed in the Credit dataset, where Lloyd's leads up to 12\% higher average cost for a lower-educated individual compared to a higher-educated individual.

Our second observation is that the Fair-Lloyd algorithm
effectively eliminates this bias by outputting a clustering with equal clustering costs for individuals in different demographic groups. More precisely, for the Credit and Adult datasets the average costs of two demographic groups are identical, represented by the yellow line in Figure~\ref{fig:cost}.
For the LFW dataset, 
we observe a very small difference in the average clustering cost over the two groups in the fair clustering ($0.4\%$, $1\%$ and $0.6\%$ difference for without PCA, with PCA, and with Fair-PCA respectively). Notably, Fair-Lloyd mitigates the bias of the output clustering independent of whether it is applied on the original data space, on the PCA space, or on the Fair-PCA space. In Figure~\ref{fig:adultAll}, we show a snapshot of performance of Fair-Lloyd versus Lloyd's on the Adult dataset for all three different 
pre-processing
choices.

Figure~\ref{fig:adult5RacesRatio} shows the maximum ratio of average cost between any two racial groups in the Adult dataset, which comprised of five racial groups ``Amer-Indian-Eskim'', ``Asian-Pac-Islander'', ``Black'', ``White'', and ``Other''.
Note that, the max cost ratio of one indicates that all groups have the same average cost in the output clustering.
As we observe, the standard Lloyd algorithm results in a significant gap between the cost of different groups
resulting in a high max cost ratio overall.
As for the Fair-Lloyd algorithm, as the number of clusters increases, it outputs a clustering of the data with same average cost for all the demographic groups.

\vspace{-2mm}
\paragraph{The price of fairness.} Does requiring fairness come at a price, in terms of either running time or overall $k$-means cost?
Figure~\ref{fig:runtime} shows the running time of Lloyd's versus Fair-Lloyd 
for $200$ iterations. 
Running time for all three datasets is measured in the $k$-dimensional PCA space, where $k$ is the number of clusters. 
As we observe, Fair-Lloyd incurs a very small overhead in the running time, with only 
4\%, 4\%, and 8\%
increase (on average over $k$) for the Adult, Credit, and LFW dataset respectively. Moreover, as illustrated in Figure~\ref{fig:convergencerate}, the convergence rate of Lloyd and Fair-lloyd are essentially the same in practice.
Finally, 
the increase in the standard $k$-means cost of Fair-Lloyd solutions (averaged over the entire population 
)
was at most 
4.1\%, 2.2\% and 0.3\%
for the LFW, Adult, and Credit datasets, respectively. Arguably, this is outweighed by the benefit of equal cost to the two groups.

\newcommand{\sizeA}{0.3}
\newcommand{\sizeB}{.45}

\paragraph{Socially fair versus proportionally fair.} The first introduced notion of fairness for $k$-means clustering considered the proportionality of the sensitive attributes in each cluster \cite{chierichetti2017fair}. For the case of two groups $A$ and $B$ (e.g., male and female), and a clustering of points $\mathcal{U} = \{U_1, \ldots, U_k\}$, the proportionality or balance of the clustering $\mathcal{U}$ is formally defined as
\[
\min_{1\leq i\leq k} \min \{ \frac{|A\cap U_i|}{|B\cap U_i|}, \frac{|B\cap U_i|}{|A\cap U_i|}\}
\]
We emphasize that improving the proportionality is at odds with improving the maximum average cost of the groups. This can be seen in Figure~\ref{fig:motivation2}. To illustrate this more, we compared our method to one of the proposed methods that guarantees the proportionality of the clusters on the credit and adult datasets. We used the code provided in~\cite{bera2019fair}. As illustrated in Figure~\ref{fig:creditFairletComparison}, the proportionally fair method fails to achieve an equal average cost for different populations and our methods do not achieve proportionally fair clusters.

\section{Discussion}
Fairness is an increasingly important consideration for Machine Learning, including classification and clustering. Our work shows that the most popular clustering
method, Lloyd's algorthm,
can be made fair, in terms of average cost to each subgroup, with minimal increase in the running time or the overall average $k$-means cost, while maintaining its simplicity, generality and stability. 
Previous work on fair clustering focused on proportional representation of sensitive attributes within clusters, while we optimize the maximum cost to subgroups. As Figure~\ref{fig:motivation2} suggests, and Figure~\ref{fig:creditFairletComparison}
shows on benchmark data sets, these criteria lead to different solutions. We believe that both perspectives are important, and the choice of which clustering to use will depend on the context and application, e.g., proportional representation might be paramount for partitioning electoral precincts, while minimizing cost for every subgroup is crucial for resource allocation.

\newpage
\bibliography{facct}
\bibliographystyle{acm}

\clearpage
\section{Appendix}
The functions $f_j$ and their maximum are not necessarily quasiconvex in terms of $\gamma_j$'s. Figure~\ref{fig:qcexample} shows an example --- it illustrates a level set of function $F$. In this example, we have two clusters and three groups. The parameters used for this example are listed in the corresponding table.

\begin{minipage}[c]{\columnwidth}
\begin{figure}[H]
\begin{minipage}[c]{\columnwidth}
    \includegraphics[width=0.9\columnwidth]{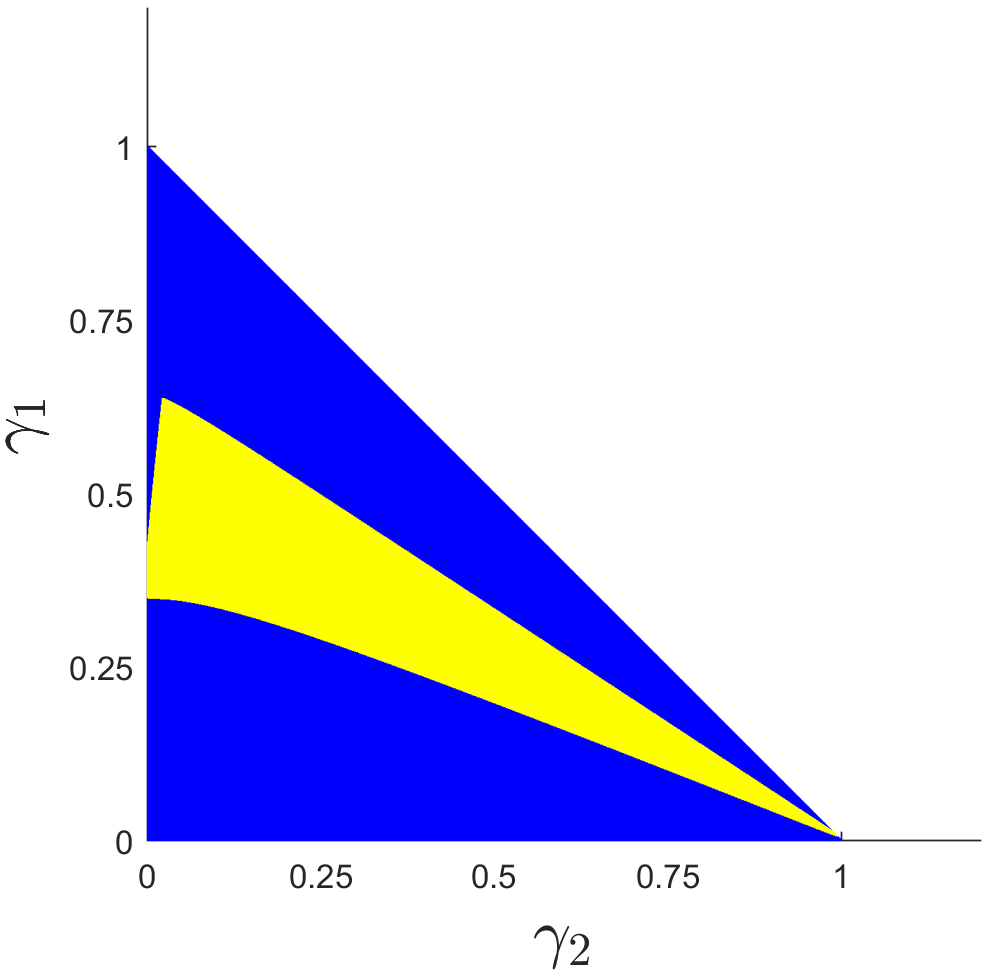}
  \end{minipage}
  \vspace{4mm}
  \begin{minipage}[c]{\columnwidth}
  \vspace{4mm}
  \centering
\begin{tabular}{|c|c|c|c|}
\hline & & & \\[-8pt]
$\alpha_i^j$                                    & $j=1$   & $j=2$   & $j=3$   \\[3pt] \hline & & & \\[-8pt]
$i=1$                                           & $0.9$   & $0.01$  & $0.95$  \\[3pt] \hline & & & \\[-8pt]
$i=2$                                           & $0.1$   & $0.99$  & $0.05$  \\[3pt] \hline \multicolumn{4}{c}{}\\[-5pt] \hline  & & & \\[-8pt]
      & $j=1$   & $j=2$   & $j=3$   \\[3pt] \hline  & & & \\[-8pt]
$\frac{\Delta(M^j,\mathcal{U}\cap A_j)}{|A_j|}$ & $0$     & $1$     & $0.1$   \\[6pt] \hline \multicolumn{4}{c}{}\\[-5pt] \hline  & & & \\[-8pt]
$\mu_i^j$ & $j=1$   & $j=2$   & $j=3$   \\[3pt] \hline  & & & \\[-8pt]
$i=1$                                           & $(0,0)$ & $(2,2)$ & $(3,1)$ \\[3pt] \hline  & & & \\[-8pt]
$i=2$                                           & $(0,0)$ & $(2,2)$ & $(3,1)$ \\[3pt] \hline
\end{tabular}
\end{minipage}
  \caption{An example that shows $F$ is not quasiconvex. The yellow area represents the points for which the value of $F$ is less than $4.2$. As one can see, the yellow area is not convex and therefore $F$ is not quasiconvex in terms of $\gamma_j$'s. The table shows the parameters that were used for this example.}
  \label{fig:qcexample}
\end{figure}
\end{minipage}
\end{document}